\documentclass{article}

\author{%
 \name{Lorenzo Croissant} \email{lorenzo.croissant@ensae.fr}\\
 \addr{CREST, ENSAE, CNRS, Palaiseau, France \\ INRIA FairPlay Team, INRIA, Palaiseau, France\\ }%
}

%%%%%%%%%%%%%%%%%%%%%%%%%%%%%%%
% Editing tools
%%%%%%%%%%%%%%%%%%%%%%%%%%%%%%%

%\usepackage[textsize=tiny]{todonotes}

% highlight

%strikethrough
\usepackage[normalem]{ulem}

%Margin comment 

%\PassOptionsToPackage{numbers, sort&compress}{natbib}

%\usepackage{algorithm2e}
\usepackage{hyperref}       % hyperlinks
\usepackage{url}            % simple URL typesetting
\usepackage{booktabs}       % professional-quality tables
\usepackage{nicefrac}       % compact symbols for 1/2, etc.
\usepackage{microtype}      % microtypography
\usepackage{multirow}       % multirow in tables
\usepackage{listings}       % code
\usepackage[english]{babel}

%\usepackage[dvipsnames]{xcolor}

% PGF
\usepackage{pgfplots}
\pgfplotsset{width=8cm,compat=1.15}

% Attempt to make hyperref and algorithmic work together better:

%%%% ICML STYLE FILE
% Use the following line for the initial blind version submitted for review:
% \usepackage{icml2026}

% For preprint, use
\usepackage[preprint]{icml2026}

% If accepted, instead use the following line for the camera-ready submission:
% \usepackage[accepted]{icml2026}

%symbols 
\usepackage{amsmath,amssymb,amsfonts,amsopn,mathrsfs}
\usepackage{bm,dsfont}
\usepackage{mathtools}
\usepackage{bigints}
\usepackage{latexsym}
\usepackage{stmaryrd}       %weird symbols like square \cap and double brackets

\usepackage{amsthm}

%theorems
\usepackage{thmtools}
\usepackage{thm-restate}

% Cleveref..
\usepackage[capitalize,noabbrev]{cleveref}
% \usepackage{zref-clever}
% \zcsetup{cap,noabbrev} 
% \newcommand{\cref}[1]{\zcref{#1}}
% \newcommand{\Cref}[1]{\zcref[S]{#1}}

%\usepackage{autonum}

%%%%%%%%%%%%%%%%%%%%%%%%%%%%%%%%
% THEOREMS
%%%%%%%%%%%%%%%%%%%%%%%%%%%%%%%%
\theoremstyle{plain}
\newtheorem{theorem}{Theorem}[section]
\newtheorem{proposition}[theorem]{Proposition}
\newtheorem{lemma}[theorem]{Lemma}
\newtheorem{corollary}[theorem]{Corollary}
\theoremstyle{definition}
\newtheorem{definition}[theorem]{Definition}
\newtheorem{assumption}[theorem]{Assumption}
\theoremstyle{remark}
\newtheorem{remark}[theorem]{Remark}
\newtheorem{example}[theorem]{Example}

\crefname{assumption}{Assumption}{Assumptions}
\crefname{algorithm}{Algorithm}{Algorithms}

% Operators

\def\x{\times}

\newcommand{\supp}{\text{supp}}

\DeclareMathOperator*{\argmin}{argmin}

\DeclareMathOperator{\Id}{Id}

\def\mybigtimes{\mathop{\mathchoice{%display:
   \vcenter{\hbox to10bp{\vrule height15bp width0pt \pdfliteral{
   q 1 J .8 w 0 1 m 10 14 l S 0 14 m 10 1 l S Q
}\hss}}}{%text:
   \vcenter{\hbox to10bp{\kern1bp\vrule height10bp width0pt \pdfliteral{
   q 1 J .65 w 0 0 m 8 10 l S 0 10 m 8 0 l S Q
}\hss}}}{\times}{\times}%script, scriptscript not defined
}}

% Delimiters

\DeclarePairedDelimiter\ceil{\lceil}{\rceil}

\newcommand{\abs}[1]{\left\lvert #1 \right\rvert}

\newcommand{\norm}[1]{\left\lVert#1\right\rVert}

% Text

\makeatletter
\newcommand{\vvast}{\bBigg@{3}}
\newcommand{\vast}{\bBigg@{4}}
\newcommand{\Vast}{\bBigg@{5}}
\makeatother

% Characters

\newcommand{\1}{\mathds{1}}

 %special \cr is already used

\newcommand{\op}{{\rm op}}
\newcommand{\de}{\mathrm{d}}
\newcommand{\De}{\mathrm{D}}

\newcommand{\ve}{\varepsilon}

\newcommand{\tensor}{\otimes}

%%%%%%%%%%%%%%%%%%%%%%%%%%
% Alphabet
%%%%%%%%%%%%%%%%%%%%%%%%%%

%%% Blackboard Bold
    \def\Ab{\mathbb{A}}
    
    \def\Cb{\mathbb{C}}
    
    \def\Eb{\mathbb{E}}
    \def\Fb{\mathbb{F}}

    \def\Kb{\mathbb{K}}
    
    \def\Nb{\mathbb{N}}
    
    \def\Pb{\mathbb{P}}
    
    \def\Rb{\mathbb{R}}

%%% Calygraphic

    \def\Ac{\mathcal{A}}
    \def\Bc{\mathcal{B}}
    \def\Cc{\mathcal{C}}
    
    \def\Ec{\mathcal{E}}
    \def\Fc{\mathcal{F}}
    \def\Hc{\mathcal{H}}

    \def\Lc{\mathcal{L}}
    
    \def\Mc{\mathcal{M}}
    
    \def\Oc{\mathcal{O}}
    
    \def\Qc{\mathcal{Q}}
    
    \def\Sc{\mathcal{S}}

    \def\Xc{\mathcal{X}}

% %%% Roman

     % trace

     % \cr already used by some internals

     %\or logic

%%% Script

    \def\Fs{\mathscr{F}}
    \def\Hs{\mathscr{H}}

    \def\Ms{\mathscr{M}}

    \def\Ps{\mathscr{P}}
    
    \def\Rs{\mathscr{R}}
    
    \def\Ts{\mathscr{T}}

%%% Fraktur

    %\def\If{\mathfrak{I}}

     % \bf boldface

     % \if conditional statements

% %%% Bold Face

%General macros

\def\namealgone{\texttt{EntUCB}}
\def\namealgtwo{\texttt{Basis-truncation EntUCB}}
\def\monge{\text{Monge}}
\def\kant{\text{Kant.}}
\def\ent{\text{Ent.}}
\def\regret{\Rs}
\def\fourier{\mathsf{F}}
\def\reflection{\mathsf{R}}
\def\measures{\mathscr{M}}
\def\muspace{\Mc_\mu}
\def\nuspace{\Mc_\nu}
\def\state{\Xc}

\def\ffset{\Fs}

\def\confset{\Cc}
\def\entropy{\mathscr{H}}
\def\event{\Ec}

\def\ubnorm{\bar{C}}
\def\confset{\Cc}
\def\width{\beta}
\def\design{V}
\def\designl{\design^\lambda}
\def\entf{\Psi_{\mu,\nu}^\ve}

\def\actions{\bm{\pi}}
% Notation

%project macros 

%%%%%%%%%%%%%%%%%%%%%%%%%%%%%%%
% layout
%%%%%%%%%%%%%%%%%%%%%%%%%%%%%%%

%\usepackage{autonum}

\icmltitlerunning{Bandit Optimal Transport}

\renewcommand{\cite}{\citep}

\begin{document}

\twocolumn[
  \icmltitle{Linear Bandits beyond Inner Product Spaces, \\ the case of Bandit Optimal Transport}

  % It is OKAY to include author information, even for blind submissions: the
  % style file will automatically remove it for you unless you've provided
  % the [accepted] option to the icml2026 package.

  % List of affiliations: The first argument should be a (short) identifier you
  % will use later to specify author affiliations Academic affiliations
  % should list Department, University, City, Region, Country Industry
  % affiliations should list Company, City, Region, Country

  % You can specify symbols, otherwise they are numbered in order. Ideally, you
  % should not use this facility. Affiliations will be numbered in order of
  % appearance and this is the preferred way.
  \icmlsetsymbol{equal}{*}

  \begin{icmlauthorlist}
    \icmlauthor{Lorenzo Croissant}{yyy}
  \end{icmlauthorlist}

  \icmlaffiliation{yyy}{CREST, ENSAE, \& INRIA FairPlay team, Palaiseau, France}

  \icmlcorrespondingauthor{Lorenzo Croissant}{lorenzo.croissant@ensae.fr}

  % You may provide any keywords that you find helpful for describing your
  % paper; these are used to populate the "keywords" metadata in the PDF but
  % will not be shown in the document
  \icmlkeywords{Bandits, Optimal Transport}

  \vskip 0.3in
]

% this must go after the closing bracket ] following \twocolumn[ ...

% This command actually creates the footnote in the first column listing the
% affiliations and the copyright notice. The command takes one argument, which
% is text to display at the start of the footnote. The \icmlEqualContribution
% command is standard text for equal contribution. Remove it (just {}) if you
% do not need this facility.

% Use ONE of the following lines. DO NOT remove the command.
% If you have no special notice, KEEP empty braces:
\printAffiliationsAndNotice{}  % no special notice (required even if empty)
% Or, if applicable, use the standard equal contribution text:
% \printAffiliationsAndNotice{\icmlEqualContribution}

\begin{abstract}
    Linear bandits have long been a central topic in online learning, with applications ranging from recommendation systems to adaptive clinical trials. Their general learnability has been established when the objective is to minimise the inner product between a cost parameter and the decision variable. While this is highly general, this reliance on an inner product structure belies the name of \emph{linear} bandits, and fails to account for problems such as Optimal Transport. Using the Kantorovich formulation of Optimal Transport as an example, we show that an inner product structure is \emph{not} necessary to achieve efficient learning in linear bandits. We propose a refinement of the classical OFUL algorithm that operates by embedding the action set into a Hilbertian subspace, where confidence sets can be built via least-squares estimation. Actions are then constrained to this subspace by penalising optimism. The analysis is completed by leveraging convergence results from penalised (entropic) transport to the Kantorovich problem. Up to this approximation  term, the resulting algorithm achieves the same trajectorial regret upper bounds as the OFUL algorithm, which we turn into worst-case regret using functional regression techniques. Its regret interpolates between $\tilde{\mathcal O}(\sqrt{T})$ and ${\mathcal O}(T)$, depending on the regularity of the cost function, and recovers the parametric rate $\tilde{\mathcal O}(\sqrt{dT})$ in finite-dimensional settings.
    
    %
    %Despite the impressive progress in statistical Optimal Transport (OT) in recent years, there has been little interest in the study of the \emph{sequential learning} of OT. Surprisingly so, as this problem is both practically motivated and a challenging extension of existing settings such as linear bandits. This article considers (for the first time) the stochastic bandit problem of learning to solve generic Kantorovich and entropic OT problems from repeated interactions when the marginals are known but the cost is unknown. We provide $\tilde{\mathcal O}(\sqrt{T})$ regret algorithms for both problems by extending linear bandits on Hilbert spaces. These results provide a reduction to infinite-dimensional linear bandits. To deal with the dimension, we provide a method to exploit the intrinsic regularity of the cost to learn, yielding corresponding regret bounds which interpolate between $\tilde{\mathcal O}(\sqrt{T})$ and $\tilde{\mathcal O}(T)$. 
\end{abstract}

\section{Introduction}\label{sec: introduction}

Stochastic bandits are a class of sequential decision-making problems under uncertainty in which the agent aims to, say, minimise an unknown function $J^*:\Ab\to\Rb$ sequentially, with access only to a hypothesis class $\{J^\varsigma:\varsigma\in\Theta\}$, for some index set $\Theta$, and sequentially revealed noisy feedback, of the form $C_t=J^*(a_t)+\xi_t$, to the taken actions ${(a_t)}_{t\in\Nb}\subset\Ab$, where ${(\xi_t)}_{t\in\Nb}$ is a martingale difference sequence. In this paper, we will focus on the \textit{realisable} case, in which there is $\varsigma^*\in\Theta$ such that $J^*=J^{\varsigma^*}$.

A bandit learning algorithm is defined as an $\Ab$-valued stochastic process $\actions:={(a_t)}_{t\in\Nb}$ predictable w.r.t.\ the filtration of ${(\xi_t)}_{t\in\Nb}$ and is evaluated online using the \emph{regret}, defined as the stochastic process ${(\regret_T^{J^*}(\actions))}_{T\in\Nb}$, with
\begin{align}
    \regret_T^{J^*}(\actions) := \sum_{t=1}^T C_t - \min_{a\in\Ab} J^*(a)\label{eq: regret def}\,, \mbox{ for } T\in\Nb\,.
\end{align}
By evaluating $\actions$ directly in terms of the objective $J^*$, regret minimisation forces algorithms' actions ${(a_t)}_{t\in\Nb}$ to trade-off \textit{exploration} (taking statistically informative actions about $J^*$) and \textit{exploitation} (taking known low-cost actions).

Because of the exploration-exploitation trade-off, the ability to extrapolate information about minimisers of $J^*$ from samples determines the hardness of a problem instance, and, thus, the regularity of $J^*$ is a key factor in achievable learning bounds. 

In early historical examples \citep{lai_asymptotically_1985,thompson1933likelihood,auer_finite-time_2002}, only finite numbers of actions were considered. By reducing the problem to a small set of credible hypotheses to test, these \emph{multi-armed} bandit problems can achieve low-regret even on complicated classes of functions \citep{tran-thanh_functional_2014}.

For continuous action spaces, classes of Lipschitz (or H\"older) objectives have been studied at length, see, e.g., \citet[][]{bubeck_lipschitz_2011,magureanu_lipschitz_2014,kleinberg_bandits_2019}. The resulting bounds are of order\footnote{Throughout, $\Oc$ denotes Landau notation and $\tilde\Oc$ hides logarithmic factors.} $\tilde\Oc(T^{(d+1)/(d+2)})$ when $\Ab$ is, effectively, a $d$-dimensional metric space \citep{kleinberg_bandits_2019} (and this is tight up to logarithmic factors).

When $\Ab$ is infinite-dimensional, in contrast, results are known only when the action and hypothesis spaces are derived from a bilinear functional on a Hilbert space, i.e.\ of the form 
\begin{align}
    J^\varsigma: a\in\Ab\mapsto \langle a\vert \varsigma\rangle_\Hc\,,\label{eq: bilinear functional} 
\end{align}
for a Hilbert space $(\Hc,\langle\cdot\vert\cdot\rangle_\Hc)$. 
These \emph{linear} bandits were introduced (with $\Hc$ finite-dimensional) by~\citet{auer_using_2003}, and refined subsequently by~\citet{abeille_linear_2017,vernade_linear_2020,hao_high-dimensional_2020}, amongst others. In his doctoral thesis, \citet{abbasi-yadkori_online_2012} provides a formal generalisation of the key concentration arguments, which holds in separable Hilbert spaces of arbitrary dimension. This \emph{optimistic} algorithm, which we denote $\pi^{\texttt{O}}$, relies on the construction of Regularised Least Squares (RLS) estimators to obtain confidence sets ${(\confset_t)}_{t\in\Nb}\subset\Hc$ for the unknown $\varsigma^*$, and then chooses the optimistic action
\begin{align*}
    a_t\in\argmin_{a\in\Ab}\inf_{\varsigma\in\confset_t} \langle a\vert \varsigma\rangle_\Hc \,,%\label{eq: optimistic action}
\end{align*}
at each time step $t\in\Nb$. The resulting $(1-\delta)$-probability regret bound \citep[Thm.~4.1]{abbasi-yadkori_improved_2011} is 
\begin{align*}
    \regret_T^{J^*}(\pi^{\texttt{O}}) \le C\sqrt{T\left( \log\det\left(\Id + M_T M^*_T\right)+\log\left(\frac{1}{\delta}\right)\right)}\,,%\label{eq: sketch trajectorial regret bound of OFU}
\end{align*}
wherein $M_T:\varsigma\in\Hc\mapsto {(\langle a_t\vert \varsigma\rangle_\Hc)}_{t=1}^T$, $M^*_T$ is its adjoint, and $C>0$. These bounds are \emph{trajectorial}, i.e.\ they depend on the points actually observed (via $M_T M^*_T$).

To make regret bounds comparable across algorithms, it is standard to convert them into \emph{worst-case} bounds, i.e.\ independent of the trajectory ${(a_t)}_{t\in\Nb}$, by bounding the log-determinant term. This is done through structural assumptions on the hypothesis space $\Theta$. In finite dimension, the workhorse is the famous elliptical potential lemma \citep[see e.g.\ Lemma E.3 in][]{abbasi-yadkori_improved_2011}. When the dimension of $\Hc$ is countably infinite, one can rely on Reproducing Kernel Hilbert Space (RKHS) theory, as in \citet{chowdhury_kernelized_2017,janz_bandit_2020,takemori_approximation_2021}; and in \citet{valko_finite-time_2013}, who introduced a kernelised version of the above optimistic algorithm. 

While the statistical properties of worst-case analysis are not inherently dependent on a Hilbertian structure, the trajectorial analysis decidedly is. In fact, the Cauchy-Schwarz inequality is an integral part of the construction of the confidence sets \citep[see][p.~23]{abbasi-yadkori_improved_2011} as well as the control of the regret by their width \citep[see Thm.~3.11 in][]{abbasi-yadkori_improved_2011}. This observation reveals a major conceptual gap in the literature on linear bandits: 
\begin{quote}
    \emph{what happens when $J^*$ is linear, but not an inner product in a Hilbert space?} 
\end{quote}
The name ``linear bandits'' suggests that linearity is the key property, but it is unclear that linearity itself is actually a sufficient condition to achieve sub-linear worst-case regret or even trajectorial bounds in infinite dimension. This paper aims to begin filling in this awkward gap in the terminology.

This gap is far from inconsequential: there are well-studied problems with practical applications which fit exactly into this description, notably integral functionals where either the actions or the ``parameters'' $\varsigma$ are measures. As a stepping stone to the general case, we will focus in this paper on the \textit{Kantorovich Optimal Transport} problem, which is a linear functional on the space of probability measures. In this problem, permissible actions $\pi$ live in a specific subspace of the space of probability measures over a space $\Xc$ (detailed below), while $\varsigma$ is a function and the objective is given by
\begin{align*}
    J^\varsigma(\pi) :=\int \varsigma\,\de \pi \,. %\label{eq: Kantorovich functional introduction}
\end{align*}
 This problem is a well-studied problem in optimal transport theory \citep{villani_optimal_2009}, with numerous applications \citep{galichon_unreasonable_2021}. Importantly, it exhibits strong regularity properties which are independent of the dimension of the parameter space $\Theta$, which makes it a good candidate to study the infinite-dimensional learning problem. Our results confirm this intuition.

\subsection{Contributions}

At a high-level, we show that the Kantorovich problem is learnable in the sense of linear bandits for a wide range of cost functions $\varsigma$, and that the resulting regret bounds are of the same type as those of the optimistic algorithm of \citet{abbasi-yadkori_improved_2011} for linear bandits in Hilbert spaces. This shows that an inner product structure is \textit{not} necessary to achieve the same type of regret bounds, though the question of sufficiency remains open for future work. From a technical perspective we make three contributions.

\textbf{1)} We show that the Kantorovich problem admits the same type of trajectorial regret bounds as the optimistic algorithm of \citet{abbasi-yadkori_improved_2011} (see \cref{thm: regret Kantorovich}). This requires modifying the Optimism in the Face of Uncertainty (OFU) paradigm to reduce the action space to a frequency-domain (Fourier) representation in which confidence sets can be constructed, and then to leverage the regularity properties of the Kantorovich problem to control the corresponding approximation error. 

\textbf{2)} We show that this methodology generalises to any bilinear functional provided we can use a barrier functional to regularise the problem onto a Hilbertian subspace. This resulting approximation term in the regret is then controlled by the rate of convergence from the regularised problem to the original one, e.g.\ via $\Gamma$-convergence (see \cref{thm: regret generalisation}). 

\textbf{3)} Finally, we show that the Kantorovich problem is bandit-learnable by deriving worst-case regret bounds from the trajectorial ones, which interpolate between $\tilde\Oc(\sqrt{T})$ and $\tilde\Oc(T)$ depending on the regularity of the cost function. This is achieved by combining the optimistic algorithm with functional regression techniques, whose statistical efficiency can be tied to the decay rate of the Fourier transform of the cost function in a relevant basis. In particular, we recover the tight rates $\Oc(\sqrt{pT})$ for parametric problems in dimension $p$ (up to logarithmic factors) and show a rate of the form $\tilde{\Oc}(T^{\frac{q+2}{2q+2}})$ for non-parametric problems with a decay rate of the Fourier transform of the cost function of $t^{-q}$ for $q>0$. This applies, e.g., to Sobolev classes, a common assumption class in non-parametric statistics.

\subsection{Organisation}

We first, in \cref{sec: preliminaries}, briefly introduce the optimal transport problem, and our bandit setting. In \cref{sec: algorithm}, we present the optimistic algorithm for learning the Kantorovich problem, introducing the frequency-domain representation, the confidence sets, and the regularised optimism which define the algorithm. In \cref{sec: regret bounds}, we study trajectorial regret bounds for the algorithm and discuss its generalisation to other bilinear problems. Finally, in \cref{sec: Estimation} we discuss functional regression, including relevant regularity conditions for our setting and derive our worst-case regret bounds. Each section is expanded in a corresponding appendix.

\section{Preliminaries}\label{sec: preliminaries}

In this section, we will present the optimal transport problem and introduce our bandit learning setting. As we go, we will outline relevant elements of the literature and pinpoint the key challenges, laying the ground work for our algorithmic design and regret-analysis contributions in the following sections.\footnote{For complements and notations, see \cref{app:biblio,app: intro}.}

\subsection{The optimal transport problem}\label{subsec: OT preliminaries}

Optimal Transport (OT) is a mathematical theory that originated in the 18th century to optimise logistics \citep{monge1781memoire,kantorovich_translocation_2006}. Over the last few decades, however, this theory has experienced a meteoric rise in applied mathematics due to a sustained series of major breakthroughs \citep{brenier_least_1989,villani_topics_2003}. On the back of this, it has also seen great success in applications from economics \citep[e.g.\ ][]{galichon_unreasonable_2021,kreinovich_applications_2024}, generative modelling \citep{arjovsky_wasserstein_2017}, domain adaptation \citep{courty_joint_2017}, and learning theory \citep[see the survey of][]{chewi_statistical_2024}. In this section, in the interest of time, we present only the objective which will be studied in this paper, and refer the interested readers to the standards monographs \citep{ambrosio_lectures_2021,villani_optimal_2009} for a broader overview of the field of optimal transport.

Given a pair or probability measures $(\mu,\nu)\in\Ps(\Mc_\mu)\times\Ps(\Mc_\nu)$, on two topological measurable spaces $(\Mc_\mu,\Fc_\mu)$ and $(\Mc_\nu,\Fc_\nu)$, the Kantorovich OT problem aims to optimise the cost of transporting $\mu$ to $\nu$ with respect to a cost function $c:\Mc_\mu\times\Mc_\nu\to\Rb$.
For ease of exposition, we consider $\state:=\muspace\x\nuspace$ to be a subset of $\Rb^d$, $d\in\Nb$, but the problem below is also defined on highly esoteric spaces $(\Mc_\mu,\Mc_\nu)$ such as a graph or a space of curves.
Formally, the Kantorovich problem is defined as
\begin{align*}
    \kant(\mu,\nu,c):= \inf_{\pi\in\Pi(\mu,\nu)}\int c(x,y)\de \pi(x,y)
    %\label{eq: kantorovich def}
\end{align*}
in which $\Pi(\mu,\nu)$, defined as
\begin{align*}
    \{\pi\in\Ps(\Mc_\mu\x\Mc_\nu): \pi(\cdot,\Mc_\nu)=\mu, \pi(\Mc_\mu,\cdot)=\nu\},%\label{eq: def coupling set}
\end{align*} 
is the set of \emph{transport plans} (or \emph{couplings}) between $\mu$ and $\nu$. In other words, $\Pi(\mu,\nu)$ corresponds to all joint distributions whose first and second marginals are $\mu$ and $\nu$, respectively. Importantly, the Kantorovich problem allows for mass to be split and transported from a single point $x\in\Mc_\mu$ to multiple points $y\in\Mc_\nu$, but each set $S\in\Fc_\mu$ must give exactly $\mu(S)$ mass and each set $S'\in\Fc_\nu$ must receive exactly $\nu(S')$ mass.

Divisibility of mass was absent of the original formulation of the OT problem \citep{monge1781memoire}, which leads to a technically more challenging problem. In contrast, the Kantorovich problem is a linear program on the space of finite Radon measures $\Ms(\state)$, in the sense that $\pi\in\Ms(\state)\to\int c(x,y)\de \pi(x,y)$ is a linear map and $\Pi(\mu,\nu)$ is defined by linear (integral) constraints. The Kantorovich problem is solvable when $c$ is lower semi-continuous and bounded below, see~\cite[Thm.~4.1]{villani_optimal_2009}, and, in fact, $\Pi(\mu,\nu)$ is convex and compact \citep[Cor.~2.9]{ambrosio_lectures_2021} despite $\Ps(\state)$ not being a vector space.

If we consider the family of possible Kantorovich problems indexed by continuous cost functions $c$, and writing $\int c(x,y)\de \pi(x,y)$ as a duality bracket\footnote{See \cref{app: fourier} for details.}, we have
\begin{align}
    \kant(\mu,\nu,c)= \inf_{\pi\in\Pi(\mu,\nu)}\langle c\vert\pi\rangle\,.
    \label{eq: kantorovich def 2}
\end{align} 
This is a bilinear form reminiscent of \eqref{eq: bilinear functional}, except for the fact that the bracket $\langle\cdot\vert\cdot\rangle$ is not an inner product. Due to this (because $\pi$ and $c$ do not belong to a common Hilbert space) there are neither Riesz representation theorem nor Cauchy-Schwarz inequality that apply. Taking $J^\varsigma:=\langle\varsigma\vert\cdot\rangle$, the Kantorovich problem lies precisely in the blindspot of the literature on linear bandits outlined in \cref{sec: introduction}.

\subsection{The Bandit Optimal Transport (BOT) problem}\label{subsec: BOT preliminaries}

Adapting the Kantorovich problem to the formulation of a stochastic bandit, we consider the following learning game. The agent knows $(\Mc_\mu,\Mc_\nu,\mu,\nu)$ ahead of time, so that it knows its action set, but it does not know the true cost function $c^*$.  
At each time step $t\in\Nb$, the agent must choose an admissible transport plan $\pi_t\in\Pi(\mu,\nu)$, and receives a noisy feedback $C_t=\int c(x,y)\de\pi_t(x,y) + \xi_t$, in which $(\xi_t)_{t\in\Nb}$ is a conditional martingale difference sequence. 

We evaluate its performance by the regret defined in \eqref{eq: regret def}, w.r.t.\ the Kantorovich problem $\kant(\mu,\nu,c^*)$, i.e.\ 
\begin{align*}
    \regret_T(\actions) := \sum_{t=1}^T C_t - \kant(\mu,\nu,c^*)%\label{eq: regret def kant}\,.
\end{align*}
Note that this problem differs from \emph{statistical} optimal transport and \emph{online} optimal transport. We defer discussion of these related problems to \cref{app:biblio} for brevity.

The first authors to take direct interest in the problem of \textit{online learning} of OT appear to be \citet{guo_online_2022}, who considered an online convex optimisation setting. In this setting, the cost function $c^*$ may change at each $t\in\Nb$ (while remaining suitably convex) but is revealed after each round. This work was followed by \citet{zhu_semidiscrete_2023}, who considered BOT only in a semi-discrete setting. They constructed a semi-myopic algorithm with forced exploration which can learn to behave as the optimal plan from samples of the (unknown) continuous marginal. Unfortunately, they study only the case in which $c^*$ is a parametric linear model and do not provide regret bounds. In contrast, we will leverage the intrinsic regularity of the Kantorovich problem to construct an optimistic algorithm which achieves sub-linear regret bounds under minimal assumptions, including infinite-dimensional non-parametric problems.

\section{Algorithmic Design}\label{sec: algorithm}

Let us now introduce our algorithm for learning the BOT problem. The method used to construct our algorithm is quite novel, despite being based on the celebrated OFU principle. Its three principle ingredients are:

\textbf{1)} Representing a subset of the action space in the same space as the hypothesis class of cost functions. We do this through the Fourier transform of both the measure and cost function, which turns \eqref{eq: kantorovich def 2} into an inner product on $\Hc:=L^2(\state;\Cb;\varrho)$, the Hilbert space of square-integrable complex-valued functions on $\state$. Note that this transformation is valid \emph{only} for measures with square-integrable densities, a dense subspace of $\Ps(\Mc_\mu\times\Mc_\nu)$. We develop this in \cref{subsec: measure valued actions}.

\textbf{2)} In the resulting frequency domain, if all past actions have square-integrable densities, we can construct a Regularised Least Squares (RLS) estimator of the cost function $c^*$ at each time step $t\in\Nb$. This RLS estimator is then used to construct a confidence set $\confset_t(\delta)\subset\Hc$ \emph{à la} \citet{abbasi-yadkori_online_2012}. This construction is presented in \cref{subsec: infinite dimensional estimation}.

\textbf{3)} To preserve the coherence of this representation, we regularise the optimism step to optimise a penalised objective which acts as a barrier function which blows up as the square-integrability of the action's density vanishes. This regularisation is necessary to ensure that the algorithm only takes actions with square integrable densities and thus that the future confidence sets $\confset_s$ for $s>t$ remain valid. This last step is the subject of \cref{subsec: entropy regularised optimism}.

\subsection{Frequency domain representation of actions}\label{subsec: measure valued actions}

The technical issue with $\kant(\mu,\nu,\cdot)$ comes from $\langle\cdot\vert\cdot\rangle$, which is not an inner product but the duality pairing between continuous functions vanishing at infinity and finite Radon measures, two different classes of objects. 
To reconcile these two types of objects, we will leverage Fourier analysis and represent them both in the frequency domain. 
To ensure the Fourier transform and the Kantorovich problem are well defined, let us assume \cref{asmp: L2 case}. 

\begin{assumption}\label{asmp: L2 case}
    The true cost function $c^*$ is continuous and belongs to $L^2(\Rb^d;\varrho)$.
\end{assumption}

Let $\fourier$ denote the Fourier transform operator that acts on $L^2(\Rb^d;\varrho)$ or on finite Radon measures, using the formul\ae 
\begin{align*}
    &\fourier: \phi\in L^2(\Rb^d;\varrho)\mapsto\int \phi(x)e^{-2\pi i\langle x\vert \cdot\rangle_2}\de\varrho(x) \mbox{ and }\\
    & \fourier:\gamma\in \Ms(\Rb^d)\mapsto \int e^{-2\pi i\langle x\vert\cdot\rangle_2}\de\gamma(x) \,,
\end{align*}
when these are well defined (see \cref{app: fourier} for detailed constructions), wherein $\langle\cdot\vert\cdot\rangle_2$ is induced by $\norm{\cdot}_2$ on $\Rb^d$. %Note that we will eschew the standard notations $\hat f$ and $\hat\gamma$ in favour of $\fourier f$ and $\fourier\gamma$ to avoid confusion with the least-squares estimator, which we will denote using its standard hat. 
The operator $\fourier$ is an isometry on $L^2(\Rb^d;\varrho)$, and thus
\begin{align}
    \langle c^*\vert\pi\rangle = \langle \fourier c^*(-\cdot)\vert\fourier\pi\rangle_{L^2(\Rb^d;\varrho)}\!:=\!\int \!\fourier c^*(-z)\overline{\fourier\pi}(z)\de \varrho(z).\label{eq: fourier transform of the L2 product}
\end{align} 
This approach relies on formal calculation, and to guarantee that the right-hand sides of~\eqref{eq: fourier transform of the L2 product} are well defined, we need to ensure that $\fourier\pi$ is in $L^2(\Rb^d;\varrho)$. This turns out to be equivalent to requiring that $\pi$ have a density $\de\pi/\de\varrho$ with respect to $\varrho$ which is itself in $L^2(\Rb^d;\varrho)$ (see \cref{lemma: fundamental facts about fourier transform of measure}). 

Let $\Pi_\Hs(\mu,\nu):= \{ \pi\in\Pi(\mu,\nu) : \de\pi/\de\varrho\in L^2(\Rb^d;\varrho)\}$ denote the subset of couplings which have a square-integrable density with respect to $\varrho$. We can embed this set into the Hilbert space $\ffset := L^2(\Rb^d;\varrho)$ by defining the embedding $\pi\hookrightarrow\fourier\de\pi/\de\varrho$. In $\ffset$, it is possible to define regularised least squares estimators and confidence sets using the classical theory, which we do in the next section.

\subsection{Infinite dimensional estimation of the cost function}\label{subsec: infinite dimensional estimation}

The parameter space ($\ffset$) is unchanged under the Fourier transform, as $\fourier$ is an isometry, so that $\fourier c^*\in\ffset$, and likewise for $\fourier\Pi_\Hs(\mu,\nu)$. Consequently, up to reflection about $0$, we can express \eqref{eq: fourier transform of the L2 product} as the inner product on the Hilbert space $\ffset$. Thus we can use the concentration arguments of  \citet{abbasi-yadkori_online_2012} to estimate $f^*:=\fourier c^*(-\cdot)$.
The validity of this methodology relies on the standard \cref{asmp: estimate + subG}. 

\begin{assumption}\label{asmp: estimate + subG}
    An \textit{a priori} scale estimate $\ubnorm\ge \norm{c^*}_{L^2(\Rb^d;\varrho)}$ is known. The sequence ${(\xi_t)}_{t\in\Nb}$ is $\sigma^2$-sub-Gaussian for some $\sigma\in(0,+\infty)$.
\end{assumption}

Suppose a history of action $(\pi_t)_{s\le t}$ and of observed costs $(C_t)_{s\le t}$ is given at time $t$. To simplify notation, let us write $a_s:=\fourier\pi_s$ for $s\in\Nb$, as well as $\bm{a}_t:=(a_s)_{s\le t}$ and $\bm{C}_t:=(C_s)_{s\le t}$, for $t\in\Nb$. Let us begin by defining the Regularised Least Squares (RLS) estimator of $f^*$. Let $\Lc_\cdot[\cdot]:\Nb\times\ffset\to\Rb$ be the (random) functional defined by
\begin{align*}
    (t,f)\mapsto \Lc_t[f]:= \sum_{s=1}^t\norm{C_s - \langle f\vert a_s\rangle_{L^2(\Rb^d;\varrho)}}^2_2 \,.
\end{align*}
Consider $\lambda>0$ and a strongly convex and continuously Fréchet-differentiable regulariser $\Lambda:\ffset\to\Rb$ (e.g.\ $\frac12\norm{\cdot}^2_{L^2(\Rb^d;\varrho)}$), whose Fréchet derivative $\De\Lambda$ satisfies
\begin{align*}
    \frac1{M_\Lambda}\norm{f}_{L^2(\Rb^d;\varrho)}\le \De\Lambda[f] \le M_\Lambda \norm{f}_{L^2(\Rb^d;\varrho)}%\label{eq: Frechet derivative of Lambda}
\end{align*} 
for any $\quad f\in L^2(\Rb^d;\varrho)$, for some $M_\Lambda>0$. Let us recall that the Fréchet derivative of a strongly convex Fréchet-differentiable functional is a (strongly) positive-definite operator denoted $\De \Lambda$.

It is clear that $\Lc_t+\lambda\Lambda$ is a strongly convex functional for any $\lambda>0$ and $t\in\Nb$. Therefore, we can define the $\Lambda$-regularised least-squares (RLS) estimator of $f^*$ to be
\begin{align}
        \hat{f}^\lambda_t:=\argmin_{f\in L^2(\Rb^d;\varrho)} \Lc_t[f]+ \lambda\Lambda[f]\,.
\label{eq: least squares problem}
\end{align}
As in finite dimension, the RLS estimator can be obtained through a closed form expression, (see \cref{prop: least squares estimator}). Let us define the family of linear a.s.\ bounded adjoint operators
\begin{align*}
    M_t&: f\in L^2(\Rb^d;\varrho) \mapsto {(\langle f\vert a_t \rangle_{L^2(\Rb^d;\varrho)})}_{i=1}^t\in\Rb^t  \,,
    \intertext{indexed by $t\in\Nb$, and their adjoints}
    M^*_t&: v\in\Rb^t\mapsto \sum_{s=1}^t v_s a_s \in L^2(\Rb^d;\varrho)\,.
\end{align*}
Let $\design_t:=M^*_t M_t$ and $\designl_t:=\design_t+\lambda\De\Lambda$ denote the covariance and ridge-regularised covariance operators at time $t\in\Nb$. For every $\delta\in(0,1)$, define the confidence sets 
\begin{align}
    \confset_t(\delta):=\left\{f\in\ffset: \norm{f-\hat f_t^\lambda}_{\designl_t}\le \width_t(\delta)\right\}\,,\label{eq: confidence set}
\end{align}
whose widths $\beta_t(\delta)$ are chosen as
\begin{align}
    \beta_t(\delta)\!\!:=\!\sigma\sqrt{\log\!\left(\!\frac{4\det(\De\Lambda+\frac1\lambda M_t M^*_t)}{\delta^2}\!\right)} \!+\!
    \frac{\sqrt{\lambda}\ubnorm}{\norm{\De\Lambda}_{\op}^{\frac12}}.\label{eq: confidence set width def}
    %\ubnorm{\left(\frac{\lambda}{\norm{\De\Lambda}_{\op}}\right)}^{\frac12}\,. \label{eq: confidence set width def}
\end{align}

We show in \cref{lemma: probability of confsets unif in N} that these sets are valid confidence sets for $f^*$ uniformly over time, with high probability. I.e.\ $f^*\in\cap_{t\in\Nb}\confset_t(\delta)$ with probability at least $1-\delta$. This completes the second ingredient of our algorithm.

\subsection{Entropy regularised optimism}\label{subsec: entropy regularised optimism}

In a standard application of optimism \citep[e.g.\ OFUL in][]{abbasi-yadkori_online_2012}, one would take actions 
\begin{align}
    \pi_{t+1}^{\texttt{O}}\in\argmin_{\substack{\pi\in\Pi(\mu,\nu)}} \min_{c\in\fourier^{-1}\confset_t(\delta)}\langle c\vert \pi\rangle\,\label{eq: optimism YAY}
\end{align}
at each time $t\in\Nb$. This does not work with the confidence sets we have constructed for the optimal transport problem. Indeed, the minimiser of the right-hand side of~\eqref{eq: optimism YAY} need not even have a density with respect to $\varrho$, let alone one in $L^2(\Rb^d;\varrho)$. Taking such an action would break the frequency-domain RLS fitting method and thus the guarantees of the confidence sets (i.e.\ \cref{lemma: bound on width term,lemma: probability of confsets unif in N}).

To circumvent this difficulty, we will intentionally limit our algorithm to plans which meet the density requirement, and thus accept sub-optimal solutions. To this end, we will penalise the transport cost by adding an entropic regularisation term, which we introduce next. Optimism with respect to this surrogate problem will incur an additional regret term, focused on later on in this section, when we will leverage results by \citet{carlier_convergence_2023} to show that it can be made a lower-order term.

Given a reference measure $\varrho\in\Ps(\Rb^d)$, the \textit{entropy functional} (a.k.a.\ Kullback-Leibler divergence) is defined by
\begin{align}
    \hspace{-1em}\pi\in\Ps(\Rb^d)\mapsto\entropy(\pi\vert\varrho):= \begin{cases} \displaystyle\int \log\frac{\de\pi}{\de\varrho}\de\pi \mbox{ if } \pi\ll\varrho\\ +\infty \mbox{ if } \pi\not\ll\varrho\end{cases}\hspace{-2em}\label{eq: def entropy}
\end{align}
where $\ll$ denotes absolute continuity. 
This functional is strictly convex and lower semi-continuous on $\Ps(\Rb^d)$, and finite if and only if $\pi$ has a density with respect to $\varrho$. The \textit{Entropic Optimal Transport} (EOT) problem is then 
\begin{align}
    \ent(\mu,\nu,c,\ve):= \inf_{\pi\in\Pi(\mu,\nu)} \langle c\vert\pi\rangle + \ve\entropy(\pi\vert\varrho)\,,\label{eq: entropic OT def}
\end{align}
where $\ve>0$ is a regularisation parameter. By virtue of $\entropy$, the functional $\Psi^\ve:(c,\pi)\in L^2(\Rb^d;\varrho)\times\Pi(\mu,\nu)\to\langle c\vert\pi\rangle+\ve\entropy(\pi\vert\varrho)\in\Rb$ is strictly convex and continuous, so that the infimum in \eqref{eq: entropic OT def} is attained. The EOT problem is a well-studied problem in optimal transport \cite[see e.g.\ ][]{peyre_computational_2020,nutz_introduction_2022}.

The regularised optimism step of our algorithm will therefore take the form 
\begin{align}
    \pi_{t+1}\in\argmin_{\substack{\pi\in\Pi(\mu,\nu)}}\min_{c\in\fourier^{-1}\confset_t(\delta)} \Psi^\ve(c,\pi)\,\label{eq: optimism with EOT}
\end{align}
for a sequence of regularisation parameters $(\ve_t)_{t\in\Nb}$, going to $0$ with $t$, to be tuned in the regret (see below). 

Regularising the optimism step by entropy, naturally, comes at a cost. There will be a regret term which accumulates the approximation errors over time, and thus we will need to change $\ve$ over time to avoid constant regret. This regret term can be controlled by appealing to results, due to \citet{carlier_convergence_2023}, about the convergence of the EOT problem to the Kantorovich problem as $\ve\downarrow0$. This result shows that 
\begin{align*}
    \ent(\mu,\nu,c^*,\ve) - \kant(\mu,\nu,c^*)\le C\ve\log(\ve^{-1}) \,,%\label{eq: regret term due to entropic regularisation}
\end{align*}
for some $C>0$ depending on $\mu,\nu$ and the regularity of $c^*$. We detail this result and this constant in \cref{app: entropic optimism}.

\subsection{Algorithm}\label{subsec: algorithm}

Combining these ingredients into an algorithm, we arrive at \cref{alg: alg shared}, which we dub \namealgone{} referencing its entropic penalisation and optimistic nature.% It should be noted that \cref{alg: alg shared} is not implementable, it serves only as a theoretical tool to derive regret bounds, showing what is \emph{statistically} feasible in the BOT problem. In practice, the optimism problem is an infinite dimensional optimisation problem, without convexity guarantees (it is convex in each variable but not jointly) and cannot be solved. Moreover, the confidence sets are infinite dimensional, so that they cannot even be effectively represented. While one could have discretised the problem from the get-go, this would have missed the generality of the regularity of the OT problem. We will return to the question of practical implementation in \cref{sec: Estimation}.

\begin{algorithm}
\caption{\namealgone{}\label{alg: alg shared}}
\begin{algorithmic}[1]
\REQUIRE Confidence $\delta$, regularization level $\lambda$, entropy penalisation ${(\ve_t)}_{t\in\Nb}$
\STATE Set $\tilde\pi_1 := \mu \tensor \nu$
\FOR{$t \in \Nb$}
    \STATE Play $\pi_t = \tilde\pi_t$
    \STATE Receive feedback $C_t$
    \STATE Compute the RLS estimator $\hat f_t^\lambda$ using~\eqref{eq: least squares problem} or~\eqref{eq: reg least squares estimator}
    \STATE Construct the confidence set $\confset_t(\delta)$ using~\eqref{eq: confidence set} and~\eqref{eq: confidence set width def}
    \STATE \textbf{Optimism:} pick $(\tilde\pi_{t+1}, \tilde c_{t+1})$ according to~\eqref{eq: optimism with EOT}
\ENDFOR
\end{algorithmic}
\end{algorithm}

To illustrate the workings of \cref{alg: alg shared}, we give some example applications in \cref{subapp: example applications}. Let us now move on to the regret analysis in \cref{sec: regret bounds}. 
\section{Trajectorial regret bounds}\label{sec: regret bounds}

In this section, we give a trajectorial (i.e.\ dependent on $M_t$) regret analysis of \cref{alg: alg shared} for the BOT problem, which shows that \namealgone{} matches the bounds in classical linear bandits despite the infinite-dimensional and non-Hilbertian nature of the problem. We also show (deferred to \cref{subapp: general regret bounds}) that this result generalises to other linear functionals beyond the Kantorovich problem, under suitable regularity conditions.

%begin the regret analysis of \cref{alg: alg shared}. Specifically, in \cref{subsec: regret for BOT}, we focus on showing trajectorial bounds (dependent on the actions taken by the algorithm) for the regret of \cref{alg: alg shared} on the BOT problem, culminating in \cref{thm: regret Kantorovich}, after some technical lemmata. Next, in \cref{subapp: general regret bounds}, 

\subsection{Regret for Bandit Optimal Transport}\label{subsec: regret for BOT}

Let us turn first to the BOT problem and \cref{alg: alg shared}. In \cref{subsec: infinite dimensional estimation}, we showed that the confidence sets were valid in the frequency domain (see \cref{lemma: probability of confsets unif in N}). However, ensuring a confidence set is valid is not sufficient to ensure low regret, we also have to ensure that the size of this confidence set is small enough to control the error of the agent. In the frequency domain, we can follow the standard arguments of \citet{abbasi-yadkori_online_2012} to bound the size of the confidence set in terms of the width $\width_t(\delta)$, see \cref{lemma: bound on width term}.

Let us now turn to the regret of \cref{alg: alg shared} with the proof of the trajectorial bound of \cref{thm: regret Kantorovich}. 
The first term is the trajectorial regret term standard in linear bandits, the second the cumulated approximation error due to entropic regularisation, and the last term the standard martingale concentration term. 

\begin{restatable}{theorem}{ThmKantorovichRegret}\label{thm: regret Kantorovich}
    Under \cref{asmp: L2 case,asmp: estimate + subG}, if $c^*$ is $\alpha$-H\"older on $\supp(\mu)\x\supp(\nu)\subset\Rb^d$, for some $\alpha\in(0,1]$, then for any $\delta>0$, $\lambda>0$, $\eta\in(0,1)$, and $T\in\Nb$, the regret of \cref{alg: alg shared} with ${(\ve_t)}_{t\in\Nb}={(\eta t^{-\eta})}_{t\in\Nb}$, denoted $\Ac$, satisfies
    \begin{align*}
        \regret_T(\Ac) &\le  2\ubnorm\width_T(\delta)\sqrt{T\log\det\left(\Id + \frac{M_T{(\De\Lambda)}^{-1}M_T^*}{2\lambda\ubnorm}\right)}\\
        &\quad +\frac{\kappa\eta}{1-\eta}\left(T^{1-\eta}\log(T) + \frac{\eta}{2^\eta}\log(6)\right)\\
        &\quad +\sigma\sqrt{2T\log\left(\frac2\delta\right)}
    \end{align*}
    with probability at least $1-\delta$.
\end{restatable}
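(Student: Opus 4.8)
The plan is to decompose the Kantorovich regret into a part that looks like the already-controlled entropic regret plus an approximation error coming from the fact that the algorithm is solving a sequence of $\ve_t$-entropic problems rather than the Kantorovich problem. Concretely, for each round $t$ write
\begin{align*}
  R_t - \kant(\mu,\nu,c^*) = \bigl( R_t + \ve_t\entropy(\pi_t\vert\varrho) - \ent(\mu,\nu,c^*,\ve_t)\bigr) + \bigl(\ent(\mu,\nu,c^*,\ve_t) - \kant(\mu,\nu,c^*)\bigr) - \ve_t\entropy(\pi_t\vert\varrho)\,.
\end{align*}
Summing over $t\le T$, the first bracket summed is exactly a ``per-round-varying-$\ve$'' analogue of the entropic regret, which I would bound by re-running the proof sketch of \Cref{thm: entropic regret} essentially verbatim — the OFU argument (isolate noise, invoke high-probability validity of $\confset_t(\delta)$, pass from $c^*$ to $\tilde c_t$, telescoping the widths) does not actually require $\ve_t$ to be constant, only that at round $t$ the optimistic pair is chosen for the functional $\entf[\ve_t]$, which is what \eqref{eq: optimistic planning} does. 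This yields the first line of the claimed bound, with $M_T,\width_T(\delta)$ now built from the realised feature vectors. For the third term, note that the algorithm plays $\pi_t$ with $\de\pi_t/\de\varrho = \exp(\ve_t^{-1}(\varphi_t^*\oplus\psi_t^*-\tilde c_t))$, and since $\entropy(\pi\vert\varrho)\ge 0$ we must in fact keep $-\ve_t\entropy(\pi_t\vert\varrho)\le 0$ — so this term only helps, and can be dropped. (If one instead wants a clean statement one can absorb it: it is $\le 0$.)

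The heart of the argument is the middle term, $\sum_{t\le T}\bigl(\ent(\mu,\nu,c^*,\ve_t) - \kant(\mu,\nu,c^*)\bigr)$, which is where \cref{lemma carlier pegon lipschitz UB} enters. That lemma (quantitative convergence of entropic to Kantorovich OT, \citet{carlier_convergence_2023}) gives, for $c^*$ that is $L$-Lipschitz on $\supp(\mu)\times\supp(\nu)$, a bound of the form $0\le \ent(\mu,\nu,c^*,\ve)-\kant(\mu,\nu,c^*)\le \kappa\,\ve\log(1/\ve)$ (for $\ve$ small; with a constant $\kappa=\kappa(C,L,\mu,\nu)$ depending on the diameter of the supports, the Lipschitz constant, and the dimension through the marginals). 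Plugging $\ve_t=\alpha t^{-\alpha}$,
\begin{align*}
  \sum_{t=1}^T \kappa\,\ve_t\log(1/\ve_t) = \kappa\sum_{t=1}^T \alpha t^{-\alpha}\log\!\bigl(\tfrac{t^\alpha}{\alpha}\bigr) = \kappa\alpha\sum_{t=1}^T t^{-\alpha}\bigl(\alpha\log t - \log\alpha\bigr)\,.
\end{align*}
Bounding $\sum_{t=1}^T t^{-\alpha}\log t \le \log(T)\sum_{t=1}^T t^{-\alpha}\le \log(T)\bigl(1+\int_1^T s^{-\alpha}\de s\bigr)\le \log(T)\tfrac{T^{1-\alpha}}{1-\alpha}$ (for $\alpha\in(0,1)$) and $\sum_{t=1}^T t^{-\alpha}\le \tfrac{T^{1-\alpha}}{1-\alpha}$ gives the $\tfrac{\kappa\alpha}{1-\alpha}\bigl(T^{1-\alpha}\log T + \text{lower order}\bigr)$ shape; the residual constant term $\tfrac{\kappa\alpha}{1-\alpha}\cdot\tfrac{\alpha}{2^\alpha}\log 6$ in the statement is the careful bookkeeping of the first few terms where $\ve_t$ is not yet small (so one uses the trivial bound $\ent-\kant\le$ something like $\kappa\ve_t$, or restarts the integral comparison at $t=2$), which I would not grind through here.

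The main obstacle I anticipate is \textbf{making the ``entropic part'' argument genuinely go through with time-varying $\ve_t$}: one must check that the telescoping/elliptical-potential step that produces $\sqrt{T\log\det(\cdots)}$ still closes when the feature map at round $t$ is $a_t=\fourier\pi_t$ with $\pi_t$ the optimiser of the $\ve_t$-problem — in particular that the $\norm{a_t}$ stay bounded uniformly (they do, since $\norm{\fourier\pi_t}_{\infty}\le 1$ always, and the $L^2(\varrho)$ norm of the density $e^{\ve_t^{-1}(\varphi_t^*\oplus\psi_t^*-\tilde c_t)}$ is finite for each $t$) and that the per-round instantaneous regret bound $\langle \tilde c_t - c^*\vert a_t\rangle \le 2\width_t(\delta)\norm{a_t}_{(\designl_t)^{-1}}$ is unchanged — it is, because it only uses optimism at round $t$ and membership $c^*\in\fourier^{-1}\confset_t(\delta)$, both of which hold per-round regardless of how $\ve_t$ varies. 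Once that is in place, combining the three pieces gives exactly the stated bound with probability $\ge 1-\delta$ (the only randomness is in the single confidence-set event, shared with \Cref{thm: entropic regret}). A secondary subtlety is confirming that \cref{lemma carlier pegon lipschitz UB} is stated with the reference measure $\varrho=\mu\otimes\nu$ and the ``$+\ve$'' normalisation used in \eqref{eq: entropic OT def}; if its constant is stated for a different convention, one picks up only an additive $O(\ve_t)$ discrepancy, which is absorbed into $\kappa$.
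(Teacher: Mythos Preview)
Your proposal is correct and follows essentially the same route as the paper. The paper's proof also adds and subtracts $\ent(\mu,\nu,c^*,\ve_t)$, bounds $\ent-\kant$ via \cref{lemma carlier pegon lipschitz UB} (summed with the dedicated \cref{lemma: sum of terms from pegon bound}, which is exactly the ``careful bookkeeping'' you allude to), and handles the remaining $\langle c^*\vert\pi_t\rangle-\ent(\mu,\nu,c^*,\ve_t)$ term by optimism plus dropping the nonnegative entropy---the only cosmetic difference is that the paper works directly with $\langle c^*\vert\pi_t\rangle$ rather than first adding and subtracting $\ve_t\entropy(\pi_t\vert\varrho)$, so the entropy is dropped inside the optimism step rather than as a separate third term in your decomposition.
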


\begin{proof}
    Core steps were described throughout \cref{sec: algorithm}, details are deferred to \cref{app:regret bounds}.
\end{proof}

If $\eta$ is chosen in $(1/2,1)$, \cref{thm: regret Kantorovich} the trajectorial term is dominant and the bound matches~\cite{abbasi-yadkori_online_2012}, showing that the BOT problem follows the same fundamental structure as the classical Hilbert space linear bandits. Notice that this result required significantly more work than Hilbert space linear bandits, due to the need to find an approximation regime in which a Hilbertian structure is valid. That this approximation can be made a lower order term in the regret is a consequence of inherent regularity of the BOT problem, as shown by \cref{lemma carlier pegon lipschitz UB}. 

In \cref{subapp: general regret bounds}, we draw on the Kantorovich problem to derive a generalisation of this argument to other linear functionals, but the question of the generality of regularity properties akin to \cref{lemma carlier pegon lipschitz UB} remains open. Next, in \cref{sec: Estimation}, we turn to the statistical question of estimation of the cost functional, which will allow us to show worst-case regret bounds for the BOT problem, and to show that the BOT problem is sub-linearly learnable even in infinite dimension.

\section{Non-parametric estimation and worst-case regret}\label{sec: Estimation}

%Multiple approaches are possible to reduce the dimensionality of the learning problem and control the determinant in the trajectorial regret of \cref{thm: regret Kantorovich} (or more generally of \cref{thm: regret generalisation}). The most common method involves assuming access to a Reproducing Kernel Hilbert Space (RKHS) for the role of $\Hc$ in \cref{subapp: general regret bounds} \citep[e.g.\ ,][]{chowdhury_kernelized_2017}. Instead, we propose a method inspired by functional regression \citep{morris_functional_2015} which can directly leverage the regularity of $c^*$, even for non-Hilbert space hypothesis classes. 

We complete the BOT picture by deriving a learning protocol which gives precise control over the worst-case regret bounds in \cref{thm: regret for varying approximation}. Before giving these regret bounds, we introducing basis decomposition and the associated regularity condition, and integrate it into \namealgone{}.

\subsection{Basis decomposition and regularity conditions}\label{subsec: basis decomposition}

%To simplify notation when in the frequency domain, recall that we defined $f^*:=\fourier c^*(-\cdot)$ and ${(a_s)}_{s\in\Nb}:={(\fourier\pi_s)}_{s\in\Nb}$.
Since, in general, the frequency-domain representation $f^*$ of the cost function $c^*\in L^2(\Rb^d;\varrho)$ is an infinite-dimensional object, we must choose a basis in which to express and learn it. 
Classical choices of bases are wavelet systems, such as the Haar system, or the Fourier basis if $\supp(\mu)\x\supp(\nu)$ is bounded. Regardless of the choice made, for any orthonormal basis ${(\phi_i)}_{i\in\Nb}$ of $L^2(\Rb^d;\varrho)$, we can write
\begin{align*}
    f^* = \fourier c^* = \sum_{i=1}^{+\infty}\gamma_i^*\phi_i\,,
\end{align*}
for some sequence $\gamma^*\in\ell_2(\Rb)$, with $\norm{\gamma^*}_{\ell_2(\Rb)}=\norm{f^*}_{L^2(\Rb^d;\varrho)}$.
The efficiency with which this basis represents $f^*$ will determine the convergence rate of the estimator for $f^*$ and thus the worst-case regret. 
To quantify this efficiency, we will take an approximation theory viewpoint with \cref{asmp: basis decay}. 

\begin{restatable}{assumption}{asmpthree}\label{asmp: basis decay}
    There is a known orthonormal basis ${(\phi_i)}_{i\in\Nb}$ of $L^2(\Rb^d;\varrho)$,% in which we write $\fourier c^*:=\sum_{i=1}^{+\infty}\gamma_i^*\phi_i$ for some $\gamma^*\in \ell_2(\Rb)$,
     and $\zeta:\Rb_+\to[0,1]$, a known monotonically increasing continuous function satisfying
    \[ 
        \inf_{n\in\Nb}\frac{\sum_{i=1}^{n}\abs{\gamma_i^*}^2}{\zeta(n)}\ge \norm{c^*}_{L^2(\Rb^d;\varrho)}^2\,.
    \]
\end{restatable}

The formulation of \cref{asmp: basis decay} is inspired by the work of \citeauthor{Kolmogorov1936} in his study of approximations by finite sub-spaces \citep{Kolmogorov1936}. We recall this formulation, essential notions, and examples in \cref{subapp: basis decomposition}.

Henceforth, in the chosen orthonormal basis ${(\phi_i)}_{i\in\Nb}$ of $L^2(\Rb^d;\varrho)$, let each $a_t$ admit the representation
\[
   a_t = \sum_{i=0}^\infty \vartheta^{(t)}_{i}\phi_i\,, \quad \mbox{ for some }\quad \vartheta^{(t)}\in{\ell_2(\Rb)}\,.
\] 
Since ${(\phi_i)}_{i\in\Nb}$ is an orthonormal basis, we may write 
\[
    \langle f^*\vert a_t\rangle_{L^2(\Rb^d;\varrho)}=\langle \gamma^*\vert \vartheta^{(t)}\rangle_{\ell_2(\Rb)}=\sum_{n=1}^{+\infty} \gamma_n\vartheta^{(t)}_n\,.
\]

Let $f^*\vert_n:=\sum_{i=0}^n\gamma_i^*\phi_i$ be the truncation of the basis expansion of $f$ at order $n\in\Nb$. By abuse of notation, and only when it is clear from context, we will override notation and denote $c^*\vert_n$ the result of applying the inverse fourier transform to $f^*\vert_n$, the basis truncation of $f^*:=\fourier c^*$.
A straightforward derivation (see \cref{subapp: functional regression with basis truncation}) yields the approximation bound of \cref{lemma: fourier decay bound}.
\begin{restatable}
{lemma}{FourierDecayBound}\label{lemma: fourier decay bound}
    Let ${(\phi_i)}_{i\in\Nb}$ be an orthonormal basis of $L^2(\Rb^d;\varrho)$, and let $f\in L^2(\Rb^d;\varrho)$ with $f:=\sum_{i=0}^\infty \gamma_i\phi_i$. Then, for every $g\in L^2(\Rb^d;\varrho)$, we have
    \[
    \abs{\left\langle f - f\vert_n\vert g\right\rangle_{L^2(\Rb^d;\varrho)}}\le \norm{g}_{L^2(\Rb^d;\varrho)}\sqrt{\sum_{i=n+1}^{+\infty}\abs{\gamma_i}^2}.
    \]
\end{restatable}

For our purpose, $g=a_t$ is bounded by $1$ in norm since $\norm{\fourier\pi_t}_\infty\le\pi_t(\Rb^d)=1=\varrho(\Rb^d)$, so that the resulting approximation error of $f^*$ by ${(f^*\vert_n)}_{n\in\Nb}$ is controlled entirely by the decay of the coefficients ${(\gamma_i^*)}_{i\in\Nb}$. Consequently, regret analysis can leverage \cref{lemma: fourier decay bound} to move the problem to finite-dimensional regression on the coefficients ${(\gamma_i^*)}_{i\in\Nb}$. 

The error bound of \cref{lemma: fourier decay bound} is cumulated at each time step in the regret. Thus, to ensure sub-linear regret, it is necessary to increase the order $n_t$ of the RLS estimate $\hat \gamma^{n_t,\lambda}_t$ of $\gamma$ over time. We detail this estimator, construct its confidence sets $\tilde\confset_{t}^{n_t}(\delta)$, and prove technical lemmata in \cref{app: Estimation}. Integrating these elements into \namealgone{} yields \cref{alg: alg shared + approx}, whose regret bound is given in \cref{thm: regret for varying approximation}.

\begin{algorithm}
\caption{\namealgtwo{}\label{alg: alg shared + approx}}
\begin{algorithmic}[1]
\REQUIRE Confidence $\delta$, regularisation level $\lambda$, entropy penalisation ${(\ve_t)}_{t\in\Nb}$, approximation orders ${(n_t)}_{t\in\Nb}$
\STATE Set $\tilde\pi_1 := \mu \tensor \nu$
\FOR{$t \in \Nb$}
    \STATE Play $\pi_t = \tilde\pi_t$
    \STATE Receive feedback $C_t$
    \STATE Compute the RLS estimator $\hat \gamma^{n_t,\lambda}_t$ using~\eqref{eq: RLS  basis truncation}
    \STATE Construct the confidence set $\tilde\confset_{t}^{n_t}(\delta)$ using~\eqref{eq: confidence set fixed order} and~\eqref{eq: width fixed order}
    \STATE \textbf{Optimism:} pick $(\tilde\pi_{t+1}, \tilde\gamma^{n_t}_{t+1})$ according to~\eqref{eq: optimism for finite order}
    \STATE Receive feedback $R_t$
\ENDFOR
\end{algorithmic}
\end{algorithm}

\begin{restatable}{theorem}{RegretForVaryingApprox}\label{thm: regret for varying approximation}
    Assume \cref{asmp: estimate + subG,asmp: L2 case,asmp: basis decay} and  $\zeta(n)=1-n^{-q}$ for some $q>0$. For any $\delta\in(0,1)$, $\lambda>0$, $\ve>0$, let $\tilde\Ac$ denote \cref{alg: alg shared + approx} with ${(n_t)}_{t\in\Nb}={(\ceil{t^{\frac1{q+1}}})}_{t\in\Nb}$, $\Lambda_n=\frac12\norm{\cdot}_{2}^2$, for all $n\in\Nb$, and ${(\ve_t)}_{t\in\Nb}= {(\eta t^{-\eta})}_{t\in\Nb}$. For any $T\in\Nb$, the following regret bound holds:
    \begin{align}
        \regret_T(\tilde\Ac)&\le \ubnorm\left(1+\frac{2qT^{\frac{q+2}{2q+2}}}{q+1}\right)+\kappa(1+\sqrt{T}\log(T))\notag\\
        &+\! 2\ubnorm\sigma T^{\frac{q+2}{2q+2}}\!\!\left(\!\!\sqrt{2\log\!\left(\!\frac{\lambda^{-1}\!+\!2T^{q+2}\ubnorm^2}{\delta}\!\right)}\!+\!\sqrt\lambda \ubnorm\!\right)\notag\\
        &\times\sqrt{\log\left(1+\frac{2T^{q+2}}{\ubnorm^2}\right)} +\sigma\sqrt{2T\log\left(\frac2\delta\right)}\,.\notag
       % \label{eq: kantorovich regret for varying approximation order with bounded basis}
    \end{align}
\end{restatable}

\begin{proof}
    The proof builds on \cref{thm: regret Kantorovich} augmented by technical lemmata on functional regression with basis truncation. The key step (\cref{lemma: confidence sets with varying basis order}) is a diagonalisation argument (over $(n,t)$) on the confidence sets $\tilde\confset_t^{n}(\delta)$ for both coverage and width. Details are deferred to \cref{app: Estimation}. 
\end{proof}

\Cref{thm: regret for varying approximation} shows that the regret of \cref{alg: alg shared} is controlled by the regularity of the cost function $c^*$. The regret bound can vary from $\Oc(\sqrt{T})$ for $q\to +\infty$ down to $\Oc(T)$ as $q\downarrow0$. Note that $q=0$ corresponds to $c^*$ being an indicator function, in which learning the anything is clearly impossible. The bound can be instantiated under different assumptions on $c^*$ (i.e.\ choices of $\zeta$ in \cref{asmp: basis decay}). \Cref{cor: on finite basis regret Sobolev} gives one such example.

\begin{corollary}\label{cor: on finite basis regret Sobolev}
    Under \cref{asmp: estimate + subG,asmp: L2 case}, if $c^*$ satisfies
    \[
     \int {\left[{(1+\norm{z})}^{m} c^*(z)\right]}^2\de\varrho(z)<+\infty
    \] the regret bounds of \cref{thm: regret for varying approximation} hold with $q=m$. 
\end{corollary}

\subsection{Finite dimensional (parametric) problems}\label{subsec: finite dimensional problems}

We will now discuss some examples in which a finite basis of $N$ terms is sufficient to control the approximation error. 
\Cref{cor: regret for fixed approximation order with bounded basis} shows that the BOT problem indeed reaches the parametric rate ($\tilde\Oc(\sqrt{T})$) of finite dimensional linear bandits given only knowledge of upper bounds on $N$ and $\norm{c^*}_{L^2(\Rb^d,\varrho)}$, and of the appropriate basis ${(\phi_i)}_{i\in\Nb}$. 

\begin{proposition}\label{cor: regret for fixed approximation order with bounded basis}
    Under \cref{asmp: L2 case}, \ref{asmp: estimate + subG}, and \ref{asmp: basis decay}, with $\zeta=\1_{\{\cdot\ge N\}}$ for some $N\in\Nb$ (i.e.\ $\gamma_i^*=0$ for all $i>N$), then under the conditions of \cref{thm: regret for varying approximation} with ${(n_t)}_{t\in\Nb}\equiv N$, $\Lambda_n=\norm{\cdot}_{L^2(\Rb^d;\varrho)}/2$, and $\eta=1/2$ the bounds of \cref{thm: regret for varying approximation} become
    \begin{align*}
        \regret_T(\Ac)&\le 2\ubnorm\sqrt{NT}\log\!\left(\!\frac{1}\lambda \!+\! \frac{T \ubnorm^2}N\!\right)\! +\!\kappa(1+\sqrt{T}\log(T))\\
        &\quad+\sigma\sqrt{2T\log\left(\frac2\delta\right)} %\label{eq: kantorovich regret for fixed approximation order with bounded basis}
    \end{align*}
\end{proposition}

\begin{proof}
    By the same arguments as in \cref{thm: regret for varying approximation}.
\end{proof}

In \cref{subapp: finite dimensional problems}, we further detail several examples of such finite-dimensional problems.

\section{Conclusion}\label{sec: conclusion}

% Indeed, the contemporaneous article of~\citet{hu_contextual_2025}, which focuses on learning the infinite-dimensional distribution of rewards in a (non-linear) contextual bandit, also identified least-squares and functional regression as key tools to solve their problem. However, the problems differ significantly in their assumptions and settings, and the methodologies developed remain quite different. Most importantly, the infinite-dimensionality arises from a choice to commit to a non-parametric learning problem, not from the nature of the decisions. Still, the similarity hints at the feasibility of a unified solution across a wide range of infinite-dimensional bandits.

This paper studied the problem of linear bandits in non-Hilbert spaces (i.e.\ where the objective is bilinear but not an inner product) through the lens of an example: the Kantorovich optimal transport problem. 

Despite the inability to construct confidence sets of bounded width through self-normalised inequalities in this setting, we leveraged a sub-space ($\Pi_\entropy(\mu,\nu)$) which admits a frequency-domain Fourier representation in $L^2(\Rb^d;\varrho)$  to construct confidence sets and control their width in the covariance-induced norm (\cref{lemma: probability of confsets unif in N}). In order to preserve the coherence of these confidence sets during learning, we regularised the optimism step to ensure actions remain within $\Pi_\entropy(\mu,\nu)$ at all times. We then controlled the resulting approximation error using known quantitative estimates for the $\Gamma$-convergence of the entropic OT problem to the Kantorovich one (\cref{lemma carlier pegon lipschitz UB}). 

We showed that the resulting algorithm, \cref{alg: alg shared}, achieves the same trajectorial regret bounds as OFUL \citep{abbasi-yadkori_online_2012} despite the non-Hilbertian nature of the problem in \cref{thm: regret Kantorovich}. Furthermore, using functional regression techniques, we showed that the algorithm can be adapted (\cref{alg: alg shared + approx}) to achieve regret bounds interpolating between $\tilde{\Oc}(\sqrt{T})$ and $\tilde{\Oc}(T)$, depending on the regularity of the cost function (\cref{thm: regret for varying approximation}). The tight (up to logarithmic factors) rate $\tilde\Oc(\sqrt{pT})$ is achieved for parametric problems in dimension $p$ (\cref{cor: regret for fixed approximation order with bounded basis}), while a non-parametric assumption on the decay rate of the Fourier transform of the cost function in a relevant basis yields regret bounds of the form $\tilde{\Oc}(T^{\frac{q+2}{2q+2}})$ for $q>0$ (\cref{cor: on finite basis regret Sobolev}) for the general non-parametric case.

Taken together, these results provide a first step towards a general theory of linear bandits in non-Hilbert spaces, and towards settling the question of the extent to which linearity is sufficient to achieve the rates of ``linear'' bandits or if an inner product structure is required. In \cref{subapp: general regret bounds}, we showed that a representation of a sub-space with sufficiently fast $\Gamma$-convergence of a barrier regularisation to the original problem is sufficient to achieve the same regret bounds as OFUL. However, it remains to be seen if this is a necessary condition, or if the results can be extended to more general sub-spaces.

This work also raises a number of new technical questions beyond linear bandits. First, is it possible to numerically implement the regularised optimism step (even with the basis truncation)? One would need a numerical algorithm which outputs an $\epsilon$-optimal transport plan after finitely many steps. This appears to be absent from the literature, as Sinkhorn's celebrated algorithm does \emph{not} output a valid plan in finite time, only at the fixed point in the limit. The problem of $\Gamma$-convergent approximations raises the question of whether entropic regularisation is a universal tool for bilinear problems involving measures, or if it is specific to the Kantorovich problem. In terms of bandit theory, the resolution of the BOT problem also raises interesting questions, notably whether knowledge of $(\mu,\nu)$ can be foregone, and of the extension to the Monge formulation of OT. We develop these question in more detail in \cref{app: open problems}.

\section*{Acknowledgements}
The author would like to thank Nadav Merlis and Hugo Richard for their thoughtful comments on the manuscript, Austin J. Stromme for sharing his insights in statistical optimal transport, and Jaouad Mourtada for his references on Kolmogorov width.

% \section*{Impact Statement}

% This paper presents work whose goal is to advance the field of Machine
% Learning. There are many potential societal consequences of our work, none
% which we feel must be specifically highlighted here.

\bibliography{biblio.bib}
\bibliographystyle{icml2026}

%%%%%%%%%%%%%%%%%%%%%%%%%%%%%%%%%%%%%%%%%%%%%%%%%%%%%%%%%%%%%%%%%%%%%%%%%%%%%%%
%%%%%%%%%%%%%%%%%%%%%%%%%%%%%%%%%%%%%%%%%%%%%%%%%%%%%%%%%%%%%%%%%%%%%%%%%%%%%%%
% APPENDIX
%%%%%%%%%%%%%%%%%%%%%%%%%%%%%%%%%%%%%%%%%%%%%%%%%%%%%%%%%%%%%%%%%%%%%%%%%%%%%%%
%%%%%%%%%%%%%%%%%%%%%%%%%%%%%%%%%%%%%%%%%%%%%%%%%%%%%%%%%%%%%%%%%%%%%%%%%%%%%%%

\newpage
\appendix
\onecolumn
%\crefalias{section}{appendix} % uncomment if you are using cleveref
%\crefalias{subsection}{subappendix} 
% \renewcommand{\thetheorem}{\Alph{section}.\arabic{theorem}}
% \renewcommand{\thelemma}{\Alph{section}.\arabic{lemma}}
% \renewcommand{\theproposition}{\Alph{section}.\arabic{proposition}}
% \renewcommand{\thecorollary}{\Alph{section}.\arabic{corollary}}
% \renewcommand{\thedefinition}{\Alph{section}.\arabic{definition}}
%need to restart the theorem counter each section? 

%\setcounter{theorem}{0}
\section{Organisation of Appendices}\label{app: intro}

The following appendices are organised thematically and are mostly independent completions of various parts of the text or discussion of topics not mentioned therein for the sake of brevity. 
\Cref{app: fourier} provides a rigourous treatment of necessary Fourier analysis notions, which provides background for the details of \cref{subsec: measure valued actions}. 
\Cref{app: alg design,app:regret bounds,app: Estimation} contain the complements and detailed proofs of the main results of \cref{sec: algorithm,sec: regret bounds,sec: Estimation}, respectively.
\Cref{app: lemmas} collects some miscellaneous minor results used in the text.
\Cref{app: open problems} contains more detailed discussions of the open problems mentioned in \cref{sec: conclusion}. 
\Cref{app:biblio} contains bibliographical notes on statistical optimal transport which readers unfamiliar with the field might find of interest to understand the context of the paper. It is a complement to \cref{subsec: OT preliminaries}.

\subsection{Notation}

We gather here some standard notation which is used throughout the paper. For a finite positive Radon reference measure $\varrho$, let $L^p(\state,\Kb ;\varrho)$, $p\in[1,\infty]$ and $\Kb\in\{\Rb,\Cb\}$, denote the space of functions $f:\state\to\Kb$ that are $p\textsuperscript{th}$-power integrable. When $\state$, $\Kb$, or $\varrho$ are clear from context we will drop them for brevity; by default $\Kb=\Cb$ is dropped. We allow complex functions ($\Kb=\Cb$) to deal with the Fourier transforms, but this has no noticeable effect as it does not impact the Hilbertian structure of the space $L^2(\Rb^d,\Kb;\varrho)$. 

In the following, let $\langle \cdot\vert\cdot\rangle_{L^2(\Rb^d;\varrho)}$ denote the inner product on $L^2(\Rb^d;\varrho)$, $\langle\cdot\vert\cdot\rangle_{\ell_2(\Rb^d)}$ the one on $\ell^2(\Rb^d,\Kb)$ (the space of square integrable $\Kb$-valued sequences) with $\norm{\cdot}_{\ell_2(\Rb^d)}$ denoting its associated norm. On $\Rb^d$, $\langle\cdot\vert\cdot\rangle_{2}$ denotes the inner product, $\norm{\cdot}_2$ the Euclidean norm. As before, let  $\langle\cdot\vert\cdot\rangle$ denote the duality pairing between $\measures(\Rb^d)$ (the space of finite Radon measures) and $\Cc_0(\Rb^d)$ (the space functions vanishing at infinity). The operator norm of a linear operator $A$ (in finite or infinite dimension) is denoted by $\norm{A}_{\op}$.

Throughout, all probabilistic statements are understood as holding in the filtered probability space $(\Omega,\Fc_\infty,\Fb,\Pb)$, in which $\Fb:={(\Fc_t)}_{t\in\Nb}$ is the natural filtration of ${(\xi_t)}_{t\in\Nb}$, and $\Fc_\infty=\lim_{t\to\infty}\Fc_t$.

For two measures $(\gamma,\rho)\in\measures(\Rb^d)$, $\gamma\ll\rho$ denotes that $\gamma$ is absolutely continuous with respect to $\rho$, in which case we use ${\de \gamma}/{\de \rho}$ to denote the Radon-Nikodym derivative (a.k.a.\ the density) of $\gamma$ with respect to $\rho$. Moreover, $\tensor$ denotes the tensor product, i.e.\ $\mu\tensor\nu$ is the product measure of $\mu$ and $\nu$.
\section{Elements of Fourier Analysis}\label{app: fourier}

We regroup in this appendix some standard results in Fourier analysis which will allow us to rigourously define the construction of \cref{subsec: measure valued actions}. Note that we will eschew the standard notations $\hat f$ and $\hat\gamma$ in favour of $\fourier f$ and $\fourier\gamma$ to avoid confusion with the least-squares estimator, which we will denote using its standard hat. 

\subsection{Formal definitions}\label{subsec: fourier defs}

To define the Fourier transform on $L^2(\Rb^d;\varrho)$, we will extend it from a dense subspace (see \cref{def: schwartz space}) of $L^2(\Rb^d;\varrho)$ to the whole space. This technical construction arises as a consequence of the fact that $L^2(\Rb^d;\varrho)\not\subset L^1(\Rb^d;\varrho)$ when $\varrho$ need not be finite\footnote{Despite using finite measures in the main text, we provide the general case to avoid hiding difficulties.}, meaning the right-hand side of~\eqref{eq: fourier transform} may not be defined and $\fourier$ is ill-posed on $L^2(\Rb^d;\varrho)$, despite the fact that~\eqref{eq: fourier transform} is well-posed for $f\in L^1(\Rb^d;\varrho)$. The following is summarised from~\citet[Ch.5--6]{constantin_fourier_2016}, refer therein for a more detailed treatment or, e.g.,\ to \citet{folland_fourier_1992}. 

\begin{definition}\label{def: schwartz space}
    The Schwartz space $\Sc(\Rb^d)$ is defined as 
    \[ 
        \left\{\phi\in\Cc^\infty(\Rb^d;\Cb) : \sup_{x\in\Rb^d}\abs{x^\alpha\partial_\beta\phi(x)}<+\infty \mbox{ for any } \alpha,\beta\in\Nb^d\right\}
    \]
    in which $\alpha,\beta\in\Nb^d$ are multi-indices so that $x^{\alpha}:={(x_i^{\alpha_i})}_{i=1}^d$, and $\partial_\beta:=\partial_{x_1}^{\beta_1}\cdots\partial_{x_d}^{\beta_d}$.
\end{definition}

Note that $\Sc(\Rb^d)$ is a dense subspace of $L^2(\Rb^d;\varrho)$ and $L^1(\Rb^d;\varrho)$ as it contains $\Cc^\infty_c(\Rb^d;\Cb)$ the space of infinitely-differentiable compactly-supported (a.k.a.\ test) functions, which is dense in both $L^2(\Rb^d;\varrho)$ and  $L^1(\Rb^d;\varrho)$.

\begin{theorem}[{\cite[Thm.~6.1]{constantin_fourier_2016}}]\label{thm: constantin def fourier on Schwartz}
    Consider the Fourier transform operator  $\fourier$ on the Schwartz space, with 
    \begin{align}
        \fourier: \phi\in\Sc(\Rb^d)\mapsto\int \phi(x)e^{-2\pi i\langle x\vert \cdot\rangle_2}\de\varrho(x)\,.\label{eq: fourier transform}
    \end{align}
    This operator maps $\Sc(\Rb^d)$ maps onto itself and is an isometric bijection. Moreover, 
    \begin{align}
        \fourier^{-1}=\fourier\reflection\,,\label{eq: inverse fourier (formal operator notation)}
    \end{align}    
    in which $\reflection:\phi\in\Sc(\Rb^d)\mapsto \phi(-\cdot)\in\Sc(\Rb^d)$ is the \emph{reflection} operator.
\end{theorem}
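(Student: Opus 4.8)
\mypar{Proof proposal}. This is the classical Fourier inversion and Plancherel package on the Schwartz class, so the plan is the textbook one, carried out in three steps: (i) show $\fourier$ maps $\Sc(\Rb^d)$ into itself; (ii) prove the inversion formula on $\Sc(\Rb^d)$, which simultaneously identifies $\fourier^{-1}$ with $\fourier\reflection$ and yields bijectivity; (iii) show $\fourier$ preserves the $L^2(\Rb^d)$ inner product on the dense subspace $\Sc(\Rb^d)$, hence is an isometry. Everything is elementary because all integrals in sight are absolutely convergent over $\Sc(\Rb^d)$.

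For step (i), I would first record the two exchange identities obtained by differentiating under the integral sign and by integrating by parts, both legitimate since $x\mapsto x^\beta\phi(x)$ is integrable and decays at infinity for Schwartz $\phi$: namely $\partial_\beta(\fourier\phi)=\fourier\bigl((-2\pi i\,x)^\beta\phi\bigr)$ and $\fourier(\partial_\alpha\phi)(\xi)=(2\pi i\,\xi)^\alpha\fourier\phi(\xi)$. Composing these shows $\xi^\alpha\partial_\beta(\fourier\phi)(\xi)$ equals, up to a multiplicative constant, the Fourier transform of $\partial_\alpha\bigl((-2\pi i\,x)^\beta\phi\bigr)\in\Sc(\Rb^d)\subset L^1(\Rb^d)$, hence is bounded uniformly in $\xi$. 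As $\alpha,\beta\in\Nb^d$ were arbitrary, $\fourier\phi\in\Sc(\Rb^d)$; the operator $\reflection$ trivially preserves $\Sc(\Rb^d)$ as well.

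Step (ii) is the crux. The claim $\fourier^{-1}=\fourier\reflection$ is equivalent to the inversion formula $\phi(x)=\int\fourier\phi(\xi)e^{2\pi i\langle x\vert\xi\rangle_2}\,\de\xi$ for every $\phi\in\Sc(\Rb^d)$, which I would establish by Gaussian regularisation. First one checks directly that the Gaussian $g(x)=e^{-\pi\norm{x}_2^2}$ satisfies $\fourier g=g$ (reduce to $d=1$ by the product structure and complete the square, or use that $g$ is the unique solution of $g'+2\pi x g=0$ with $g(0)=1$ and that $\fourier$ maps this ODE to itself), whence $\fourier\bigl(g(\epsilon\,\cdot)\bigr)(\xi)=\epsilon^{-d}g(\xi/\epsilon)$. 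The right-hand side of the inversion formula is absolutely convergent (since $\fourier\phi\in\Sc(\Rb^d)\subset L^1(\Rb^d)$) but not directly computable, so one inserts the convergence factor $g(\epsilon\xi)$ and applies the multiplication formula $\int(\fourier u)\,v=\int u\,(\fourier v)$ — a one-line Fubini valid for $u,v\in L^1(\Rb^d)$ — in the form $\int\fourier\phi(\xi)\,h(\xi)\,\de\xi=\int\phi(y)\,\fourier h(y)\,\de y$ with $h(\xi)=e^{2\pi i\langle x\vert\xi\rangle_2}g(\epsilon\xi)$. A direct computation gives $\fourier h(y)=\epsilon^{-d}g\bigl((y-x)/\epsilon\bigr)$, which is an approximate identity of unit mass concentrating at $x$, so the right-hand side tends to $\phi(x)$ as $\epsilon\downarrow 0$ by continuity of $\phi$, while the left-hand side tends to $\int\fourier\phi(\xi)e^{2\pi i\langle x\vert\xi\rangle_2}\de\xi$ by dominated convergence (the integrand is dominated by $\abs{\fourier\phi}\in L^1(\Rb^d)$); this proves the inversion formula. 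Since $\fourier$ and $\reflection$ both preserve $\Sc(\Rb^d)$ and commute (both $\fourier\reflection\phi$ and $\reflection\fourier\phi$ equal $\xi\mapsto\int\phi(y)e^{2\pi i\langle y\vert\xi\rangle_2}\de y$), the inversion formula reads $(\fourier\reflection)\fourier=\fourier(\fourier\reflection)=\Id$ on $\Sc(\Rb^d)$, so $\fourier$ is a bijection of $\Sc(\Rb^d)$ onto itself with inverse $\fourier\reflection$.

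For step (iii), conjugating the defining integral and substituting $x\mapsto-x$ gives $\overline{\fourier\eta}=\fourier(\reflection\bar\eta)$, so the multiplication formula together with $\fourier\fourier\reflection=\Id$ (from step (ii)) yields, for all $\phi,\eta\in\Sc(\Rb^d)$, the chain $\int\fourier\phi\,\overline{\fourier\eta}=\int\fourier\phi\,\fourier(\reflection\bar\eta)=\int\phi\,\fourier\fourier(\reflection\bar\eta)=\int\phi\,\bar\eta$, i.e.\ $\langle\fourier\phi\vert\fourier\eta\rangle_{L^2(\Rb^d)}=\langle\phi\vert\eta\rangle_{L^2(\Rb^d)}$; taking $\eta=\phi$ gives $\norm{\fourier\phi}=\norm{\phi}$. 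The one genuine obstacle is the inversion formula in step (ii): the iterated integral representing $\fourier\fourier\phi$ is \emph{not} absolutely convergent in both variables, so Fubini cannot be applied naively, and the Gaussian — being at once a Schwartz function and a fixed point of $\fourier$ — is precisely the device that supplies a legitimate approximate identity; the rest is bookkeeping with the rapid decay of Schwartz functions.
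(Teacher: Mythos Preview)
The paper does not give its own proof of this statement: it is quoted verbatim as \cite[Thm.~6.1]{constantin_fourier_2016} and the surrounding text explicitly says the material is ``summarised from~\cite[Ch.5--6]{constantin_fourier_2016}, refer therein for a more detailed treatment''. So there is nothing to compare against; the paper treats this as a black-box classical fact.

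Your proposal is a correct and complete sketch of the standard textbook argument (exchange identities for $\fourier:\Sc\to\Sc$, Gaussian regularisation plus the multiplication formula for inversion, then Parseval from inversion and the multiplication formula), and it is in fact essentially the proof one finds in Constantin's book or any equivalent reference. The identification of the one genuine analytic subtlety --- that the double integral in $\fourier\fourier\phi$ is not absolutely convergent so Fubini fails naively, and that the Gaussian convergence factor is what rescues it --- is exactly right. One cosmetic remark: the paper's displayed formula~\eqref{eq: fourier transform} writes $\de\varrho(x)$ rather than Lebesgue $\de x$, which is a notational artefact of the surrounding section (the cited theorem in Constantin is of course for Lebesgue measure, as your argument correctly assumes); you may want to flag or silently correct this if you reproduce the proof.
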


\begin{theorem}[{\cite[Thm.~6.4]{constantin_fourier_2016}}]\label{thm: constantin fourier extension}
    The fourier transform $\fourier$ can be extended to a unitary operator on $L^2(\Rb^d;\varrho)$ and~\eqref{eq: inverse fourier (formal operator notation)} holds on $L^2(\Rb^d;\varrho)$ for this extension.
\end{theorem}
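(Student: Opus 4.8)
The plan is to promote $\fourier$ from the dense subspace $\Sc(\Rb^d)$ to all of $\Hc:=L^2(\Rb^d;\varrho)$ by the standard bounded-linear-transformation (continuity) argument, and then to upgrade the extension to a unitary by exhibiting an explicit two-sided inverse. First I would record the density input: $\Sc(\Rb^d)$ is a linear subspace of $\Hc$ and it is dense, since it contains $\Cc^\infty_c(\Rb^d;\Cb)$, which is dense in $L^2(\Rb^d;\varrho)$ (noted right after \cref{def: schwartz space}). By \cref{thm: constantin def fourier on Schwartz}, the restriction $\fourier|_{\Sc(\Rb^d)}$ is a linear isometry of $(\Sc(\Rb^d),\norm{\cdot}_{\Hc})$ onto itself, hence in particular uniformly continuous there; this is the only analytic fact used.

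\textbf{Extension by continuity.} For $f\in\Hc$ I would pick any sequence $(\phi_n)_{n\in\Nb}\subset\Sc(\Rb^d)$ with $\phi_n\to f$ in $\Hc$. Since $\norm{\fourier\phi_n-\fourier\phi_m}_{\Hc}=\norm{\phi_n-\phi_m}_{\Hc}$, the sequence $(\fourier\phi_n)_{n\in\Nb}$ is Cauchy in the complete space $\Hc$, so I may define $\fourier f:=\lim_{n}\fourier\phi_n$. Routine checks then show: the limit does not depend on the chosen approximating sequence; the resulting map $\fourier:\Hc\to\Hc$ is linear and restricts to the original $\fourier$ on $\Sc(\Rb^d)$; it is isometric, since $\norm{\fourier f}_{\Hc}=\lim_n\norm{\fourier\phi_n}_{\Hc}=\lim_n\norm{\phi_n}_{\Hc}=\norm{f}_{\Hc}$; and it is the unique continuous extension, because two continuous maps on $\Hc$ agreeing on the dense set $\Sc(\Rb^d)$ coincide.

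\textbf{Unitarity and the inversion formula.} Because $\fourier$ is a bijective isometry of $\Sc(\Rb^d)$, its set-theoretic inverse is again an isometry of $\Sc(\Rb^d)$, and by \cref{thm: constantin def fourier on Schwartz} this inverse equals $\fourier\reflection$; applying the extension procedure above to the isometry $(\fourier\reflection)|_{\Sc(\Rb^d)}$ produces a bounded linear operator $S:\Hc\to\Hc$. On the dense set $\Sc(\Rb^d)$ one has the literal operator identities $\fourier\circ(\fourier\reflection)=\Id$ and $(\fourier\reflection)\circ\fourier=\Id$, each factor mapping $\Sc(\Rb^d)$ into itself; since a composition of bounded operators is bounded and these compositions equal $\Id_{\Hc}$ on a dense set, they equal $\Id_{\Hc}$ everywhere, i.e.\ $\fourier\circ S=\Id_{\Hc}=S\circ\fourier$. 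Hence $\fourier$ is invertible with $\fourier^{-1}=S=\overline{\fourier\reflection}$, which is precisely~\eqref{eq: inverse fourier (formal operator notation)} on $L^2(\Rb^d;\varrho)$; and a surjective linear isometry of a Hilbert space is unitary (surjectivity from the right inverse $S$, and polarisation upgrades norm-preservation to inner-product-preservation), completing the proof.

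\textbf{Main obstacle.} There is no substantive obstacle: all the real content sits in \cref{thm: constantin def fourier on Schwartz}. The only points that require care are bookkeeping ones — verifying that the limit defining $\fourier f$ is independent of the approximating sequence, and verifying that the Schwartz-level identity $\fourier^{-1}=\fourier\reflection$ survives passage to the closure, which reduces to observing that each factor in the relevant composition is a bounded operator preserving $\Sc(\Rb^d)$, so the composition is continuous and equals the identity on a dense set.
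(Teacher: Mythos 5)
Your proposal is correct: it is the standard density/continuity (BLT) extension of the Schwartz-space isometry from \cref{thm: constantin def fourier on Schwartz}, followed by the routine check that the extension of $\fourier\reflection$ is a two-sided inverse, so the surjective isometry is unitary. The paper does not prove this statement itself — it imports it verbatim as \cite[Thm.~6.4]{constantin_fourier_2016} — and your argument is precisely the textbook proof underlying that citation, with all substantive content residing in the quoted Schwartz-space result, as you note.
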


The formal inversion property~\eqref{eq: inverse fourier (formal operator notation)} is easily shown to recover the classical inversion formula 
\begin{align}
    f(x)=\int \fourier f(\xi)e^{2\pi i\langle x\vert \xi\rangle}\de\varrho(\xi) \mbox{ for $\varrho$-a.e. }x\in\state\, \label{eq: inverse fourier transform}
\end{align}
as soon as $f\in L^1(\Rb^d;\varrho)\cap L^2(\Rb^d;\varrho)$ (recall that, in our case, $\varrho$ is a finite measure so $L^2(\Rb^d;\varrho)\subset L^1(\Rb^d;\varrho)$ and the inversion formula always holds). If $\varrho$ is only $\sigma$-finite (e.g.\ the Lebesgue measure), one must take slightly higher care. Namely 
the difference between~\eqref{eq: inverse fourier (formal operator notation)} and~\eqref{eq: inverse fourier transform} is whether the integral in~\eqref{eq: inverse fourier transform} is well defined for $f\in L^2(\Rb^d;\varrho)$, which is not guaranteed. 

This technicality reflects the limits used in the definition of the extension which are hidden by the abstract statement of \cref{thm: constantin fourier extension}. Nevertheless, since the Schwartz space $\Sc(\Rb^d)$ is dense in both $L^1(\Rb^d;\varrho)$ and $L^2(\Rb^d;\varrho)$, we can always take an arbitrarily close function in $\Sc(\Rb^d)$ and invert that, the result will remain arbitrarily close in $L^2(\Rb^d;\varrho)$.

The Schwartzian framework is a robust one for Fourier analysis more generally, which can be used to extend $\fourier$ beyond $L^2(\Rb^d;\varrho)$. In particular, it can be used to unify the definitions we gave for the Fourier transform of a function and a measure \cite[refer to][\S~6.1.2, for more details]{constantin_fourier_2016}. Precisely, one extends to the topological dual of $\Sc(\Rb^d)$ (the space of tempered distributions $\Sc'(\Rb^d)$), which includes $\measures(\Rb^d)$ and $L^2(\Rb^d;\varrho)$ as sub-spaces.

A fundamental consequence of the various formulations of the  Fourier transform is that measures whose transforms are in $L^2(\Rb^d;\varrho)$ are exactly those which have an $L^2(\Rb^d;\varrho)$ density with respect to $\varrho$. We will continue to denote the density of a measure $\mu$ with respect to $\varrho$ using the Radon-Nikodym notation $\de\mu/\de\varrho$, even when this tempered distribution can be identified with a function.

\begin{lemma}\label{lemma: fundamental facts about fourier transform of measure}
    Let $\gamma\in\measures(\state)$ be a finite Radon measure, if it has density with respect to $\varrho$ and $\de\gamma/\de\varrho\in L^2(\Rb^d;\varrho)$, then 
    \[
        \fourier\gamma = \fourier \frac{\de\gamma}{\de\rho}\in L^2(\Rb^d;\varrho)\,.
    \]
    Conversely, if $\fourier\gamma\in L^2(\Rb^d;\varrho)$, then $\gamma$ has a density with respect to $\varrho$ and $\de\gamma
/\de\varrho\in L^2(\Rb^d;\varrho)$.
\end{lemma}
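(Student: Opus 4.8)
The statement is a standard fact about the Fourier transform on the space of tempered distributions $\Sc'(\Rb^d)$, so the plan is to reduce everything to the unitarity of $\fourier$ on $L^2(\Rb^d;\varrho)$ (\cref{thm: constantin fourier extension}) together with the compatibility of the distributional Fourier transform with the $L^2$-one on the common domain. For the forward direction, suppose $\gamma$ has a density $h:=\de\gamma/\de\varrho\in L^2(\Rb^d;\varrho)$. Then, as tempered distributions, $\gamma$ and $h\,\varrho$ coincide: for every test function $\phi\in\Sc(\Rb^d)$ one has $\int\phi\,\de\gamma=\int\phi\,h\,\de\varrho$. Since the Fourier transform on $\Sc'(\Rb^d)$ is defined by duality ($\langle\fourier\gamma,\phi\rangle=\langle\gamma,\fourier\phi\rangle$) and $\fourier$ maps $\Sc(\Rb^d)$ into itself (\cref{thm: constantin def fourier on Schwartz}), the two distributions $\fourier\gamma$ and $\fourier h$ are equal. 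But $h\in L^2(\Rb^d;\varrho)$, so by \cref{thm: constantin fourier extension} $\fourier h\in L^2(\Rb^d;\varrho)$, whence $\fourier\gamma=\fourier h\in L^2(\Rb^d;\varrho)$, which is exactly the claim.

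For the converse, suppose $\fourier\gamma\in L^2(\Rb^d;\varrho)$. Since $\varrho$ is a finite measure we have $L^2(\Rb^d;\varrho)\subset L^1(\Rb^d;\varrho)$, and $\fourier$ is a unitary bijection on $L^2(\Rb^d;\varrho)$ with inverse $\fourier\reflection$ (\cref{thm: constantin fourier extension}), so there is a well-defined element $h:=\fourier^{-1}(\fourier\gamma)\in L^2(\Rb^d;\varrho)$. It remains to identify $\gamma$ with $h\,\varrho$ as distributions, i.e.\ to show $\gamma=h\,\varrho$ in $\Sc'(\Rb^d)$; this then yields $\gamma\ll\varrho$ with $\de\gamma/\de\varrho=h\in L^2(\Rb^d;\varrho)$. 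To do so, note that applying $\fourier$ (as a bijection on $\Sc'$) to both $\gamma$ and $h\,\varrho$ gives the same distribution: by the forward direction $\fourier(h\,\varrho)=\fourier h=\fourier\gamma$. Since $\fourier$ is injective on $\Sc'(\Rb^d)$ (it is a bijection there, by \cref{thm: constantin def fourier on Schwartz} and density), we conclude $\gamma=h\,\varrho$, completing the proof.

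\textbf{Main obstacle.} The only delicate point is the interplay between the abstract distributional Fourier transform and the concrete $L^2(\Rb^d;\varrho)$ extension: one must be careful that the $\fourier h$ obtained from \cref{thm: constantin fourier extension} (defined by an $L^2$-limiting procedure on $\Sc(\Rb^d)$) genuinely agrees with the distributional $\fourier(h\,\varrho)$ when $h\,\varrho$ is viewed as an element of $\Sc'(\Rb^d)$. This is a routine consistency check — both are defined so as to agree on the dense subspace $\Sc(\Rb^d)$, on which Parseval's identity $\int(\fourier\phi)\,\overline{\fourier\psi}\,\de\varrho=\int\phi\,\overline{\psi}\,\de\varrho$ holds — but it is where the actual content lies, and it is what \cite[\S 6.1.2]{constantin_fourier_2016} supplies. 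Everything else is bookkeeping with the duality pairing and the finiteness of $\varrho$ (which ensures $L^2\subset L^1$ and so the inversion formula~\eqref{eq: inverse fourier transform} is unambiguous).
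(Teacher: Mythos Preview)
Your argument is correct. The forward direction matches the paper's one-line claim (``direct consequence of the definitions''), which you unpack via the distributional duality pairing. For the converse, the paper proceeds in two steps: it first outsources the absolute-continuity claim $\gamma\ll\varrho$ to \cite[Lemma~1.1]{fournier_absolute_2010} (remarking that the topological minutiae are not worth reproducing), and then invokes Plancherel to get $\de\gamma/\de\varrho\in L^2$. You instead give a single self-contained argument: construct the candidate density $h:=\fourier^{-1}(\fourier\gamma)\in L^2$ directly and identify $\gamma$ with $h\,\varrho$ via injectivity of $\fourier$ on $\Sc'(\Rb^d)$. Your route is arguably cleaner and avoids the external reference, at the price of the consistency check you flag as the main obstacle --- namely that the $L^2$, measure, and distributional Fourier transforms all agree on their common domain --- which the paper also leans on implicitly through \cite[\S 6.1.2]{constantin_fourier_2016}. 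One small imprecision: the bijectivity of $\fourier$ on $\Sc'(\Rb^d)$ follows from \cref{thm: constantin def fourier on Schwartz} by \emph{duality} (transposing the bijection on $\Sc$), not by ``density'' as you write.
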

\begin{proof}
    The first part is a direct consequence of the definitions of the Fourier transforms of a measure and an $L^2(\Rb^d;\varrho)$ function. For the converse, the fact that $\fourier\gamma
\in L^2(\Rb^d;\varrho)$ implies $\gamma \ll \varrho$ involves some technical minutiae due to the different topologies $\measures(\state)$ can be equipped with, which we won't reproduce for conciseness, refer, e.g., to Lemma~1.1 of \citet{fournier_absolute_2010}. That the density is then in $L^2(\varrho)$ is a simple consequence of Plancherel's theorem:
    \[
        \norm{\frac{\de\gamma}{\de\rho}}_{L^2(\Rb^d;\rho)} =\int_{\Rb^d}\abs{F\gamma(\xi)}^2\de\rho(\xi) = \norm{F\gamma}_{L^2(\Rb^d;\rho)}\,.%\qedhere
    \]
\end{proof}

\subsection{Fourier transform of measures}

Let $\Cc_0(\Rb^d,\Kb)$ denote the space of continuous functions from $\Rb^d$ to $\Kb\in\{\Rb;\Cb\}$, $\measures(\Rb^d)$ denote the space of finite Radon measures over $\Rb^d$, and let us define the Fourier operator on this space by using the same notation, i.e.\ $\fourier: \gamma\in\measures(\Rb^d)\mapsto\fourier\gamma\in\Cc_0(\Rb^d;\Cb)$ with
\begin{align}
    \fourier\gamma: \xi\in\Rb^d \mapsto \int e^{-2\pi i\langle x\vert \xi\rangle_2}\de\gamma(x)\,
    .%\label{eq: fourier transform of measure}
\end{align}

The Riesz-Markov theorem shows that $(\measures^*(\Rb^d),\norm{\cdot}_\infty)$, the space of finite signed Radon measures on $\Rb^d$ (endowed with the total variation norm $\norm{\cdot}_\infty$), is the topological dual of $(\Cc_0(\Rb^d),\norm{\cdot}_\infty)$, the space of continuous functions which vanish at infinity (endowed with the supremum norm $\norm{\cdot}_\infty$), refer, e.g., to~\citet[p.~242]{constantin_fourier_2016}. This duality is characterised by the pairing
\[
    \langle \cdot\vert\cdot\rangle: (f,\gamma
)\in\Cc_0(\Rb^d)\x\measures^*(\Rb^d)\mapsto \int f \de \gamma
 \in \Rb\,.
\]
This pairing applies in particular to all functions $f\in\Cc(\state;\Rb)$ if $\state$ is compact and to all positive finite Radon measures $\gamma\in\measures^+(\state)$, and we will use the pairing notation in this case too. In general we will use the notation for arbitrary functions, understood that it will be well defined (see also Remark \ref{remark: assumption L2 case}). In particular:
\[
\kant(\mu,\nu,c) = \inf_{\pi\in\Pi(\mu,\nu)}\langle c\vert \pi\rangle\,.
\] 

\begin{lemma}\label{lemma: finiteness of IP}
    For any finite positive Radon measures $\rho\in\measures^+(\Rb^d)$ and $\gamma\in\measures^+(\Rb^d)$  with $\de\gamma/\de\rho\in L^2(\Rb^d;\rho)$, and any $f\in L^2(\Rb^d;\rho)\cap L^1(\Rb^d;\rho)$, we have
    \[
        \langle f\vert \gamma\rangle = \langle \fourier\reflection f\vert \fourier\gamma\rangle_{L^2(\Rb^d;\rho)}\,
    \]
    and 
    \[
        \abs{\langle f\vert \gamma\rangle}\le \norm{f}_{L^2(\Rb^d;\rho)}{\rho(\Rb^d)}{\gamma(\Rb^d)}^2\,.
    \]
\end{lemma}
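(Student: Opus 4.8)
\textbf{Proof proposal for \cref{lemma: finiteness of IP}.}

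The plan is to prove the two claims in sequence, reducing both to tools that are already available: \cref{lemma: fundamental facts about fourier transform of measure}, the Plancherel/Parseval identity for the unitary extension of $\fourier$ (\cref{thm: constantin fourier extension}), and the elementary bounds on the Fourier transform of a finite measure. First I would establish the identity $\langle f\vert\gamma\rangle=\langle\fourier\reflection f\vert\fourier\gamma\rangle_{L^2(\Rb^d;\rho)}$. Under the hypotheses, $f\in L^2(\Rb^d;\rho)\cap L^1(\Rb^d;\rho)$ so $\fourier f$ is well defined by~\eqref{eq: fourier transform}, and $\gamma$ has a density $g:=\de\gamma/\de\rho\in L^2(\Rb^d;\rho)$, so by the first part of \cref{lemma: fundamental facts about fourier transform of measure} we have $\fourier\gamma=\fourier g\in L^2(\Rb^d;\rho)$. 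Writing $\langle f\vert\gamma\rangle=\int f\,\de\gamma=\int f g\,\de\rho=\langle \bar f\vert g\rangle_{L^2(\Rb^d;\rho)}$ (taking care with complex conjugation in the Hermitian inner product; for real-valued $f$ this is immediate), I would then apply Parseval's identity for the unitary operator $\fourier$ to get $\langle \bar f\vert g\rangle_{L^2}=\langle \fourier\bar f\vert\fourier g\rangle_{L^2}$, and finally identify $\fourier\bar f$ with $\fourier\reflection f$ up to the conjugation bookkeeping, using $\overline{\fourier f}=\fourier\reflection f$ (a standard consequence of $e^{-2\pi i\langle x\vert\xi\rangle}$ having conjugate $e^{2\pi i\langle x\vert\xi\rangle}$ and the change of variables $x\mapsto -x$ encoded in $\reflection$). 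Combined with $\fourier\gamma=\fourier g$ this yields the stated formula.

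Next I would prove the bound $\abs{\langle f\vert\gamma\rangle}\le\norm{f}_{L^2(\Rb^d;\rho)}\abs{\rho(\Rb^d)}\abs{\gamma(\Rb^d)}$. The cleanest route avoids the Fourier side entirely: estimate $\abs{\langle f\vert\gamma\rangle}=\abs{\int f\,\de\gamma}\le\int\abs{f}\,\de\abs{\gamma}\le\abs{\gamma(\Rb^d)}\,\norm{f}_{L^\infty}$ if $f$ were bounded, but since $f$ is only $L^2$ I instead want the $L^2(\rho)$ norm. Write $\abs{\langle f\vert\gamma\rangle}=\abs{\int f g\,\de\rho}\le\norm{f}_{L^2(\Rb^d;\rho)}\norm{g}_{L^2(\Rb^d;\rho)}$ by Cauchy–Schwarz, and then bound $\norm{g}_{L^2(\Rb^d;\rho)}$. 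By Plancherel (as in the proof of \cref{lemma: fundamental facts about fourier transform of measure}), $\norm{g}_{L^2(\Rb^d;\rho)}=\norm{\fourier\gamma}_{L^2(\Rb^d;\rho)}$, and since $\abs{\fourier\gamma(\xi)}=\abs{\int e^{-2\pi i\langle x\vert\xi\rangle}\de\gamma(x)}\le\abs{\gamma(\Rb^d)}$ pointwise, we get $\norm{\fourier\gamma}_{L^2(\Rb^d;\rho)}\le\abs{\gamma(\Rb^d)}\,\rho(\Rb^d)^{1/2}$. This gives $\abs{\langle f\vert\gamma\rangle}\le\norm{f}_{L^2(\Rb^d;\rho)}\,\abs{\gamma(\Rb^d)}\,\rho(\Rb^d)^{1/2}$, which is stronger than the claimed inequality whenever $\rho(\Rb^d)\ge 1$ (e.g.\ for a probability measure); to land exactly on the stated form with $\abs{\rho(\Rb^d)}$ rather than $\abs{\rho(\Rb^d)}^{1/2}$ I would simply note $\rho(\Rb^d)^{1/2}\le\max(1,\rho(\Rb^d))$ is not quite what is written, so more likely the intended bound uses $\norm{g}_{L^2(\Rb^d;\rho)}\le\abs{\gamma(\Rb^d)}$ directly via $\int g^2\,\de\rho\le \bigl(\sup g\bigr)\int g\,\de\rho$-type reasoning is unavailable without boundedness, and one instead keeps the $\rho(\Rb^d)^{1/2}$ factor or absorbs it; I would present the Plancherel-based chain and remark that $\rho(\Rb^d)^{1/2}\le\rho(\Rb^d)$ when $\rho$ is a probability measure (the case of interest, $\rho=\varrho$), which suffices for all uses in the paper.

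I expect the only genuine subtlety — the ``main obstacle'' — to be the complex-conjugation bookkeeping in the first identity: one must be careful whether the $L^2$ inner product is taken conjugate-linear in the first or second argument, and correspondingly whether the reflection operator $\reflection$ or a conjugation appears, so that the formula $\langle f\vert\gamma\rangle=\langle\fourier\reflection f\vert\fourier\gamma\rangle_{L^2(\Rb^d;\rho)}$ comes out with exactly the stated placement of $\reflection$. For real-valued $f$ and $\gamma$ (the case relevant to costs and couplings) everything collapses and the identity is just Parseval plus $\overline{\fourier f}=\fourier\reflection f$; I would state it in that generality and note the complex case follows by linearity. Everything else (existence of the density, membership of $\fourier\gamma$ in $L^2$, pointwise bound on $\fourier\gamma$, Cauchy–Schwarz) is routine and quotable from the results already established above.
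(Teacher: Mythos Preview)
Your proposal is correct and takes a genuinely different route from the paper. The paper's proof expands $f$ via the Fourier inversion formula~\eqref{eq: inverse fourier transform} to obtain the double integral $\int\!\int \fourier f(\xi)e^{2\pi i\langle x\vert\xi\rangle}\de\rho(\xi)\de\gamma(x)$, applies Cauchy--Schwarz on the product measure $\gamma\otimes\rho$ both to derive the bound and to justify Fubini, and then swaps the order of integration so that the inner integral becomes $\overline{\fourier\gamma}$. You instead pass through the density $g=\de\gamma/\de\rho$ to rewrite $\langle f\vert\gamma\rangle$ as an $L^2(\rho)$ pairing and invoke Parseval for the unitary extension of $\fourier$, combined with $\fourier g=\fourier\gamma$ from \cref{lemma: fundamental facts about fourier transform of measure}. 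Your route is arguably cleaner (no double integral, no Fubini), and it makes explicit use of the density hypothesis that the paper's proof only uses implicitly to ensure $\fourier\gamma\in L^2(\rho)$. For the inequality, both arguments are ultimately Cauchy--Schwarz; your observation that one naturally lands on $\rho(\Rb^d)^{1/2}$ rather than $\rho(\Rb^d)$ is correct --- the paper's own derivation yields the same factor (their displayed computation is slightly loose), and since the only application is $\rho=\varrho$ a probability measure, the discrepancy is immaterial. Your remark on the conjugation/reflection bookkeeping is also apt: both derivations in the real-valued case actually produce $\langle\fourier f\vert\fourier\gamma\rangle_{L^2}$, and the appearance of $\reflection$ in the stated identity is a matter of convention that collapses for real $f$ and real $\gamma$.
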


\begin{proof}
    By~\eqref{eq: inverse fourier transform}, 
    \begin{align}
        \langle f\vert\gamma\rangle:=\int f\de \gamma &=\int\int \fourier f(\xi)e^{2\pi i\langle x\vert \xi\rangle}\de\rho(\xi)\de\gamma(x)\,.\label{eq: fourier double integral}
    \end{align}
Let $\varphi:(x,\xi)\mapsto e^{2\pi i \langle x\vert\xi\rangle}$. Using~\eqref{eq: fourier double integral}, since by the Cauchy-Schwartz inequality
\begin{align}
    \abs{\langle f\vert\gamma\rangle} &\le \norm{Ff}_{L^2(\Rb^d\x\Rb^d;\gamma\tensor\rho)}\norm{1}_{L^2(\Rb^d\x\Rb^d;\gamma\tensor\rho)}\notag\\
    &= \norm{Ff}_{L^2(\Rb^d;\rho)}\gamma{(\Rb^d)}^2\rho(\Rb^d)<+\infty\label{eq: bound of IP in fourier space}\,,
\end{align}
the integrand in~\eqref{eq: fourier double integral} is $\gamma\tensor\rho$-integrable, and thus we can apply the Fubini-Lebesgue theorem to obtain
\begin{align}
    \langle f\vert\gamma\rangle&= \int \fourier f(\xi) e^{2\pi i\langle x\vert \xi\rangle}\de[\gamma\tensor\rho](\xi,x)=\langle Ff\vert \varphi\rangle_{L^2(\Rb^d\times\Rb^d;\gamma\tensor\rho)}\,.\notag
\end{align}
While, at the same time,
    \begin{align}
        \langle f\vert\gamma\rangle&= \int \fourier f(\xi)\int e^{2\pi i\langle x\vert \xi\rangle}\de\gamma(x)\de\rho(\xi)\notag\\
        &= \langle \fourier\reflection f\vert \fourier \gamma\rangle_{L^2(\Rb^d;\rho)}.\notag
    \end{align}
By~\eqref{eq: bound of IP in fourier space}, we have once more:
    \begin{align}
        \abs{\langle f\vert\gamma\rangle}=\abs{\langle \fourier\reflection f\vert \fourier \gamma\rangle_{L^2(\Rb^d;\rho)}} \le \norm{Ff}_{L^2(\Rb^d;\rho)}\gamma{(\Rb^d)}^2\rho(\Rb^d)\,. %\qedhere
    \notag
    \end{align}
\end{proof}

The benefit of \cref{lemma: finiteness of IP} may not be immediately apparent, but it is revealed when one notices that the $L^2(\Rb^d;\rho)$ inner products and norms considered on the right hand side depend only on the measure $\rho$ and not on $\gamma$. Thus, we are able to assume only integrability of $c^*$ only with respect to our reference measure $\varrho$ (recall~\eqref{eq: def entropy}) and still manipulate the duality product $\langle c^*\vert \gamma\rangle$ for any $\gamma$. In particular, by taking $\varrho=\mu\tensor\nu$ given marginals $\mu$ and $\nu$ and playing $\pi_t$ such that $\entf(c^*,\pi_t)<+\infty$ (recall~\eqref{eq: entropic OT def}) we can reduce $\langle c^*\vert \pi_t\rangle$ to a $L^2(\Rb^d;\varrho)$ inner product, moving our problem to a Hilbert space.

\begin{remark}\label{remark: assumption L2 case}
    \Cref{lemma: finiteness of IP} opens the subject of discussing \cref{asmp: L2 case}. Let us remark that if $S:=\supp(\mu\tensor\nu)$ is compact, continuity of $c^*$ on the closure of $S$ is sufficient to obtain these results. Similarly, if $c^*$ is bounded. However, \cref{asmp: L2 case} allows for many more functions, for instance it allows $c^*:(x,y)=\norm{x-y}^2_2$ if $(\mu,\nu)\in\Ps_2(\Rb^d)$, where $\Ps_2(\Rb^d)$ denotes measures with a finite second moment. This is of value as it covers the Wasserstein distances which are of broad interest. In general, one can develop finer assumptions based on $(\mu,\nu)$ even if $\varrho$ is not finite, but we do not detail this for brevity.
\end{remark}

\section{Complements to algorithmic design}\label{app: alg design}

In this section, we provide complements to the algorithmic design of \cref{sec: algorithm}, notably the construction of the least-squares estimator in the frequency domain and the validity of the confidence sets used in \cref{alg: alg shared}.

\subsection{Confidence sets in $\Pi_\Hs$}\label{app: conf sets in Pi_Hs}

We begin this section by giving a closed form for the regularised least-squares estimator $\hat f_t^\lambda$, which extends the standard finite-dimensional formula to infinite dimension in \cref{prop: least squares estimator}. We then construct confidence sets around this estimator, and prove their uniform coverage in \cref{lemma: probability of confsets unif in N}.

\begin{restatable}{proposition}{propRLSHilbert}
    \label{prop: least squares estimator}
    Assume \cref{asmp: L2 case}, then for any $\lambda>0$, and $t\in\Nb^*$, we have 
    \begin{align}
        \hat f_t^\lambda = {(M_t^*M_t+\lambda\De \Lambda)}^{-1} M^*_t \bm{C}_t\,.\label{eq: reg least squares estimator}
    \end{align}
\end{restatable}

\begin{proof}
    This proof extends the standard arguments for finite-dimensional least-squares, we include it for completeness focusing on the differences owing to infinite dimensions, cf.\ e.g.\ \citet[\S~3.2]{abbasi-yadkori_online_2012}. One first computes the Fréchet derivative of $\Lc_t$, by studying a variation $\delta f\in L^2(\Rb^d;\varrho)$ and
    \[
    \Lc_t[f+\delta f] - \Lc_t[f]\,.
    \]
    One sees that the Fréchet derivative of $\Lc_t$ exists for all $t$ and is given by 
    \[
        f\mapsto \sum_{s=1}^t\left(\langle f\vert a_s\rangle_{L^2(\Rb^d;\varrho)}- C_s \right)a_s +\lambda\De\Lambda f= (M^*_t M_t+\lambda\De\Lambda)f -M^*_t\bm{C}_t\,.
    \]
    Note that the right-hand side is easily checked by expanding the definition of $M_t$ and $M^*_t$, and in doing so one easily checks that $M^*_t$ is indeed the adjoint of $M_t$. Carrying on, by first order optimality, the normal equation is 
    \[
        (M^*_t M_t+ \lambda\De\Lambda)\hat f_\lambda = M^*_t \bm{C}_t\,.
    \]
    Since $M^*_t M_t$ is positive semi-definite and $\De\Lambda$ is positive definite,~\eqref{eq: reg least squares estimator} follows. 
\end{proof}

To turn these confidence sets into a \textit{bona fide} optimistic algorithm, we have to guarantee their validity uniformly in $t\in\Nb$. To this end, define the event
    \begin{align}
        \event_t(\delta):=\left\{\hspace{-1pt}\norm{\hat f_t^\lambda - \fourier c^*}_{\designl_t}\hspace{-1pt}\le \sigma\sqrt{\log\hspace{-3pt}\left(\frac{4\det(\De\Lambda+\lambda^{-1}M_t M^*_t)}{\delta^2}\right)} +{\left(\frac{\lambda}{\norm{\De\Lambda}_{\op}}\right)}^{\hspace{-2pt}\frac12}\hspace{-5pt}\norm{\fourier c^*}_{L^2(\Rb^d;\varrho)}\right\}.\label{eq: event def}
    \end{align}
for $t\in\Nb$. We will rely on the existing arguments of \citet[Cor.~3.6]{abbasi-yadkori_online_2012}, which yields \cref{lemma: probability of confsets unif in N} below. While the confidence set of \cref{lemma: probability of confsets unif in N} lies in the frequency domain, this doesn't pose any difficulty in the proof since $\fourier$ is a isometry on $L^2(\Rb^d;\varrho)$, so we can treat the position and frequency domains interchangeably from an analytical standpoint.

\begin{lemma}[{\cite{abbasi-yadkori_online_2012}}]\label{lemma: probability of confsets unif in N}
    For every $\delta\in(0,1)$, $\lambda>0$, under \cref{asmp: L2 case,asmp: estimate + subG} we have 
    \[  \Pb\left(c^*\in \bigcap_{t\in\Nb}\fourier^{-1}\confset_t(\delta)\right)\ge\Pb\left(\bigcap_{t\in\Nb} \event_t(\delta/2)\right)\ge 1-\frac\delta2 \,.\]
\end{lemma}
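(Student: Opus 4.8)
\textbf{Proof plan for Lemma~\ref{lemma: probability of confsets unif in N}.}

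The statement asserts two inequalities. The second one, $\Pb(\bigcap_{t\in\Nb}\event_t(\delta/2))\ge 1-\delta/2$, is precisely the content of the cited result \citep[Cor.~3.6]{abbasi-yadkori_online_2012}, which gives a uniform-in-time self-normalised concentration bound for the regularised least-squares estimator in a Hilbert space. The plan is therefore to invoke this directly, after checking that our setting meets its hypotheses: namely, that the features $a_t=\fourier\pi_t$ lie in the Hilbert space $\ffset=L^2(\Rb^d;\varrho)$ (which follows since $\pi_t$ is a finite measure, so $\fourier\pi_t$ is bounded, hence in $L^2$ as $\varrho$ is finite — see the discussion after~\eqref{eq: fourier transform of the L2 product} and \cref{lemma: fundamental facts about fourier transform of measure}), that the noise $(\xi_t)$ is conditionally $\sigma$-sub-Gaussian with respect to the filtration $\Fb$ (part of \cref{asmp: estimate + subG}), that the regulariser $\Lambda$ has a strongly positive-definite Fréchet derivative $\De\Lambda$ (the hypothesis~\eqref{eq: Fréchet derivative of Lambda} used to define the estimator in \cref{prop: least squares estimator}), and that $\norm{\fourier c^*}_{L^2(\Rb^d;\varrho)}=\norm{c^*}_{L^2(\Rb^d;\varrho)}\le \ubnorm$ since $\fourier$ is a unitary operator (\cref{thm: constantin fourier extension}) and \cref{asmp: estimate + subG} provides the scale bound. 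With these verified, Cor.~3.6 applies verbatim to yield the bound on $\Pb(\bigcap_t\event_t(\delta/2))$; note the width in~\eqref{eq: confidence set width def} is exactly the radius appearing in the definition~\eqref{eq: event def} of $\event_t(\delta/2)$, so $\event_t(\delta/2)$ is literally the event that $f^*:=\fourier c^*\in\confset_t(\delta)$.

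The first inequality, $\Pb(c^*\in\bigcap_{t\in\Nb}\fourier^{-1}\confset_t(\delta))\ge \Pb(\bigcap_{t\in\Nb}\event_t(\delta/2))$, should then follow from a deterministic (pointwise on $\Omega$) inclusion of events. The key observation is that $\confset_t(\delta)$, defined in~\eqref{eq: confidence set} as the $\designl_t$-ball of radius $\width_t(\delta)$ about $\hat f_t^\lambda$, is designed precisely so that $\{f^*\in\confset_t(\delta)\}$ coincides with $\event_t(\delta/2)$: comparing~\eqref{eq: event def} with~\eqref{eq: confidence set} and~\eqref{eq: confidence set width def}, the radius $\width_t(\delta)$ is defined to match, and $\norm{\fourier c^*}_{L^2(\Rb^d;\varrho)}$ is replaced by its upper bound $\ubnorm$ in $\width_t(\delta)$, so in fact $\event_t(\delta/2)\subseteq\{f^*\in\confset_t(\delta)\}$ (enlarging the radius can only make the ball-membership easier). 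Since $\fourier$ is a bijection on $L^2(\Rb^d;\varrho)$, $c^*\in\fourier^{-1}\confset_t(\delta)$ is equivalent to $\fourier c^*=f^*\in\confset_t(\delta)$. Taking intersections over $t\in\Nb$ preserves the inclusion, and monotonicity of $\Pb$ gives the first inequality; chaining with the second completes the proof.

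The only genuine subtlety — and the step I would be most careful about — is the matching between $\width_t(\delta)$ and the radius in $\event_t(\delta/2)$, specifically tracking the $\delta$ versus $\delta/2$ bookkeeping and the replacement of $\norm{\fourier c^*}_{L^2(\Rb^d;\varrho)}$ by $\ubnorm$. One must confirm that the $\log(4\det(\cdots)/\delta^2)$ term in~\eqref{eq: confidence set width def} is what Cor.~3.6 produces at confidence level $\delta/2$ (i.e.\ that $\log(4\det/\delta^2) = \log(\det/(\delta/2)^2)$, which is an identity), and that the deterministic term $(\lambda/\norm{\De\Lambda}_{\op})^{1/2}\ubnorm$ dominates the corresponding term $(\lambda/\norm{\De\Lambda}_{\op})^{1/2}\norm{\fourier c^*}_{L^2(\Rb^d;\varrho)}$ in~\eqref{eq: event def}, which holds by \cref{asmp: estimate + subG}. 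Everything else is a routine unwinding of definitions; there is no real analytic obstacle here since the heavy lifting (the self-normalised martingale bound in a separable Hilbert space, and the validity of the phase-space least-squares construction) has been done in \cref{subsec: conf sets and RLS} and in \citep{abbasi-yadkori_online_2012}.
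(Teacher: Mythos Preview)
Your proposal is correct and follows essentially the same approach as the paper: invoke \citep[Cor.~3.6]{abbasi-yadkori_online_2012} for the right-hand inequality after verifying its hypotheses, then obtain the left-hand inequality by comparing the radii in~\eqref{eq: event def} and~\eqref{eq: confidence set width def} (using $\ubnorm\ge\norm{\fourier c^*}_{L^2(\Rb^d;\varrho)}$) together with the bijectivity of $\fourier$. Your treatment is in fact more explicit than the paper's, which compresses the comparison step into a single sentence.
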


\begin{proof}
    Recall that $\fourier$ is an isometry on $L^2(\Rb^d;\varrho)$, and so is $\fourier^{-1}$, so $\fourier^{-1}\confset_t$ is a confidence set for $c^*$ in $L^2(\Rb^d;\varrho)$, and it is an ellipsoid of identical radii centred at $\fourier^{-1} \hat f_t^\lambda$. 
    A direct combination of \cref{asmp: estimate + subG},~\eqref{eq: event def}, and Cor.~3.6 of~\citet{abbasi-yadkori_online_2012} yields 
    \[
        \Pb\left(\bigcap_{t\in\Nb} \event_t(\delta/2)\right)\ge 1-\frac\delta2\,.
    \]
    The second results follow by comparison of~\eqref{eq: event def} and~\eqref{eq: confidence set width def}.
\end{proof}

\subsection{Entropy regularised optimism}\label{app: entropic optimism}

This section is dedicated to the entropic regularisation term used in \cref{alg: alg shared}, namely proving that the solutions to the entropic optimal transport (EOT) problem lie in $\Pi_\Hs$, so that \eqref{eq: optimism with EOT} is well defined, and, thereafter, the approximation result of \cref{lemma carlier pegon lipschitz UB}.

We can easily show that the solutions to the EOT problem, and thus actions chosen with \eqref{eq: optimism with EOT}, yield measures with bounded densities with respect to $\varrho$, through the dual formulation of the EOT problem, 
\begin{align}
    \ent(\mu,\nu,c,\ve) = \sup_{(\varphi,\psi)\in\Xi} \left\{\int \varphi\de\mu + \int \psi\de\nu - \ve\int e^{\ve^{-1}(\varphi+\psi-c)}\de(\varrho) + \ve\right\}\,,\label{eq: entropic dual}
\end{align}
wherein $\Xi:=\{(\varphi,\psi)\in L^1(\Rb^d;\mu)\times L^1(\Rb^d;\mu) :\varphi\oplus\psi\le c \}$ with  $\varphi\oplus\psi:(x,y)\mapsto \varphi(x)+\psi(y)$. From a dual solution $(\varphi^*,\psi^*)\in\Xi$, one may recover \citep[see, e.g.,][Thm.~4.2]{nutz_introduction_2022} a primal solution $\pi^*$ with density 
\[ 
    \frac{\de\pi^*}{\de\varrho}=e^{\frac{\varphi^*\oplus\psi^*-c}\ve}<+\infty\,.
\]  
Since $\varrho$ is a finite measure, $L^\infty(\Rb^d;\varrho)\subset L^2(\Rb^d;\varrho)$, so that $\fourier\pi^*\in L^2(\Rb^d;\varrho)$. 

We will complete this section by recalling existing regularity results about the convergence of the EOT problem as $\ve\to0$, which will allow us to control this regret term in \cref{sec: regret bounds}.

These results, due to \citet{carlier_convergence_2023}, depend on the marginals through a complexity metric known as the \textit{upper Renyi dimension}, which we denote by $d_{\entropy}(\gamma)$, for $\gamma\in\{\mu,\nu\}$, and which is defined by 
\[ d_\entropy(\gamma):=\limsup_{\epsilon\downarrow0}\frac{H_\varepsilon(\gamma)}{\log(\varepsilon^{-1})}\,\]
in which $H_\varepsilon(\gamma)$ is the infimum (over all countable partitions of $\supp(\gamma)$ into Borel subsets of diameter at most $\varepsilon$) of the discrete entropy of $\gamma$ with respect to the partition, see~\cite{carlier_convergence_2023}. Using this quantity, \citet{carlier_convergence_2023} establish \cref{lemma carlier pegon lipschitz UB} as a bound on the convergence of the EOT problem to the Kantorovich problem as $\ve\downarrow0$.

\begin{lemma}[{\citet[Prop.~3.1, Rem.~3.4]{carlier_convergence_2023}}]\label{lemma carlier pegon lipschitz UB}
    If $c^*$ is $\alpha$-H\"older continuous on $\supp(\mu)\times\supp(\nu)$, then
    \begin{align*}
        \ent(\mu,\nu,c^*,\ve) - \kant(\mu,\nu,c^*)\le \left(\frac{d_{\entropy}(\mu)\wedge d_{\entropy}(\nu)}{\alpha}+ o(1)\right)\varepsilon\log(\varepsilon^{-1}) \,
    \end{align*}
    as $\varepsilon\downarrow0$, in which $o(1)$ denotes a term which vanishes as $\ve\to0$.
\end{lemma}

Extensions of this result exist to absolute continuity\footnote{For the purpose of our regret analysis, it is sufficient that the modulus of continuity of $c^*$ grows at a (strictly) super-logarithmic rate at infinity. While such functions technically form a greater class than H\"older functions, we will continue to refer to the H\"older condition for ease of exposition.} conditions \citep[see][Rem.~3.4]{carlier_convergence_2023}. The rate in $\ve$ is sharp in general \citep[see][Ex.~3.6]{carlier_convergence_2023}, but tighter bounds may be obtained under stronger regularity assumptions \citep[see, e.g.,][Prop.~3.7]{carlier_convergence_2023}. In view of \cref{lemma carlier pegon lipschitz UB}, we will use the shorthand $\kappa:= \alpha^{-1}(d_{\entropy}(\mu)\wedge d_{\entropy}(\nu))+o(1)$. In spite of its apparent complexity, upper Renyi dimension is a relatively well behaved object, and can be bounded in many common situations, as seen in the following remarks.

\begin{remark}[{\citet[Prop.~3.2]{carlier_convergence_2023}}]
    If $\gamma$ is a measure on $\Rb^d$ satisfying
    \[ \int 0\vee\log(\norm{x}_2)\de\gamma(x)<+\infty\]
    then $d_{\entropy}(\gamma)\le d$.
\end{remark}

\begin{remark}[{\citet[Rem.~3.5]{carlier_convergence_2023}}]
    If $\gamma$ is finitely supported, then $d_\entropy(\gamma)=0$. 
\end{remark}

\subsection{Example applications}\label{subapp: example applications}

    To illustrate the workings of \cref{alg: alg shared} and close out this appendix, this section focuses on some examples of applications for \namealgone{}. We will focus exclusively on non-atomic measures, as discrete problems can be represented using classical finite-dimensional linear bandit techniques.

    \begin{example}[Logistic optimisation]
        Consider the perspective of a delivery contractor which has just accepted a delivery contract in a new country. It must deliver goods on a regular basis from warehouses located around the country to various customers. In practice, these warehouses and customers are discrete, but we will represent them as generic measures, as a macroscopic view of the problem. Thus, let $\mu\in\Ps(\Xc)$ and $\nu\in\Ps(\Xc)$ be two measures on the space $\Xc$, representing the country, which is assumed to be a compact and connected Riemannian Manifold.

        For the sake of argument, we assume the delivery contractor has no \textit{a priori} knowledge of the costs of delivery, denoted by $c^*:\Xc\x\Xc\to\Rb$, which may depend on a variety of factors such as distance, road quality, traffic. Each day, a delivery plan $\pi_t\in\Pi(\mu,\nu)$ is implemented, which may be deterministic or random\footnote{A deterministic plan means all the mass from $x\in\Xc$ is transported to exactly one $x'\in\Xc$. In mathematical terms, there is a function $T:\Xc\to\Xc$ such that $\pi_t$ is supported on the graph of $T$.}

        \Cref{alg: alg shared} suggests beginning by distributing the goods at random, and then using the confidence sets to establish a set of credible costs. This behaviour is in line with the classical OFU principle. However, the entropic regularisation of optimism deviates from this intuitive template by forbidding the algorithm from ever using a deterministic plan, thus enforcing a degree of randomness in the delivery plans. This randomness ensures that the spectrum of the covariance operator of the actions does not blow up, which would make the confidence set diameter unbounded. 

        Despite this regularisation of the covariance operator, the choice of actions remains optimistic, which ensures that \cref{alg: alg shared} still addresses the exploration-exploitation trade-off inherent in this bandit problem.

        From a modelling standpoint, there are many possible cost-functions one could consider realistic. Let us discussion the conditions of \cref{lemma carlier pegon lipschitz UB} in regards to two of them. 
        % First, one can consider that costs contain two parts: one for urban areas, $\phi$, and one for rural areas, $\psi$. For each $(x,x')\in \Xc^2$, let $U(x,x')$ denote a proportional measure of urbanity of the delivery route from $x$ to $x'$. The cost of delivering from $x$ to $x'$ is the 
        % \begin{align}
        %     c^*(x,x'):= \phi(x,x')U(x,x') + \psi(x,x')(1-U(x,x')).
        % \end{align}
        % One can for instance consider $\phi$ to be a piecewise-linear function of $d_\Xc$, the metric on $\Xc$, whose parts have decreasing slope corresponding to the use of different means of transportation with different speeds for increasing distances in an urban area, while $\psi$ can be a sub-linear concave function corresponding to diminishing marginal costs when travelling along motorways between urban areas.
        %
        % This first cost function can be shown to be $\alpha$-Hölder if 
        %
        An intuitive formulation of the transport cost is to express it as the shortest path with respect to an underlying cost function $\ell:\Xc\to\Rb$, i.e.
        \begin{align*}
            c^*(x,x')= \inf_{\gamma\in\Gamma(x,x')} \int_0^1 \ell(\gamma(t)) \dot{\gamma}(t) \de t
        \end{align*}
        wherein $\Gamma(x,x')$ is the set of all smooth monotonous continuous curves $\gamma:[0,1]\to\Xc$ such that $\gamma(0)=x$ and $\gamma(1)=x'$. If the cost $\ell$ is bounded on $\Xc$ and smooth, then the function $c^*$ will be Lipschitz continuous, satisfying the conditions of \cref{lemma carlier pegon lipschitz UB} with $\alpha=1$. This formulation of the cost function is also related to the dynamic version of the OT problem and to control theory, see, for instance, \citet{benamou_computational_2000}.
    \end{example}

    \begin{example}[Pricing of complex options]
        Let us consider an example owed to \citet[][\S~3.6]{galichon_unreasonable_2021} of a simple financial market consisting of two assets: $X$ and $Y$. The  markets and options markets for $X$ and $Y$ are known to be complete and arbitrage-free with risk-neutral measures $\mu$ and $\nu$ respectively. Concretely, this means that the agent knows how to price any options on $X$ or $Y$ in such a way as to eliminate arbitrage opportunities, which one can consider a ``fair'' price.

        However, this is not sufficient to price options which depend on the joint asset $(X,Y)$, for example an option to exchange $X$ for $Y$ at a future date. Pricing an option with payoff $u:[0,+\infty)^2\to\Rb$, as \citeauthor{galichon_unreasonable_2021} highlights, amounts to solving
        \begin{align*}
            \max_{\pi\in\Pi(\mu,\nu)} \int u(x,y)\de \pi(x,y)\,,
        \end{align*}
        which is an optimal transport problem. 

        A bandit version of this problem can be encountered when the agent must price an option without knowing its exact payoff. While a financial institution is unlikely to accept the risk of offering a contract whose payoff is unknown, the statistical exercise is interesting. In this case, we can consider the institution offers the contract to the market each day, priced according to its guess for a risk-neutral measure $\pi_t$. After the contract has been sold and passed maturity, which we consider as one unit of time, the agent observes a realised payout $u(\hat X,\hat Y)$ where $\hat X$ and $\hat Y$ are the realised values of the assets at maturity (copies of $X$ and $Y$, respectively). Assuming that the payoff $u(X,Y)$ has bounded variance uniformly under $\pi$ guarantees that this estimate is a sub-gaussian perturbation away from $\int u(x,y)\de\pi_t(x,y)$, so that the bandit feedback assumption is satisfied. H\"older continuity of the payoff function is a classic assumption in financial mathematics.
    \end{example}

\section{Regret bounds}\label{app:regret bounds}

This appendix is dedicated to the detailed proofs of the trajectorial regret bounds of \cref{sec: regret bounds}, including technical lemmata. In \cref{subapp: trajectorial regret}, we provide the proof of \cref{thm: regret Kantorovich}, while in \cref{subapp: general regret bounds}, we provide the generalisation to other linear functionals.

\subsection{Trajectorial regret bounds for OT}\label{subapp: trajectorial regret}

Before proving \cref{thm: regret Kantorovich}, we begin by proving \cref{lemma: bound on width term}, which controls the main error term in the regret decomposition on the ``good'' event.

\begin{lemma}\label{lemma: bound on width term}
    Under \cref{asmp: L2 case,asmp: estimate + subG}, on the event $\{c^*\in \cap_{t\in\Nb}\fourier^{-1}\confset_t(\delta)\}$, for any $T\in\Nb$ and ${(c_t)}_{t=1}^T$ with $c_t\in\fourier^{-1}\confset_t(\delta)$ for $t\in[T]$, we have
    \[
        \sum_{t=1}^T\langle c^* - c_t\vert \tilde\pi_t\rangle \le 2\ubnorm\width_T(\delta)\sqrt{T\log\det\left(\Id + \frac{1}{2\lambda\ubnorm}M_t{(\De\Lambda)}^{-1}M_t^*\right)}
    \]
\end{lemma}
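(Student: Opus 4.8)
The plan is to follow the classical elliptical-potential argument from finite-dimensional linear bandits, carefully lifted to the Hilbert space $\ffset=L^2(\Rb^d;\varrho)$. First I would rewrite each summand using $\langle c^*-c_t\vert\tilde\pi_t\rangle=\langle\fourier\reflection(c^*-c_t)\vert a_t\rangle_{L^2(\Rb^d;\varrho)}$ (via \cref{lemma: finiteness of IP}, with $a_t=\fourier\tilde\pi_t$), so that everything takes place as an inner product on $\ffset$. Writing $f^*=\fourier c^*$ and $g_t=\fourier c_t$ (or more precisely working with the reflected versions), on the event $\{c^*\in\cap_t\fourier^{-1}\confset_t(\delta)\}$ both $f^*$ and $g_t$ lie in $\confset_t(\delta)$, a ball of radius $\width_t(\delta)\le\width_T(\delta)$ in the $\designl_t$-norm centred at $\hat f_t^\lambda$; hence $\norm{f^*-g_t}_{\designl_t}\le 2\width_T(\delta)$ by the triangle inequality. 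Then a Cauchy–Schwarz step in the $\designl_t$-geometry gives
\[
    \langle f^*-g_t\vert a_t\rangle_{L^2(\Rb^d;\varrho)}\le \norm{f^*-g_t}_{\designl_t}\,\norm{a_t}_{(\designl_t)^{-1}}\le 2\width_T(\delta)\,\norm{a_t}_{(\designl_t)^{-1}}\,.
\]
The operator $\designl_t=\design_t+\lambda\De\Lambda$ is strongly positive-definite (since $\De\Lambda\succeq M_\Lambda^{-1}\Id$ and $\design_t\succeq0$), so its inverse is a bounded operator and this makes sense in infinite dimensions.

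Next I would sum over $t$ and apply Cauchy–Schwarz over the index $t$ to get $\sum_{t=1}^T\langle c^*-c_t\vert\tilde\pi_t\rangle\le 2\width_T(\delta)\sqrt{T}\sqrt{\sum_{t=1}^T\norm{a_t}^2_{(\designl_t)^{-1}}}$. The heart of the argument is then the elliptical-potential (log-determinant telescoping) bound: one shows $\sum_{t=1}^T\min(1,\norm{a_t}^2_{(\designl_t)^{-1}})\le\log\det(\Id+\lambda^{-1}\De\Lambda^{-1/2}M_T^*M_T\De\Lambda^{-1/2})$ or an equivalent form, using that $\det(\designl_t)/\det(\designl_{t-1})=1+\norm{a_t}^2_{(\designl_{t-1})^{-1}}$ together with $\log(1+x)\ge x/2$ for $x\le1$ — this is the standard lemma (cf.\ \citet[\S3]{abbasi-yadkori_online_2012}), which holds verbatim in the separable Hilbert space setting since all the relevant operators differ from $\lambda\De\Lambda$ by a finite-rank perturbation and the "determinant" is really the finite determinant of $\Id+\lambda^{-1}\De\Lambda^{-1/2}M_T^*M_T\De\Lambda^{-1/2}$ acting on $\RR^T$ (equivalently, the Fredholm determinant). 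To remove the $\min(1,\cdot)$ and the boundedness constant, I would use $\norm{a_t}_{L^2(\Rb^d;\varrho)}\le\norm{a_t}_\infty\le\pi_t(\Rb^d)=1$... but more carefully: the factor $C$ in front comes from bounding $\norm{a_t}^2_{(\designl_t)^{-1}}$; since $\designl_t\succeq\lambda\De\Lambda$, we have $\norm{a_t}^2_{(\designl_t)^{-1}}\le\lambda^{-1}\norm{a_t}^2_{\De\Lambda^{-1}}$, and one arranges the regularization/scaling so that a factor $2C$ can be pulled out to match the denominator $2\lambda C$ appearing inside the stated determinant; tracking this bookkeeping precisely against the normalisation $\norm{c^*}\le C$ is where care is needed.

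The main obstacle I expect is twofold. First, making the elliptical-potential lemma rigorous in infinite dimensions: one must argue that $\det$ here denotes the (well-defined, finite) determinant of the compression of the design to $\operatorname{span}\{a_1,\dots,a_T\}$ (or the Fredholm determinant of $\Id$ plus a trace-class operator), and that the telescoping identity $\det\designl_t=\det\designl_{t-1}\,(1+\norm{a_t}^2_{(\designl_{t-1})^{-1}})$ is valid — this follows from the matrix determinant lemma applied in the finite-dimensional range, since $\designl_t-\designl_{t-1}=a_t\otimes a_t$ is rank one. Second, the constant-chasing to land exactly on $2C\width_T(\delta)\sqrt{T\log\det(\Id+\tfrac1{2\lambda C}M_T\De\Lambda^{-1}M_T^*)}$: this requires matching the $\min(1,\cdot)$ truncation threshold to the scale $C$ (using $\norm{a_t}^2_{\De\Lambda^{-1}}\le M_\Lambda\le$ some explicit bound, or rescaling $a_t\mapsto a_t/\sqrt{2C}$ so that the truncated terms are controlled) and absorbing $M_\Lambda$ or $\norm{\De\Lambda}_{\op}$ consistently. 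Everything else is a direct transcription of the standard OFU regret-decomposition machinery, which is legitimate because, as emphasised in \cref{subsec: entropy regularised optimism}, the entropic optimism guarantees $\pi_t\ll\varrho$ with $L^2$ density so that $a_t\in\ffset$ and \cref{lemma: finiteness of IP} applies at every step.
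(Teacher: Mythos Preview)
Your proposal is correct and follows essentially the same route as the paper: transfer to the Hilbert space via \cref{lemma: finiteness of IP}, Cauchy--Schwarz in the $\designl_t$-geometry to get $\width_t(\delta)\norm{a_t}_{(\designl_t)^{-1}}$, a truncation step, Cauchy--Schwarz over $t$, and the elliptical-potential/log-determinant telescoping (with the determinant understood as the finite-rank one, exactly as you say). The only place the paper resolves your ``constant-chasing'' hesitation differently is the source of the $2\ubnorm$ truncation: rather than bounding $\norm{a_t}^2_{(\designl_t)^{-1}}$ via $\designl_t\succeq\lambda\De\Lambda$, the paper uses the direct crude bound $\abs{\langle c^*-c_t\vert\tilde\pi_t\rangle}\le\norm{c^*-c_t}_{L^2(\Rb^d;\varrho)}\,\pi_t(\Rb^d)\le 2\ubnorm$, takes the minimum of the two bounds, squares, applies $u\le 2\log(1+u)$ on $[0,1]$, and then telescopes --- which is what places $2\lambda\ubnorm$ in the denominator of the determinant.
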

\begin{proof}
    Consider $t\ge 0$, $c_t\in\confset_t(\delta)$, and let $\varphi_t:=\langle c^* - c_t\vert \tilde\pi_t\rangle$. Recall that $a_t:=\fourier\pi_t$. By \cref{lemma: probability of confsets unif in N} and the Cauchy-Schwartz inequality, on the event $\{c^*\in \cap_{t\in\Nb}\fourier^{-1}\confset_t(\delta)\}$, we have 
    \begin{align*}
        \abs{\varphi_t} \le \width_t(\delta)\norm{a_t}_{{(\designl_t)}^{-1}}\,,
    \end{align*}
    while, by the Cauchy-Schwartz inequality, \cref{asmp: L2 case}, and using the fact that $\fourier$ is an isometry on $L^2(\Rb^d;\varrho)$, we have 
    \begin{align*}
        \abs{\varphi_t} &\le \norm{\fourier\reflection c^* - \fourier\reflection c}_{L^2(\Rb^d;\varrho)}\norm{a_t}_{L^2(\Rb^d;\varrho)}\le \norm{c^*-c}_{L^2(\Rb^d;\varrho)}\pi_t(\Rb^d)\varrho(\Rb^d)\le 2\ubnorm\,.
    \end{align*}
    Combining yields
    \begin{align*}
        \abs{\varphi_t}\le \width_t(\delta)\min\{\norm{a_t}_{{(\designl_t)}^{-1}},2\ubnorm\}=2\ubnorm\width_t(\delta)\left(\frac{1}{2\ubnorm}\norm{a_t}_{{(\designl_t)}^{-1}}\wedge 1 \right) \,.
    \end{align*}
    Squaring and applying the inequality $u\le 2\log(1+u)$, which holds for $u\in[0,1]$, to the final term, yields
    \begin{align*}
        \abs{\varphi_t}^2\le 8\ubnorm^2{\width_t(\delta)}^2\log\left(1+ \frac{1}{2\ubnorm}\norm{a_t}_{{(\designl_t)}^{-1}} \right)
    \end{align*}
    and, summing up and using Jensen's inequality, we obtain
    \begin{align}
        \sum_{t=1}^T \varphi_t\le \sqrt{T\sum_{t=1}^T \abs{\varphi_t}^2}\le 2\ubnorm\width_t(\delta)\sqrt{T\sum_{t=1}^T \log\left(1+ \frac{1}{2\ubnorm}\norm{a_t}_{{(\designl_t)}^{-1}}\right)} \label{eq: summation of widths in proof of the bound on width term lemma}\,.
    \end{align}
    % On the one hand, by \cref{lemma: bound on size of actions}, we have
    % \[ 
    %     \varphi^2\le 4\ubnorm^2\width_t(\delta)^2\log\left(1+\frac{1}{2\ubnorm\lambda M_\Lambda}\right)\,
    % \]
    % so that 
    % \begin{align}
    %     \sum_{t=1}^n \varphi_t\le \sqrt{n\sum_{t=1}^n \varphi_t^2}\le 2\ubnorm\width_t(\delta)\sqrt{n\sum_{t=1}^n\log\left(1+\frac{1}{2\ubnorm\lambda M_\Lambda}\right)} 
    % \end{align}
    By definition of $M_T$ and $\designl_T$, we have
    \begin{align}
        \sum_{t=1}^T \log\left(1+ \frac{1}{2\ubnorm}\norm{a_t}_{{(\design_t^\lambda)}^{-1}}\right)=\log\left(\prod_{t=1}^T \left(1+ \frac{1}{2\ubnorm}\norm{a_t}_{{(\design_t^\lambda)}^{-1}}\right)\right)\notag\\
        =\log\det\left(\Id + \frac{1}{2\lambda\ubnorm}M_T{(\De\Lambda)}^{-1}M_T^*\right)\,\label{eq: sum of log is logdet}
    \end{align}
    as wanted.
\end{proof}

We are now in position to prove \cref{thm: regret Kantorovich}, which we restate for convenience below.

\ThmKantorovichRegret*

\begin{proof}
    Recall that we identify $\Ac$ with the $\Fb$-adapted process $\actions:=(\pi_t)_{t\in\Nb}\in\Pi(\mu,\nu)$ of transport plans played along the trajectory under consideration. The proof follows from a somewhat standard regret decomposition, except for three exceptions: 1) some parts must be done in the frequency domain, 2) due to the entropic regularisation, the optimism term is not directly eliminated and must be compensated with the approximation term, and 3) this approximation term must be controlled with \cref{lemma carlier pegon lipschitz UB}. These changes are direct consequences of the need for the algorithm to ensure a valid frequency domain representation of its actions, so that it can apply the optimism principle despite the actions being measure-valued.

    Let us denote by $r_t$ the instantaneous regret at time $t$, that is,
    \begin{align*}
        r_t:=C_t - \kant(\mu,\nu,c^*)\,,
    \end{align*}
    with, naturally, $\regret_T(\Ac):=\sum_{t=1}^T r_t$. As usual, we first eliminate $\xi_t= C_t - \Eb[C_t\vert\Fc_{t-1}]$, by using a concentration argument and the fact that $(\xi_t)_{t\in\Nb}$ is a sub-Gaussian Martingale difference sequence, to show that, for any $\delta>0$, we have
    \begin{align*}
        \Pb\left(\sum_{i=1}^T \xi_i \le \sigma\sqrt{2T\log\left(\frac2\delta\right)}\right)\ge 1-\frac\delta2\,.%\label{eq: sub-gaussian sum bound regret}
    \end{align*}
    by \cref{lemma: sub-gaussian sum}.
    
    Subsequently, we can continue the decomposition of $\bar r_t:=\Eb[r_t\vert\Fc_{t-1}]= \langle c^*\vert \pi_t\rangle$. At this stage, we inject the entropic functional to apply the optimistic property of \cref{alg: alg shared}, i.e., we decompose as
    \begin{align*}
        \bar r_t &= \langle c^*\vert \pi_t\rangle - \kant(\mu,\nu,c^*)\notag\\
        &= \langle c^*\vert \pi_t\rangle  - \ent(\mu,\nu,c^*,\ve)+\ent(\mu,\nu,c^*,\ve) - \kant(\mu,\nu,c^*)\notag\\
        &\le \langle c^*\vert \pi_t\rangle -\ent(\mu,\nu,c^*,\ve) + \kappa\ve\log(\ve^{-1}) \notag
    \end{align*}
    for any $\ve>0$, by \cref{lemma carlier pegon lipschitz UB}. In particular, for $\ve=\ve_t$ as given by \cref{thm: regret Kantorovich}, we have
    \begin{align*}
        \sum_{t=1}^T \kappa\ve_t\log(\ve_t^{-1}) \le \frac{\kappa\eta}{1-\eta}\left(T^{1-\eta}\log(T) + \frac{\eta}{2^\eta}\log(6)\right)\,.%\label{eq: bound on sum epsilons}
    \end{align*}
    by \cref{lemma: sum of terms from pegon bound}.

    Returning to the regret decomposition and $\langle c^*\vert \pi_t\rangle -\ent(\mu,\nu,c^*,\ve)$, we recall, by optimism and since $\entropy>0$, that
    \begin{align*}
        \langle \tilde c_t\vert \pi_t\rangle\le \ent(\mu,\nu,\tilde c_t,\ve_t)\le \ent(\mu,\nu,c^*,\ve_t) \mbox{ on } \Ec_t(\delta)\,.
    \end{align*}
    Recall that $\event_t(\delta)$ is the event on which the confidence sets are valid up to time $t$, see \eqref{eq: event def}. Placing ourselves on the event $\cap_{t\in\Nb}\{c^*\in\fourier^{-1}\confset_t(\delta)\}\supset\cap_{t\in\Nb}\event_t(\delta)$, which happens with probability at least $1-\delta$ by \cref{lemma: probability of confsets unif in N}, we have
\begin{align*}
    \langle c^*\vert \pi_t\rangle -\ent(\mu,\nu,c^*,\ve_t)&\le \langle c^*-\tilde c_t\vert \tilde\pi_t\rangle\,.
\end{align*}
This term is bounded by the width of the confidence sets, as shown in \cref{lemma: bound on width term}. Gathering all the terms completes the bound. 
\end{proof}

\subsection{General Trajectorial Regret Bounds}\label{subapp: general regret bounds}

We now return to the general question of linear functionals and summarise a general working procedure to learn them in a stochastic bandit setting. This procedure follows the ideas developed for the Kantorovich problem: choice of a subspace which is Hilbertian, construction of a RLS estimator, and regularisation of the optimism step according to a barrier function. We identify that a $\Gamma$-convergence condition on the regularised problem is the essential ingredient in determining learnability. If this convergence is fast enough, as in optimal transport, then the resulting regret term will be of a lower-order relative to learning. 

In order to generalise \cref{alg: alg shared} and \cref{thm: regret Kantorovich}, let us return to the general notation of \cref{sec: introduction}, recall it briefly, and complete it. Let $\Sc$ be a set isomorphically embedded in a Hilbert space $(\Hc,\langle\cdot\vert\cdot\rangle_\Hc)$ by an embedding $I:\Sc\to \Hc$. Let us also assume that there is a family of continuous, non-negative functionals $\{Q_\ve:\ve>0\}$ on $\Ab$, whose effective domain is a set $\Qc\subset\Ab$ which can be embedded into $\Hc$ by an injection $\Upsilon:\Qc\to\Hc$. 
Reusing the corresponding notation from the entropic transport functional, let $\Psi^\ve:\Sc\x\Ab \to \Rb\cup\{+\infty\}$ be the functional defined by 
\begin{align*}
    \Psi^\ve(\varsigma;a):= J^\varsigma(a) + Q_\ve(a) \mbox{ for any }(\varsigma,a)\in\Sc\x\Ab\,.
\end{align*}
Recall that, in the case of the BOT problem, we used $Q_\ve:= \ve\entropy(\cdot\vert\varrho)$ and $\Qc:=\{\pi\in\Ps(\state):\de\pi/\de\varrho\in L^2(\Rb^d;\varrho)\}$, with $I:\pi\in\Qc\mapsto\de\phi/\de\varrho\in L^\infty(\Rb^d;\varrho)$. Other possible candidates include (positive) quadratic forms, barrier functions for $\Hc$ (e.g.\ $a\mapsto \ve\norm{a}_\Hc$) when $\Hc\subset \Ab$, and convex indicator functions. 

Suppose we have a sequence of actions $(a_s)_{s\in\Nb}\subset \Qc$, and a corresponding sequence of observed costs $(C_s)_{s\in\Nb}\subset\Rb$, $C_s:=J^{*}(a_s)+\xi_s$ for each $s\in\Nb$ and with $(\xi_s)_{s\in\Nb}$ satisfying \cref{asmp: estimate + subG}. Let $\Lambda:\Sc\to[0,+\infty)$ be a strongly convex and continuously Fréchet differentiable functional on $\Hc$ (e.g.\ $\Lambda=\norm{\cdot}_\Hc^2$). Note that if $\Sc$ cannot be embedded in a Hilbert space, this assumption of Fréchet differentiability becomes non-trivial, as can be seen by the example $(\Rb^d,\norm{\cdot}_1)$, where $\norm{\cdot}_1^2$ is not Fréchet differentiable\footnote{In this case, it is still possible to construct confidence sets using the method of \citet[Thm.~3.18]{abbasi-yadkori_improved_2011}, provided a low-regret online linear regression oracle exists.}. 

We can then define the least-squares estimator 
\begin{align*}
    \hat \varsigma_t:=\argmin_{\varsigma\in\Sc}\sum_{s=1}^t\abs{C_s - \langle I(\varsigma)\vert \Upsilon(a_s)\rangle_\Hc}^2 + \lambda\Lambda(I(\varsigma))\,,
\end{align*} 
where $\lambda>0$ is a regularisation parameter. 
Concentration of least-squares residuals then shows an analogue to \cref{lemma: probability of confsets unif in N} and thus the validity of the confidence sets 
\begin{align}
    \tilde\confset_t(\delta):=\left\{\varsigma\in\Sc:\norm{I(\varsigma)-I(\hat\varsigma_t)}_{D^\lambda_t}\le \tilde\width_t(\delta)\right\}\mbox{ for } t\in\Nb,\label{eq: conf set generalisation}
\end{align}
in which, $\norm{\cdot}_{D^\lambda_t}:=\sqrt{\langle \cdot\vert D^\lambda_t\cdot\rangle_\Hc}$ and 
\begin{align*}
    \tilde\width_t(\delta):=\sigma\sqrt{\log\hspace{-3pt}\left(\frac{4\det(\De\Lambda+\lambda^{-1}M_t M^*_t)}{\delta^2}\right)} +{\left(\frac{\lambda}{\norm{\De\Lambda}_{\op}}\right)}^{\hspace{-2pt}\frac12}\hspace{-5pt}\norm{I(\varsigma^*)}_\Hc \,,%\label{eq: width definition generalisation}
\end{align*}
with $M_t:h\in\Hc\mapsto (\langle h\vert \Upsilon(a_s)\rangle_\Hc)_{s=1}^t\in\Rb^t$ and $M^*_t:\Rb^t\to\Hc$ its adjoint.

Like in the BOT case, this restriction from $\Ab$ to $\Qc$ comes at the expense of an approximation term which must be controlled as a function of $\ve$ to ensure its accumulation does not dominate the regret. This is a highly ad-hoc problem, to which we will not attempt to provide a general solution, but rather provide a minimal assumption (\cref{asmp: epi-convergence rate} below) and discuss sufficient conditions which may be investigated on a case-by-case basis.

\begin{assumption}\label{asmp: epi-convergence rate}
There is a non-decreasing function $\varpi:(0,+\infty)\to (0,+\infty)$ satisfying $\lim_{\ve\downarrow0}\varpi(\ve)=0$ and a function $b:\Sc\to[0,+\infty)$ such that the family $\{Q_\ve:\ve>0\}$ satisfies
\begin{align*}
    \limsup_{\ve\to 0} \frac{\inf_{a\in\Ab}\Psi^\ve(\varsigma;a) - \inf_{a\in\Ab}J^\varsigma(a)}{\varpi(\ve)}\le b(\varsigma)\,,%\label{eq: optimism generalisation assumption}
\end{align*}
for any $\varsigma\in\Sc$.
\end{assumption}

\Cref{asmp: epi-convergence rate} asks for a rate of convergence of the minimum from $\Psi^\ve$ to the minimum of $J^\varsigma$ as the penalisation $Q_\ve$ vanishes. It is evident that this assumption is necessary in the sense that if there is no such rate function $\varpi$ or if $b(\varsigma^*)=+\infty$, then the algorithm must suffer linear regret in the worst case. However, this form of assumption does not appear to have been studied prior to this work, despite being a weaker version of the classical notion of $\Gamma$-convergence (a.k.a.\ epi-convergence), which is a standard tool in the study of variational problems. 

On a metric space\footnote{This extends to non-metric topologies under technical assumptions whose discussion here is unnecessary.} $(M,d)$, a sequence of functionals $(F_n)_{n\in\Nb}$ from $M\to\Rb\cup\{\pm\infty\}$ $\Gamma$-converges to a functional $F:M\to\Rb\cup\{\pm\infty\}$ if, for every $x\in M$,  
\begin{align*}
    \liminf_{n\to\infty} F_n(x_n')\ge F(x)\, \mbox{ along any sequence $(x_n)_{n\in\Nb}\to x$ of points of $M$}%\label{eq: epi-convergence definition 1}
\end{align*}
and if there is sequence $(x_n')_{n\in\Nb}\to x$ of points of $M$, such that $\limsup_{n\to\infty} F_n(x_n')\le F(x)$. In the context of BOT, $\Gamma$-convergence of $\Psi^\ve(c^*;\cdot)$ to $\kant(\mu,\nu,c^*)$ was established by \citet{carlier2017convergence}, with convergence rates at the optimum given by \citet{carlier_convergence_2023}. %\lc{Add some examples of $\Gamma$-convergence rates. Question of linear functionals in other pbs?}

Let us now turn to defining our algorithm and its regret. Define the sequence of actions $\Bc:= (a_t)_{t\in\Nb}$ recursively by 
    \begin{align}
        a_{t+1}:=\argmin_{a\in\Ab}\min_{\varsigma\in\tilde\Cc_t(\delta)}\Psi^{\ve_t}(\varsigma;a)\,, \mbox{ for }t\in\Nb\,,\label{eq: optimism generalisation}
    \end{align}
with $\tilde\Cc_t(\delta)$ as in \eqref{eq: conf set generalisation} and $\ve_t:=\varpi^{-1}(t^{-\eta})$ for any $t\in\Nb$ and with $\eta\in(0,1)$. $\Bc$ is clearly an analogue to \cref{alg: alg shared}. 
Finally, let us recall the definition of the regret in this context. For any $\Fb$-adapted sequence of actions $\Bc:=(a_s)_{s\in\Nb}\subset\Ab$, its regret is
\begin{align*}
    \regret_T^J(\Bc):=\sum_{t=1}^T J^{*}(a_t)-\inf_{a\in\Ab}J^*(a)\mbox{ for }T\in\Nb\,.
\end{align*}
\Cref{thm: regret generalisation} provides a bound on the regret of the algorithm $\Bc$ under these general assumptions, generalising \cref{thm: regret Kantorovich}.

\begin{theorem}\label{thm: regret generalisation}
    Under the above assumptions, \cref{asmp: epi-convergence rate}, and assuming that $(\xi_t)_{t\in\Nb}$ is a sub-Gaussian Martingale difference sequence, tbe algorithm $\Bc$ satisfies 
    \begin{align*}
        \regret_T^J(\Bc)&\le \sigma\sqrt{2T\log\left(\frac2\delta\right)} + 2\ubnorm\tilde\width_T(\delta)\sqrt{T\log\det\left(\Id + \frac{1}{2\lambda\ubnorm}M_T{(\De\Lambda)}^{-1}M_T^*\right)} \\
        &\qquad+ \frac{\eta}{1-\eta}\left(T^{1-\eta}\log(T) 
        + \frac{\eta}{2^\eta}\log(6)\right)
    \end{align*}
    with probability at least $1-\delta$. 
\end{theorem}

\begin{proof}
    It is clear that the proof follows the same lines as \cref{thm: regret Kantorovich}, we need only reproduce the regret decomposition in the general notation. First, let $\varsigma_t$ denote a minimiser in the interior minimum in \eqref{eq: optimism generalisation}, for any $t\in\Nb$. The instantaneous regret $r_t$ at time $t\in\Nb$ can be written as
    \begin{align}
        r_t:&= J^{*}(a_t) - \inf_{a\in\Ab}J^*(a)\notag\\
        &=J^*(a_t)-\Psi^\ve(\varsigma_t;a_t)+ \inf_{a\in\Ab}\Psi^\ve(\varsigma_t;a)- \inf_{a\in\Ab}\Psi^{\ve}(\varsigma^*,a) +\inf_{a\in\Ab}\Psi^{\ve}(\varsigma^*,a)- \inf_{a\in\Ab}J^*(a)\label{eq: decomposition of instant regret for general functionals}\,,
    \end{align}
by definition of $\Bc$.
    The first difference can be bound by positivity of $Q_\ve$ by the confidence width term, i.e.,
    \begin{align*}
        J^*(a_t)-\Psi^\ve(\varsigma_t;a_t)\le \langle I(\varsigma^*)-I(\varsigma_t)\vert \Upsilon(a_t)\rangle_{\Hc}\,,.
    \end{align*}
    where $\varsigma^*\in\Sc$ is the unknown true parameter of the problem, i.e.\ $J^{\varsigma^*}=J^*$. 
    
    By \eqref{eq: optimism generalisation}, the second difference in \eqref{eq: decomposition of instant regret for general functionals} is negative, while the third one is at most $\varpi(\ve_t)$ by assumption. Since $\ve_t:=\varpi^{-1}(t^{-\eta})$ by definition of $\Bc$, we obtain the desired bound on the third term by applying \cref{lemma: sum of terms from pegon bound}. Finally, analogues of \cref{lemma: bound on width term,lemma: probability of confsets unif in N} complete the proof along the lines of the proof of \cref{thm: regret Kantorovich}.
\end{proof}

While research on linear functionals of non-Hilbertian Banach spaces is dispersed across fields of application, let us close out this section by mentioning example problems which ought to be of interest to learning theorists.

% Consider a variational inference problem \citep[see e.g.][]{}, 

% in which a Bregman divergence $D_\phi$ is 

% Dual formulation control.

% Discuss other linear problems of interest.

% Mathieu's suggestion of potential problems and minimisation of KL from samples.

Consider a discrete-time Markov decision process with state space $\Xc$ and action space $\Ab$, transition kernel $P:\Xc\x\Ab\to\Ps(\Xc)$ and a cost function $c^*:\Xc\x\Ab\to[0,1]$. Let $\Pi$ denote the set of admissible controls on this system, and for any $\pi\in\Pi$ let $(X^{\pi,x}_s,A^\pi_s)_{s\in\Nb}$ be the corresponding state-action random process from initial state $x\in\Xc$ at time $s=0$. We can then define the functional 
\begin{align}
    J^{c^*}(\pi;x):=\Eb\left[\sum_{t=0}^\infty \omega(t) c^*(X_t^\pi,A_t^\pi)\Big\vert X_0=x\right]\,, \mbox{ for } \pi\in\Pi,\, x\in\Xc\,, \label{eq: MDP functional}
\end{align}
for some discount function $\omega:\Nb\to[0,1]$ such that $\sum_{t=0}^\infty\omega(t)<+\infty$. Classical choices of $\omega$ are $\omega(t)=\gamma^t$ for some $\gamma\in(0,1)$ or $\omega(t)=\1_{\{t\le H\}}$ for some $H\in\Nb$, we eschew discussion of the average-cost criterion for simplicity.

The optimal control problem requires finding an admissible policy that minimises $J^{c^*}(\cdot,x)$, i.e.
\begin{align}
    \pi^*_x\in \inf_{\pi\in\Pi} J^{c^*}(\pi;x)\,. \label{eq: MDP optimal policy}
\end{align}
In a (bandit) Reinforcement Learning (RL) problem, the cost function $c^*$ is unknown and optimal behaviour must be learned through interaction with the system with low regret. In the episodic feedback setting, the agent chooses a policy $\pi_t\in\Pi$ at the start of episode $t\in\Nb$, and then observes (at least) a reward signal which is a noisy observation of $J^{c^*}(\pi_t;x)$. It is readily apparent from \eqref{eq: MDP functional} that $J^{c^*}(\cdot;x)$ is highly non-linear in general in $\pi$ and this problem appears much harder than a linear bandit. 

Despite this, the problem \eqref{eq: MDP optimal policy} can be cast as a linear problem in a different variable. Indeed, let us define the occupation measure $\mu^{\pi,x}\in\Ps(\Xc\x\Ab)$ of a policy $\pi\in\Pi$ from initial state $x\in\Xc$ by
\begin{align*}
    \mu^{\pi,x}(B):=\Eb\left[\sum_{t=0}^\infty \omega(t)\1_B(X_t,A_t)\Big\vert X_0=x\right]\,, \mbox{ for } B\in\Bc(\Xc\x\Ab)\,. %\label{eq: occupation measure}
\end{align*}
By rewriting the expectation in \eqref{eq: MDP functional} as an integral with respect to $\mu^{\pi,x}$, we obtain
\begin{align*}
\pi^*_x\in \argmin_{\pi\in\Pi} \int c^*(y,a)\de\mu^{\pi,x}(y,a)\,.
\end{align*}
This problem is now a linear problem on the space of measures, which suggests it can be approached with the methods of this paper, especially in view of existing work on entropy-regularized control by \citet{neu_unified_2017}. The pleasing prospect of this approach is isolating the complexity of the control system dynamics into the mapping $\pi\mapsto \mu^{\pi,x}$, separately from the statistical complexity of learning the cost function $c^*$.

Related linear problems include routing problems in continuous spaces (when transitions are deterministic), and the dynamic formulation of optimal transport if one imposes constraints on the distribution of $X_0$ and $X_T$ for some $T>0$ \citep{benamou_computational_2000}. 

\section{Non-parametric estimation and worst-case regret}\label{app: Estimation}

In this section, let us return to the question of worst-case regret in BOT to provide some complements to \cref{sec: Estimation}. Compared to \cref{subapp: general regret bounds}, we eschew the general formulation and return to the BOT problem as the method we develop is straightforward to extend to the general setting and in no way limited to BOT specifically.
This approach relies on a basis decomposition in $\Hc$ (frequency domain), on which a decay rate assumption is placed (see \cref{asmp: basis decay}).\@ As the choice of basis is free, this assumption is highly flexible and can be adapted to a priori knowledge of the regularity of the cost function $c^*$. 

We will begin by providing some background on functional approximation theory and related regularity conditions in \cref{subapp: basis decomposition}, before turning to introducing our functional regression procedure and some general lemmata in \cref{subapp: functional regression with basis truncation}, which leads us to general worst-case regret bounds. Finally, we touch on the case of finite-dimensional problems in \cref{subapp: finite dimensional problems}.

\subsection{Basis decomposition and regularity conditions}\label{subapp: basis decomposition}

We provide below only a summary introduction to the field of functional approximation theory and Kolmogorov widths, as we feel needed for the following sections. One may refer to \citet{lorentz_approximation_2005} or \citet{pinkus_n-widths_1985} for a more thorough introduction to this area of research. The key question underpinning this section is: what is the best choice of $(\phi_i)_{i\in\Nb}$ when we know $f^*$ lives in a specific regularity class? 

Let us begin by assuming that we know that $f^*\in S$ for some known but generic set $S\subset L^2(\Rb;\varrho)$, say a specific regularity class. \citet{Kolmogorov1936} studied and characterised the optimal choice of a basis to approximate elements of $S$, and thus the best $\zeta$ obtainable for \cref{asmp: basis decay}. Specifically, this $\zeta$ is given by the sequence of Kolmogorov $n$-widths of $S$ in $L^2(\Rb^d;\varrho)$, defined as
\begin{align}
    d_n(S):=\inf_{\substack{V_n\subset L^2(\Rb^d;\varrho)\\ \dim(V_n)\le n}}\sup_{f\in S}\inf_{v\in V_n}\norm{f-v}_{L^2(\Rb^d;\varrho)}\,.\label{eq: n widths}
\end{align}
Quite simply, $d_n(S)$ is the error of the best approximation of $S$ by a subspace of dimension $n$ in $L^2(\Rb^d;\varrho)$: it is optimal by definition. 

Beyond characterising $\zeta$, \citeauthor{Kolmogorov1936} also studied the \emph{extremal subspaces}, i.e.\ the minimisers in~\eqref{eq: n widths} for any value of $n\in\Nb$. For practical purposes, extremal subspaces (which are linear by definition) can be represented as the span of a collection of vectors $\phi_i\in L^2(\Rb^d;\varrho)$, $i\in\Nb$, which can be formed into an orthonormal basis. Thus, characterising the decay of the Kolmogorov $n$-width of $S$ and identifying a sequence of extremal subspaces is sufficient to specialise \cref{asmp: basis decay} to a specific choice of $S$.

In general, finding the extremal functions of the Kolmogorov $n$-widths is no easy task, but they are known for many regularity classes $S$. In the following, we will briefly discuss one example which concerns a well known family of regularity classes of $L^2(\Rb^d;\varrho)$: the Sobolev spaces.

To recall essential definitions (see e.g.~\cite{brezis2011functional} for a comprehensive treatment), let us introduce some standard notation. Let $d\in\Nb$, any multi-index $\alpha\in\Nb^{d}$ defines the differential operator $\De^\alpha$ as $\partial_{x_1}^{\alpha_1}\cdots\partial_{x_d}^{\alpha_d}$. Let $\abs{\alpha}:=\norm{\alpha}_{1}$ denote the order of the multi-index.
For a domain $\Omega\subset\Rb^d$, we will denote by $H^m(\Omega)$ for $m\in\Nb$ the Sobolev space containing all $L^2(\Omega;\varrho)$ functions\footnote{For ease of exposition, we gloss over the distinction between functions and equivalence classes here.} which are $m$-times weakly differentiable and whose derivatives of order $n$ are also in $L^2(\Omega;\varrho)$. This space can be rewritten in several different ways, and in particular:
\begin{align}
    H^m(\Omega)&:=\left\{f\in L^2(\Omega;\varrho): \norm{\De^{\alpha}f}_{L^2(\Omega;\varrho)}<+\infty \mbox{ if } \abs{\alpha}\le m\right\}\,.\notag \\%\label{eq: definition of Hm with differentials}\\
    &= \left\{ f\in L^2(\Omega;\varrho):\norm{f}_{H^m(\Omega)}<+\infty \right\}\label{eq: definition of Hm with norm}
    %&=\left\{ f\in L^2(\Omega;\varrho): \int [(1+\norm{z})^{m} \fourier f(z)]^2\de\varrho(z)<+\infty \right\}\,,\label{eq: definition of Hm with fourier}  
\end{align}
wherein 
\[
    \norm{f}_{H^m(\Omega)}:=\sqrt{\sum_{\abs{\alpha}\le m}\norm{\De^\alpha f}_{L^2(\Omega;\varrho)}^2}\,,
\]
with the sum being over all multi-indices $\alpha$ of order at most $m$. In particular, one can show from~\eqref{eq: definition of Hm with norm} that $(H^m(\Omega),\norm{\cdot}_{H^m(\Omega)})$ is a Hilbert space.

It is known from the work of \citeauthor{Kolmogorov1936} (\citeyear{Kolmogorov1936}) that the Kolmogorov $n$-width of $H^m([0,1])$ in $L^2([0,1])$ is of order $\Oc(n^{-m})$ asymptotically. Furthermore, he provided a characterisation of the extremal functions (and thus of the optimal basis) as the eigenfunctions of the differential operator ${(-1)}^m \De^{2m}$, or equivalently to the solutions to an ordinary differential equation of order $2m$. This formulation could be extended to the multi-dimensional case, but it would require more care to set up the differential operator. This connection to the spectrum of specific operators is reflected, e.g., in \citet{hu_contextual_2025}.

It is important to note that, as we are learning in the frequency domain, \cref{asmp: basis decay} is not directly about the regularity of $c^*$ but rather of its Fourier transform $f^*:=\fourier c^*(-\cdot)$. Understanding the regularity of elements of $\fourier^{-1}S$ for a given regularity class $S$ is a Harmonic analysis problem beyond the scope of this paper, so let us limit ourselves to discussing only the case of $S\subset H^m(\Omega)$ as an illustration. Precisely, let us introduce the Sobolev {classes} 
\[\{W(m,L):(m,L)\in\Nb\x[0,+\infty)\}\]
in which each Sobolev class $W(m,L)$ is defined as the ball of radius $L$ centred at $0$ in $H^m(\Omega)$. These classes are a standard tool for characterising the difficulty of estimation in non-parametric statistics, see e.g.\ \citep{tsybakov_introduction_2008,wasserman_all_2006}. Let us finally introduce a regularity assumption on $c^*$ with \cref{prop: growth of c* for sobolev class}, which establishes that higher order integrability of the cost function $c^*$ directly translates to membership in a Sobolev class, and thus a Sobolev space, for its Fourier transform.

\begin{proposition}\label{prop: growth of c* for sobolev class}
     Assume that $c^*\in L^2(\Rb^d;\varrho)$ satisfies the integrability (growth) condition:
    \begin{align}
                \int {\left[{(1+\norm{z})}^{m} c^*(z)\right]}^2\de\varrho(z)< M^2\,,
\label{eq: growth of c* for sobolev class}
    \end{align}
    for some $M>0$ and that $\supp(\mu\tensor\nu)\subset\Omega$ for some bounded open domain $\Omega\subset\Rb^d$. Then, we have $\fourier c^*\in W(m,CM)\subset H^{m}(\Omega)$ for some constant $C>0$. 
\end{proposition}

\begin{proof}
    We can use the Fourier transform's effect on differentials to write
\begin{align*}
    \norm{\De^\alpha\fourier c^*}_{L^2(\Rb^d;\varrho)}^2\le C^2\int{\left[{(1+\norm{z})}^{m}\abs{\fourier^{-1}\fourier c^*(z)}\right]}^2\de\varrho(z)<C^2M^2\,,
\end{align*}
for some constant $C>0$ and
for every multi-index $\alpha$. Consequently, $\fourier c^*\in W(m,CM)$, as wanted.
\end{proof}

We begin by setting the stage with a fixed order (i.e.\ $n$ independent of $t$) methodology. Later, we will derive regret guarantees when $n$ is allowed to grow with $t$ in order to control the approximation error. 

In summary, specific regularity conditions on $c^*$ can be incorporated into the choice of the basis ${(\phi_i)}_{i\in\Nb}$, yielding an appropriate approximation error $\zeta$, with the Kolmogorov $n$-widths, i.e.\ $\zeta\equiv d_\cdot(S)$, being the best possible. The choice of basis can also come from the regularity of the marginals, for example Hermite polynomials are a natural choice for Gaussian marginals. In particular, the existence of a parametric model allows us to choose the basis so that $\zeta\equiv1$ after some $N\in\Nb$, i.e.\ $\gamma_i^*=0$ for $i>N$. This flexibility sets functional regression apart from naive discretisations of the problem, which suffer from the large covering numbers of function classes.

\subsection{Functional regression with basis truncation}\label{subapp: functional regression with basis truncation}

The section is dedicated to the design of confidence sets for the functional regression problem with basis truncation, and to establishing their uniform validity when the truncation order $n$ is allowed to vary with time. We begin by giving the simple proof of \cref{lemma: fourier decay bound}, which controls the approximation error induced by truncating the basis expansion of $f^*$ at order $n$.

\FourierDecayBound*

\begin{proof}
    Write $g=\sum_{i=1}^\infty \zeta_i\phi_i$ for some $\zeta\in\ell_2(\Rb)$. Then, we have
    \begin{align*}
        \langle f-f\vert_n\vert g\rangle_{L^2(\Rb^d;\varrho)}&=\left\langle\sum_{i=n+1}^\infty \gamma_i\phi_i\Big\vert g\right\rangle_{L^2(\Rb^d;\varrho)}\,,
    \end{align*}
    so that by the Cauchy-Schwarz inequality and orthonormality of ${(\phi_i)}_{i\in\Nb}$ we get
    \begin{align*}
        \abs{\langle f-f\vert_n\vert g\rangle_{L^2(\Rb^d;\varrho)}}&\le\norm{g}_{L^2(\Rb^d;\varrho)}\sqrt{\sum_{i=n+1}^\infty \abs{\gamma_i}^2}\,,
    \end{align*}
    as wanted.
\end{proof}

For any fixed $n\in\Nb$, one can approximately regress $\bm{C}_t$ against $\bm{a}_t$ up to order $n$ by solving the $n$-dimensional Regularised Least-Squares (RLS) problem
\begin{align}
    \hat \gamma^{n,\lambda}_t:=\argmin_{\gamma\in\Rb^n} \sum_{s=1}^t \norm{C_s - \sum_{i=1}^n\gamma_i\vartheta^{(s)}_i}_2^2 + \lambda \Lambda_n(\gamma)\,,\label{eq: RLS  basis truncation}
\end{align}
in which $\Lambda_n:\Rb^n\to[0,+\infty)$ is a strictly convex continuously Fréchet-differentiable regulariser such that its Fréchet derivative $\De\Lambda_n$ satisfies
\begin{align*}
    \frac1{M_{\Lambda_n}}\Id\preceq \De\Lambda_n\preceq M_{\Lambda_n}\Id\,.
\end{align*}

For clarity, let $\vartheta^{(s,n)}$ denote the truncation of $\vartheta^{(s)}\in\Rb^\Nb$ at order $n$, so that $\vartheta^{(s,n)}\in\Rb^{n}$ and $\vartheta^{(s,n)}_i=\vartheta^{(s)}_i$ for all $i\in[n]$.
Following the standard arguments for online linear regression of \cref{sec: preliminaries}, one can construct the confidence sets \citep[see][]{abbasi-yadkori_improved_2011}
\begin{align}
    \tilde\confset_{t}^{n}(\delta):=\left\{\gamma\in\Rb^{n}: \norm{\gamma -\hat \gamma^{n,\lambda}_t}_{\tilde\design_t^{\lambda,n}}\le \tilde\width_{t,n}(\delta)\right\}\label{eq: confidence set fixed order}\,,
\end{align}
in which $\tilde\design_t^{\lambda,n}:= \lambda\De\Lambda_n + \sum_{s=1}^t \vartheta^{(s,n)}{\vartheta^{(s,n)}}^\top$ and 
\begin{align}
    \tilde\width_{t}^{n}(\delta):=\sigma\sqrt{\log\left(\frac{4\det\left(\De\Lambda_n+\lambda^{-1}\sum_{s=1}^t \vartheta^{(s,n)}{\vartheta^{(s,n)}}^\top\right)}{\delta^2}\right)} +{\left(\frac\lambda{\norm{\De\Lambda_n}_\op}\right)}^{\frac12}\ubnorm\,.
\label{eq: width fixed order}
\end{align}
Notice that $C>\norm{c^*}_{L^2(\Rb^d;\varrho)}$ implies that $C\ge \norm{\gamma^*}_{\ell_2(\Rb)}$ by definition of ${(\phi_i)}_{i\in\Nb}$, so that $\ubnorm$ is a valid upper bound on $\norm{\gamma^*}_{\ell_2(\Rb)}$.

While the validity of ${(\tilde\confset_t^{n}(\delta))}_{t\in\Nb}$ follows from standard arguments, the validity of ${(\tilde\confset_t^{n_t}(\delta))}_{t\in\Nb}$ for any non-decreasing sequence ${(n_t)}_{t\in\Nb}\subset\Nb$ is unclear. Before proceeding to the design of an optimistic algorithm let us establish this fact in \cref{lemma: confidence sets with varying basis order}. To state the lemma, let 
\begin{align*}
    \tilde{\event_t^{n}}(\delta):=\left\{ \norm{\gamma^* -\hat \gamma^{n,\lambda}_t}_{\tilde\design_t^\lambda}\le \tilde\width_{t}^{n}(\delta) \right\}\quad \mbox{ for }\quad (t,n)\in\Nb^2\,.%\label{eq: event def for fixed order approx}
\end{align*} 

\begin{lemma}\label{lemma: confidence sets with varying basis order}
    Under \cref{asmp: estimate + subG,asmp: L2 case}, let ${(n_t)}_{t\in\Nb}\subset\Nb$ be a non-decreasing sequence. Then
\[
    \Pb\left(\bigcap_{t=1}^\infty \tilde{\event}_t^{n_t}(\delta)\right)\ge 1-\frac\delta2\,.
\]
\end{lemma}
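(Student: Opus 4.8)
The plan is to mimic the fixed-order argument of \cref{cor: probability of confsets order n}, but to replace the single $n$-dimensional self-normalised martingale bound by a union over the possible orders. The key observation is that $n_t$ is a \emph{deterministic} schedule (chosen in advance by the algorithm), so at each time $t$ the relevant object $\tilde{\event}_t^{n_t}(\delta)$ only ever involves the truncation at the one order $n_t$. First I would note that the family of orders actually used, $\{n_t : t\in\Nb\}$, is a (possibly infinite) subset of $\Nb$; write it as $\{m_1<m_2<\cdots\}$, and for each $j$ let $T_j:=\{t\in\Nb : n_t=m_j\}$ be the (deterministic) set of rounds at which order $m_j$ is used. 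Then
\[
    \bigcap_{t=1}^\infty \tilde{\event}_t^{n_t}(\delta)
    \;=\; \bigcap_{j\ge 1}\ \bigcap_{t\in T_j}\tilde{\event}_t^{m_j}(\delta)
    \;\supseteq\; \bigcap_{j\ge 1}\ \bigcap_{t\in\Nb}\tilde{\event}_t^{m_j}(\delta)\,,
\]
so it suffices to bound the probability of the right-hand side.

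The second step is to apply \cref{cor: probability of confsets order n} at confidence level $\delta_j$ for each $j$, giving $\Pb(\bigcap_{t\in\Nb}\tilde{\event}_t^{m_j}(\delta_j))\ge 1-\delta_j/2$, hence by a union bound $\Pb(\bigcap_{j\ge1}\bigcap_{t\in\Nb}\tilde{\event}_t^{m_j}(\delta_j))\ge 1-\tfrac12\sum_j\delta_j$. To make this collapse back to $1-\delta/2$ one must be a little careful: the widths $\tilde\width_t^{n}(\delta)$ in~\eqref{eq: width fixed order} depend on $\delta$ only through a $\log(1/\delta^2)$ term, so replacing $\delta$ by $\delta_j:=\delta\,6/(\pi^2 j^2)$ (or any summable allocation with $\sum_j\delta_j=\delta$) only inflates each width by an additive $\sigma\sqrt{2\log(\pi^2 j^2/6)}$, a logarithmic-in-$j$ term. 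The cleanest route is therefore to observe that the \emph{stated} events $\tilde{\event}_t^{n_t}(\delta)$ use the \emph{uniform} width $\tilde\width_t^{n_t}(\delta)$, which already dominates the $\delta_j$-adjusted widths for all $t$ with $n_t=m_j$ once one fixes the indexing; alternatively, if the paper intends a genuinely uniform-over-orders statement one absorbs the $\log j$ overhead into the definition of $\tilde\width$. I would state this explicitly as the mechanism, and then conclude $\Pb(\bigcap_{t\in\Nb}\tilde{\event}_t^{n_t}(\delta))\ge 1-\delta/2$.

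The only genuine subtlety — and the step I would treat most carefully — is the interaction between the martingale structure and the changing dimension. Within the block $T_j$ the regression is a bona fide $m_j$-dimensional online least-squares problem driven by the features $\vartheta^{(s,m_j)}$, and the noise $\xi_s$ is $\sigma^2$-sub-Gaussian with respect to the common filtration $\Fb$; crucially the schedule $(n_t)$ is non-anticipating, so conditioning on it introduces no dependence. Hence the self-normalised concentration inequality of~\cite{abbasi-yadkori_online_2012} applies verbatim to each order $m_j$, exactly as invoked in \cref{lemma: probability of confsets unif in N}, and the only price of allowing the order to vary is the union over the at-most-countably-many distinct orders, handled as above. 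I would therefore write: ``Repeat the proof of \cref{lemma: probability of confsets unif in N} / \cref{cor: probability of confsets order n} separately for each distinct order appearing in $(n_t)_{t\in\Nb}$, with confidence budget split summably across orders, and take a union bound,'' and then note that the logarithmic dependence of the width on the confidence parameter makes the resulting inflation of $\tilde\width$ negligible (absorbed into constants), which completes the proof.
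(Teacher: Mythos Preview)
Your approach differs from the paper's, and it contains a gap that prevents it from proving the lemma \emph{as stated}.

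The paper does not take a union bound over the distinct orders. Instead, it runs the standard stopping-time (``diagonalisation'') argument directly: define the bad events $B_t:=\{\omega:\|\gamma^*-\hat\gamma_t^{n_t,\lambda}\|_{\tilde D_t^{\lambda,n_t}}>\tilde\width_t^{n_t}(\delta)\}$ and the stopping time $\tau:=\inf\{t:\omega\in B_t\}$. Since the schedule $(n_t)_{t\in\Nb}$ is deterministic, $n_\tau$ is measurable with respect to $\Fc_\tau$, and the Abbasi-Yadkori self-normalised bound (which is proved via a nonnegative supermartingale and therefore holds at any stopping time) applies at time $\tau$ in the $n_\tau$-dimensional problem, yielding $\Pb(\tau<\infty)\le\delta/2$ without any splitting of $\delta$.

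Your union-over-orders route does not deliver the lemma as written, and the step where you try to close the gap has the inequality the wrong way round. When you allocate $\delta_j=\delta\cdot 6/(\pi^2 j^2)<\delta$, the width $\tilde\width_t^{m_j}(\delta_j)$ is \emph{larger} than $\tilde\width_t^{m_j}(\delta)$ (the width is decreasing in $\delta$), so the event $\tilde\event_t^{m_j}(\delta_j)$ is a \emph{superset} of $\tilde\event_t^{m_j}(\delta)$. Consequently, proving $\Pb(\bigcap_t\tilde\event_t^{m_j}(\delta_j))\ge 1-\delta_j/2$ tells you nothing about the smaller event $\tilde\event_t^{m_j}(\delta)$ that the lemma requires. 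Your fallback of ``absorbing the $\log j$ overhead into the definition of $\tilde\width$'' would change the statement of the lemma; it is not an absorption into constants, it is an enlargement of the confidence radius that propagates into the downstream regret bounds. The stopping-time argument is precisely what lets the paper avoid paying this price.
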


\begin{proof}
    The proof only requires diagonalisation of the standard stopping time construction. For $(\delta,t)\in(0,1)\x\Nb$, on the filtered probability space $(\Omega,\Fc_\infty,\Fb,\Pb)$ define 
    \[ 
        B_t(\delta):=\left\{\omega\in\Omega: \norm{\gamma^* -\hat \gamma^{n_t,\lambda}_t}_{\tilde\design_t^{\lambda,n_t}}\le \tilde\width_{t,n_t}(\delta) \right\}\overset{\mbox{a.s.}}{=}\{\omega\in\Omega: c^*\vert_{n_t} \not\in \tilde\confset_{t}^{n_t}(\delta)\}\,,
    \]
    be the $t\textsuperscript{th}$ ``bad event'', and let $\tau_\delta: \omega\in\Omega\to\inf\{t\in\Nb: \omega\in B_t(\delta)\}$, which is a stopping time. We have 
    \[
        \{\tau <+\infty\}= \bigcup_{t\in\Nb} B_t(\delta)\,.
    \]
    By construction, in the classical manner:
    \begin{align*}
        \Pb\left(\bigcup_{t\in\Nb} B_t(\delta)\right)&= \Pb(\tau<+\infty, B_t(\delta))\le \Pb\left(\tilde\event_t^{n_t}(\delta)\right)\le \frac{\delta}2\,.
    \end{align*}
\end{proof}

Applying this learning methodology to \cref{alg: alg shared} in place of the infinite-dimensional RLS, and with the optimistic choice of belief-action pairs
\begin{align}
    (\tilde\pi_{t+1},\tilde \gamma_{t+1}^{n_t})\in \argmin_{\substack{\pi\in\Pi(\mu,\nu)\\ \gamma\in \tilde\confset_{t}^{{n_t}}(\delta)}}\entf\left(\mu,\nu,\sum_{i=1}^{n_t}\gamma_{i}\phi_i,\ve\right)\label{eq: optimism for finite order}
\end{align}
yields \cref{alg: alg shared + approx}.

Before moving on to the regret analysis, we must adapt the proof of \cref{lemma: bound on width term} to the case of varying $n_t$ in order to control the width term in the regret bound. Indeed, the steps summed up in~\eqref{eq: summation of widths in proof of the bound on width term lemma} are no longer homogenous, and, in particular,~\eqref{eq: sum of log is logdet} is no longer valid. This issue is patched up by \cref{lemma: bound on width term with varying basis order}.

\begin{lemma}\label{lemma: bound on width term with varying basis order}
    Under \cref{asmp: estimate + subG,asmp: L2 case}, if $\Lambda_n:=\frac{1}{2}\norm{\cdot}_{2}^2$ with the norm being on $\Rb^n$, then on the event $\bigcap_{t=1}^\infty \tilde{\event}_t^{n_t}(\delta)$ of \cref{lemma: confidence sets with varying basis order}, for any non-decreasing sequence ${(n_t)}_{t\in\Nb}\subset\Nb$ and any $T\in\Nb$, we have
    \begin{align}
        \sum_{t=1}^T\langle c^*\vert_{n_t} -\tilde c_t^{\,n_t}\vert \tilde\pi_t\rangle\le 2C\sigma\left( \sqrt{2\log\left(\frac{\lambda^{-1}+\frac{T\ubnorm^2}{n_T}}{\delta}\right)}+\sqrt{\lambda}\ubnorm\right) \sqrt{n_T T\log\left(1+ \frac{T}{n_T\ubnorm^2}\right)}\,.\notag
    \end{align}
\end{lemma}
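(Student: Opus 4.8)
The plan is to rerun the proof of \cref{lemma: bound on width term} with the infinite-dimensional design operators replaced by their $n_t$-dimensional truncations, and then to absorb the one genuinely new difficulty: the truncation order $n_t$ varies along the run, so the determinant-telescoping step \eqref{eq: sum of log is logdet} no longer applies directly. Throughout we use that $(n_t)_{t\in\Nb}$ is non-decreasing, as in all our applications (e.g.\ $n_t=\ceil{t^{1/(2q+1)}}$). Set $\varphi_t:=\langle c^*\vert_{n_t}-\tilde c_t^{\,n_t}\,\vert\,\tilde\pi_t\rangle$. Since $c^*\vert_{n_t}$ and $\tilde c_t^{\,n_t}$ lie in the span of $\phi_1,\dots,\phi_{n_t}$, \cref{lemma: finiteness of IP} and the orthonormality of $(\phi_i)_{i\in\Nb}$ reduce this to the finite-dimensional pairing $\varphi_t=\langle\gamma^*\vert_{n_t}-\tilde\gamma_t^{n_t}\,\vert\,\vartheta^{(t,n_t)}\rangle$ (the reflection $\reflection$ only conjugates coefficients and does not change magnitudes). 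We work on $\bigcap_{t\in\Nb}\tilde{\event}_t^{n_t}(\delta)$, which has probability at least $1-\delta/2$ by \cref{lemma: confidence sets with varying basis order}; on it both $\gamma^*\vert_{n_t}$ (by the event) and $\tilde\gamma_t^{n_t}$ (by the optimistic choice \eqref{eq: optimism for finite order}) belong to $\tilde\confset_{t}^{n_t}(\delta)$.

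From here the fixed-order argument of \cref{lemma: bound on width term} applies almost verbatim. Cauchy--Schwarz in the $\tilde\design_t^{\lambda,n_t}$-geometry, together with the fact that $\gamma^*\vert_{n_t}$ and $\tilde\gamma_t^{n_t}$ both lie within $\tilde\width_{t,n_t}(\delta)$ of $\hat\gamma_t^{n_t,\lambda}$ in that norm, gives $|\varphi_t|\le 2\tilde\width_{t,n_t}(\delta)\snorm{\vartheta^{(t,n_t)}}_{(\tilde\design_t^{\lambda,n_t})^{-1}}$, while the crude bound $|\varphi_t|\le 2\ubnorm$ holds just as in \cref{lemma: bound on width term} (using $\snorm{\vartheta^{(t,n_t)}}_2\le\snorm{a_t}_{L^2(\Rb^d;\varrho)}\le1$). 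Taking the minimum, squaring, applying $u\le 2\log(1+u)$ on $[0,1]$ to the clipped Mahalanobis term, and then Cauchy--Schwarz over $t\le T$ --- precisely the steps behind \eqref{eq: summation of widths in proof of the bound on width term lemma} --- reduces the claim to controlling $\sum_{t=1}^T\tilde\width_{t,n_t}(\delta)^2\log\!\big(1+\tfrac{1}{2\ubnorm}\snorm{\vartheta^{(t,n_t)}}_{(\tilde\design_t^{\lambda,n_t})^{-1}}\big)$.

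\textbf{The main obstacle} is that these summands are no longer homogeneous, so the sum of logarithms is not a single log-determinant. I would resolve it with two monotonicity facts, both consequences of a Schur-complement observation: for $M\succeq0$, the Schur complement of $\lambda\Id+M$ with respect to any leading principal block is $\succeq\lambda\Id$. With $\Lambda_n=\tfrac{1}{2}\snorm{\cdot}_2^2$ we have $\De\Lambda_n=\Id_n$, and for $t'\le t$ the matrix $\tilde\design_{t'}^{\lambda,n_{t'}}$ is the leading principal $n_{t'}\times n_{t'}$ block of $\tilde\design_{t'}^{\lambda,n_t}$; it follows that \emph{enlarging the truncation order inflates the regularised-design determinant} (up to a factor $\lambda^{n_t-n_{t'}}$ from the extra diagonal entries), as does adding a sample. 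Hence: (i) $t\mapsto\tilde\width_{t,n_t}(\delta)$ is non-decreasing, so each $\tilde\width_{t,n_t}(\delta)$ may be replaced by $\tilde\width_{T,n_T}(\delta)$ and pulled out of the sum; and (ii) using $\tilde\design_t^{\lambda,n_t}\succeq\tilde\design_{t-1}^{\lambda,n_t}$ to pass to $(\tilde\design_{t-1}^{\lambda,n_t})^{-1}$, the matrix-determinant lemma (after absorbing $\tfrac{1}{2\ubnorm}$ into the non-regularised part of each design) turns each summand into a log-determinant increment $\log\det\tilde\design_t^{\lambda,n_t}-\log\det\tilde\design_{t-1}^{\lambda,n_t}$; summing these and invoking the order-monotonicity bound $\log\det\tilde\design_{t-1}^{\lambda,n_t}\ge\log\det\tilde\design_{t-1}^{\lambda,n_{t-1}}+(n_t-n_{t-1})\log\lambda$ makes the sum ``almost telescope'', the powers of $\lambda$ cancelling, so that it collapses to $\log\det\!\big(\Id_{n_T}+\tfrac{1}{2\lambda\ubnorm}\sum_{s\le T}\vartheta^{(s,n_T)}(\vartheta^{(s,n_T)})^{\top}\big)$.

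It then remains to bound this $n_T$-dimensional log-determinant by a quantity of order $n_T\log(1+T/(n_T\ubnorm^2))$ via the AM--GM inequality on its eigenvalues (each $\ge1$, trace $\le n_T+T/(2\lambda\ubnorm)$ since $\snorm{a_s}_{L^2(\Rb^d;\varrho)}\le1$), which supplies the factor $\sqrt{n_T T\log(1+T/(n_T\ubnorm^2))}$, and to bound $\tilde\width_{T,n_T}(\delta)$ through \eqref{eq: width fixed order} with $\De\Lambda_{n_T}=\Id$ and the same determinant estimate, which supplies the factor $\sigma\big(\sqrt{2\log((\lambda^{-1}+T\ubnorm^2/n_T)/\delta)}+\sqrt\lambda\,\ubnorm\big)$; their product is the stated bound. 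Everything outside the two Schur-complement monotonicities is the fixed-order argument of \cref{lemma: bound on width term} rerun line by line, so I expect those monotonicities --- in particular the verification that the Schur complement of $\lambda\Id+M$ over a leading principal block dominates $\lambda\Id$, by comparison with the Schur complement of $M$ itself --- and the final matching of constants against \eqref{eq: width fixed order} to be the parts demanding the most care.
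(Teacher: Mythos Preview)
Your proposal is correct and reaches the same final bound, but the way you handle the ``non-telescoping'' step differs from the paper. The paper embeds all the varying-dimensional quantities into $\Rb^{n_T}$ in one shot by introducing the block matrices
\[
Z_t:=\begin{pmatrix}(\tilde\design_t^{\lambda,n_t})^{-1}&\bm 0\\\bm 0&\bm 0\end{pmatrix}\in\Rb^{n_T\times n_T},
\]
so that $\snorm{\vartheta^{(t,n_t)}}_{(\tilde\design_t^{\lambda,n_t})^{-1}}$ is the quadratic form $\vartheta^{(t,n_T)\top}Z_t\,\vartheta^{(t,n_T)}$; the product over $t$ is then rewritten as a single determinant ratio in dimension $n_T$ and bounded via the trace argument. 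Your route instead keeps the native dimensions and telescopes incrementally: the matrix-determinant lemma converts each log term into $\log\det\tilde\design_t^{\lambda,n_t}-\log\det\tilde\design_{t-1}^{\lambda,n_t}$, and your Schur-complement observation (that the Schur complement of $\lambda\Id+M$, $M\succeq0$, over any leading principal block dominates $\lambda\Id$) supplies exactly the missing inequality $\log\det\tilde\design_{t-1}^{\lambda,n_t}\ge\log\det\tilde\design_{t-1}^{\lambda,n_{t-1}}+(n_t-n_{t-1})\log\lambda$ that makes the sum collapse. The same Schur-complement fact also justifies your monotonicity of $t\mapsto\tilde\width_{t,n_t}(\delta)$, a step the paper uses implicitly without argument.

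Both arguments assume $(n_t)_{t\in\Nb}$ non-decreasing, which you state and the paper leaves tacit. What the paper's block-matrix trick buys is brevity once you see it; what your approach buys is a clearer accounting of where the $\lambda$-powers enter and exit, and a self-contained justification of the width monotonicity that the paper glosses over. Either is acceptable; the verification that the Schur complement of $\lambda\Id+M$ dominates $\lambda\Id$ (write $M=L^\top L$ and use $\Id-L_1(\lambda\Id+L_1^\top L_1)^{-1}L_1^\top=\lambda(\lambda\Id+L_1L_1^\top)^{-1}\succeq0$) is indeed the only non-routine point, as you anticipated.
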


\begin{proof}
    Recall the notation of \cref{lemma: bound on width term}, which adapts to $\varphi_t:=\langle c^*\vert_{n_t}-c_t^{n_t}\vert \tilde\pi_t\rangle$ for $t\in\Nb$ and $\tilde c_t^{\, n_t}:= \sum_{i=1}^{n_t}\tilde\gamma^{n_t}_{t,i}\phi_i$. The proof of \cref{lemma: bound on width term} yields 
    \begin{align*}
        \sum_{t=1}^T\varphi_t \le 2C\beta_{T,n_T}(\delta)\sqrt{T\sum_{t=1}^T\log\left(1+\frac1{2\ubnorm}\norm{\vartheta^{(t,n_t)}}_{{({\tilde{D}_t^{\lambda,n_t}})}^{-1}}\right)}\,.
    \end{align*}
    
    First, one can bound $\tilde\width_{T,n_T}(\delta)$ by the matrix-determinant lemma as in~\cite[Lemma E.3]{abbasi-yadkori_online_2012}. To apply this lemma to the logarithmic second term, we must first adapt it to the desired form by conforming the vectors $\vartheta^{(t,n_t)}$. To do so, let us define the block matrices
    \[
    Z_t:= \begin{pmatrix}
        {(\tilde\design_t^{\lambda,n_t})}^{-1} & \bm{0} \\
        \bm{0} & \bm{0}\\
        \end{pmatrix} \quad \mbox{ for } t\in\Nb\,,
    \]
    so that we may use the rank one update formula to write
    \[
    \prod_{t=1}^T \left(1+\frac1{2\ubnorm}\norm{\vartheta^{(t,n_t)}}_{{({\tilde{D}_t^{\lambda,n_t}})}^{-1}}\right) = \frac{\det\left(\De\Lambda_n+ \sum_{t=1}^T \vartheta^{(t,n_t)}Z_t{\vartheta^{(t,n_t)}}^\top\right)}{\det(\De\Lambda_n)}\,.
    \]
    Finally, taking $\Lambda_n=\frac{1}{2}\norm{\cdot}_{2}^2$ as given, we can bound the determinant of the numerator by
    \[
        {\det\left(\De\Lambda_n+ \sum_{t=1}^T \vartheta^{(t,n_t)}Z_t{\vartheta^{(t,n_t)}}^\top\right)} \le {\left(1+ \frac{T}{n_T\ubnorm^2}\right)}^{n_T}\,.
    \]
\end{proof}

Having established the technical lemmata, we now turn to the regret guarantees of the varying order basis truncation version of \cref{alg: alg shared + approx}. In particular, recall \cref{asmp: basis decay} to give a quantification of the regularity of $c^*$, which in turn will allow us to tune ${(n_t)}_{t\in\Nb}$ to obtain the best possible regret bounds in \cref{thm: regret for varying approximation}.

\RegretForVaryingApprox*

\begin{proof}
    The proof requires only a couple of steps from the one of \cref{thm: regret Kantorovich}, let us sketch them briefly. First, we modify the decomposition of the instant regret to include the truncation error:
    \begin{align*}
        \bar r_t&= \langle c^*\vert\pi_t\rangle - \kant(\mu,\nu,c^*)\\
        &=\langle c^*-c^*\vert_{n_t}\vert \pi_t\rangle + \langle c^*\vert_{n_t}\vert \pi_t\rangle - \kant(\mu,\nu,c^*\vert_{n_t}) \\
        &\qquad+ \kant(\mu,\nu,c^*\vert_{n_t})-\kant(\mu,\nu,c^*)\,.
    \end{align*}
    
    Second, we bound the resulting approximation error terms using \cref{lemma: fourier decay bound} twice, which implies that
    \begin{align*}
        \abs{\langle c^* - c^*\vert_{n_t}\vert \pi\rangle} \le \sqrt{\sum_{k=n_t+1}^{+\infty}\abs{\gamma_i^*}^2}\,,
    \end{align*}
    for any $\pi\in\Pi(\mu,\nu)$.
    Summing over $t\in\Nb$, one obtains
    \begin{align}
        \sum_{t=1}^T \abs{\langle c^* - c^*\vert_{n_t}\vert \pi_t\rangle}\le \sum_{t=1}^T \sqrt{\sum_{i=n_t+1}^{+\infty}\abs{\gamma_i^*}^2}\,.\label{eq: regret term for truncation, summed}
    \end{align}
    By \cref{asmp: basis decay}, for any $n\in\Nb$, we have
    \[ 
        \sum_{i=n_t+1}^\infty\abs{\gamma_i^*}^2 = \norm{c^*}_{L^2(\Rb^d;\varrho)}^2- \sum_{i=1}^{n_t}\abs{\gamma_i^*}^2 \le \norm{c^*}_{L^2(\Rb^d;\varrho)}^2(1-\zeta(n_t))
    \]
    so that for any $u>0$, the choice $n_t:=\ceil{\zeta^{-1}((1-t^{-u}))}=\ceil{t^{\frac uq}}$ ($q>0$) yields
    \begin{align}
        \sqrt{\sum_{i=n_t+1}^\infty \abs{\gamma_i^*}^2}\le  \norm{c^*}_{L^2(\Rb^d;\varrho)} t^{-\frac u2}\,. \label{eq: decay of truncation error in regret proof }
    \end{align}
      This follows from the fact that $\zeta$ can be made a bijection of $\Rb_+\to(0,1]$, and that $\zeta$ is increasing. Injecting~\eqref{eq: decay of truncation error in regret proof } into~\eqref{eq: regret term for truncation, summed} yields
    \[
        \sum_{t=1}^T \abs{\langle c^* - c^*\vert_{n_t}\vert \pi_t\rangle}\le\norm{c^*}_{L^2(\Rb^d;\varrho)}\left(1+2\frac{T^{1-\frac u2}}u\right)\,.
    \]

    Thus, the cumulated expected regret can be bounded by 
    \begin{align*}
        \sum_{t=1}^T \bar r_t &\le 2C\left(1+2\frac{T^{1-\frac u2}}u\right) + \sum_{t=1}^T \left(\langle c^*\vert_{n_t}\vert \pi_t\rangle - \kant(\mu,\nu,c^*\vert_{n_t})\right)\,.
    \end{align*}
    Notice that the second term is the regret of \cref{alg: alg shared + approx} against $c^*\vert_{n_t}$, which is easily controlled using optimistic analysis via \cref{lemma: confidence sets with varying basis order} and finally \cref{lemma: bound on width term with varying basis order} for $n_T:=\ceil{T^{\frac{u}q}}\le 2T^{u/q}$. This yields a bound of order $\Oc(T^{\frac12+\frac{u}{2q}})$. Setting $u=\frac{q}{q+1}$ yields the statement of the theorem.
\end{proof}

\subsection{Finite dimensional (parametric) problems}\label{subapp: finite dimensional problems}

Consider an OT problem in which the measures $\mu$ and $\nu$ are supported on $K$ and $K'$ loci respectively (this is known as a \textit{matching} problem). Let $\{x_1,\ldots, x_K\}=\supp(\mu)$ and $\{y_1,\ldots, y_{K'}\}=\supp(\nu)$ denote these loci. We can let $c^*$ assume arbitrarily values outside of $\state=\{(x_i,y_j):(i,j)\in[K]\x [K']\}$ without loss of generality. Let $\epsilon<\inf\{\norm{u-v}:(u,v)\in\state^2\,,\; u\neq v\}$ and define the functions 
\[
    \phi_{i,j}:= \frac{6}{\pi\epsilon^3}\1_{\{B_2({(x_i,y_j)}^\top,\epsilon/2)\}} \quad \mbox{ for } (i,j)\in[K]\times[K']\,.
\]
Re-indexing the functions by $k\in[K\times K']$, and adding suitable functions for $k>KK'$, we obtain an orthonormal basis ${(\phi_k)}_{k\in\Nb}$ of $L^2(\Rb^d;\varrho)$, in which $c^*:=\sum_{k=1}^{KK'}\gamma_k^*\phi_k$. Consequently, we can apply \cref{cor: regret for fixed approximation order with bounded basis} with $N=KK'$ to obtain a regret bound of $\tilde\Oc(C\sqrt{KK'T})$ for the learning problem. This implies the bound is tight with those obtained by representing this problem as a linear bandit in dimension $KK'$. 

Alternatively, consider that there is a parametric model for $c^*$, i.e.\ there is $\theta^*\in\Rb^p$ such that 
\[
    c^*(x,y) = \sum_{i=1}^p \theta^*_i\Phi_i(x,y)\,,
\]
for some embedding function $\Phi:\Rb^d\x\Rb^d\to\Rb^p$. When the embedding function is known, one can construct a basis through the Gram-Schmidt process. Let $\phi_1:=\Phi_1/\norm{\Phi_1}_{L^2(\Rb^d;\varrho)}$, and for $i\le p$, define $S_i:={\{\phi_k:k<i\}}^\perp$ the orthogonal complement of the sequence this far. Now, repeatedly project (the projection operator being denoted by $P_{S_i}$) the feature dimensions onto $S_i$ to construct $\phi_i:=P_{S_i}\Phi_i/\norm{P_{S_i}\Phi_i}_{L^2(\Rb^d;\varrho)}$. For $i>p$, take any orthonormal basis of $S_p$ to complete the basis, it will not be used anyway. Consequently, we can also apply \cref{cor: regret for fixed approximation order with bounded basis} with $N=p$ to obtain a regret bound of $\tilde\Oc(\sqrt{pT})$ for the learning problem. This is also tight with the corresponding linear bandit.

These results are summarised in \cref{cor: regret for fixed approximation order with bounded basis}, but notice that higher order polynomial models can be readily considered as well, such as quadratic costs 
\[
    c^*(x,y) = {\Phi(x,y)}^\top\Theta^*\Phi(x,y)\,,
\]
for $\Theta^*\in\Rb^{p\times p}$, by simply reparametrising it as a linear model in dimension $p^2$ and applying the same construction. Many other models can be considered in this manner, and would benefit from further specialised investigation.

\section{Miscellaneous lemmas and proofs}\label{app: lemmas}

We regroup in this appendix some auxiliary results: a concentration argument in \cref{subapp: sub-gaussian}, and a commonly used summation identity in \cref{subapp: summation identity}.

\subsection{Sub-Gaussian Analysis}\label{subapp: sub-gaussian}

\begin{definition}\label{def: sub-G}
    A random variable $\xi:\Omega\to\Rb$ is $\sigma^2$-sub-Gaussian if
    \[
        \Eb\left[\exp\left(t\xi\right)\right]\leq\exp\left(\frac{\sigma^2t^2}{2}\right) \quad\mbox{ for any } t\in\Rb\,.
    \]
    A stochastic process ${(\xi_i)}_{i\in\Nb}:\Omega\to \Rb^{\Nb}$ is $\sigma^2$-conditionally sub-Gaussian if
    \[
        \Eb\left[\exp\left(t\xi_i\right)\middle|\sigma({(\xi_j)}_{j<i})\right]\leq\exp\left(\frac{\sigma^2t^2}{2}\right)\quad \mbox{ for all } i\in\Nb \mbox{ and any } t\in\Rb\,.
    \]
\end{definition}

\begin{lemma}\label{lemma: sub-gaussian sum}
    Let ${(\xi_i)}_{i\in\Nb}$ be a $\sigma^2$-conditionally sub-Gaussian process, 
    \[ 
        \Pb\left(\sum_{i=1}^n \xi_i \ge \sigma\sqrt{2n\log\left(\frac 1\delta\right)}\right)\le \delta \quad\mbox{ for any }(n,\delta)\in\Nb\x(0,1).
    \]
\end{lemma}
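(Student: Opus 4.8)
The statement is a textbook Cramér--Chernoff bound for sums of conditionally sub-Gaussian increments, so the plan is to bound the moment generating function of the partial sum, apply Markov's inequality, and optimise over the free parameter. Fix $n\in\Nb$ and $t>0$, and write $S_n:=\sum_{i=1}^n\xi_i$. The first step is to establish, by induction on $n$, that
\[
    \Eb\!\left[\exp(tS_n)\right]\le \exp\!\left(\frac{n\sigma^2 t^2}{2}\right).
\]
The base case $n=1$ is \cref{def: sub-G} (the conditional bound with trivial conditioning $\sigma(\emptyset)$, or directly the unconditional one). For the inductive step, write $\exp(tS_n)=\exp(tS_{n-1})\exp(t\xi_n)$, take conditional expectation with respect to $\sigma({(\xi_j)}_{j<n})$; since $S_{n-1}$ is measurable with respect to this $\sigma$-algebra, it factors out, and the conditional sub-Gaussian assumption of \cref{def: sub-G} bounds $\Eb[\exp(t\xi_n)\mid\sigma({(\xi_j)}_{j<n})]\le\exp(\sigma^2t^2/2)$; taking total expectation and applying the induction hypothesis to $\Eb[\exp(tS_{n-1})]$ closes the induction.

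The second step is Markov's inequality: for any $u>0$ and $t>0$,
\[
    \Pb\left(S_n\ge u\right)=\Pb\left(e^{tS_n}\ge e^{tu}\right)\le e^{-tu}\,\Eb\!\left[e^{tS_n}\right]\le \exp\!\left(\frac{n\sigma^2t^2}{2}-tu\right).
\]
The third step optimises the right-hand side over $t>0$: the exponent is minimised at $t^\star=u/(n\sigma^2)$, giving $\Pb(S_n\ge u)\le\exp\!\left(-\frac{u^2}{2n\sigma^2}\right)$. Finally, choosing $u=\sigma\sqrt{2n\log(1/\delta)}$ makes the right-hand side equal to $\delta$, which is exactly the claimed bound (and $u>0$ since $\delta\in(0,1)$, so the use of $t>0$ is legitimate).

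There is no real obstacle here; the only point requiring a modicum of care is the conditioning in the inductive step, namely that $S_{n-1}$ is $\sigma({(\xi_j)}_{j<n})$-measurable so that it may be pulled out of the conditional expectation, and that the filtration in \cref{def: sub-G} is the natural one of ${(\xi_i)}_{i\in\Nb}$, which matches the setting fixed in \cref{subsec: notational precisions}. Everything else is a routine one-parameter optimisation.
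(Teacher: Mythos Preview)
Your proof is correct and follows exactly the approach the paper sketches: exponentiate, use the tower rule with conditional sub-Gaussianity (your induction) to bound the MGF, apply Markov's inequality, and optimise over $t>0$. There is nothing to add.
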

\begin{proof}
    The proof follows Chernoff's method, by exponentiating $\sum_{i=1}^n\xi_i$ using $x\mapsto e^{tx}$, applying Markov's inequality, the tower rule accompanied by conditional sub-Gaussianity, and finally optimising the bound over the parameter $t>0$. 
\end{proof}

\subsection{A common summation identity}\label{subapp: summation identity}

\begin{lemma}\label{lemma: sum of terms from pegon bound}
    For $\eta\in(0,1)$, let $\phi:u\in(0,+\infty)\mapsto \eta u^{-\eta}\log(u)\in\Rb_+^*$, then for any $N\in\Nb$,
    \[
        \sum_{u=1}^N \phi(u)\le \frac{\eta}{1-\eta}N^{1-\eta}\log(N) +\frac{\eta}{2^{\eta}}\log(6)\,.
    \]
    In particular, if $\eta=1/2$, then 
    \[ 
        \sum_{u=1}^N \phi(u)\le\sqrt{N}\log(N)+\frac{1}{2\sqrt{2}}\log(6)\,.
    \]
\end{lemma}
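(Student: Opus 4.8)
The plan is a sum-to-integral comparison with a single boundary correction, which is then absorbed. First I would record the elementary features of $\phi$: it vanishes at $u=1$ (since $\log 1=0$) and is nonnegative on $[1,+\infty)$, while $\phi'(u)=\alpha u^{-\alpha-1}(1-\alpha\log u)$ shows it increases on $(0,e^{1/\alpha}]$ and decreases on $[e^{1/\alpha},+\infty)$, with global maximum $\phi(e^{1/\alpha})=e^{-1}$.

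Second, I would prove the unimodal sum-integral bound: for every integer $N\ge 1$,
\[
\sum_{u=1}^{N}\phi(u)\ \le\ \int_{1}^{N}\phi(t)\,dt\ +\ \sup_{t\in[1,N]}\phi(t).
\]
When $N\le e^{1/\alpha}$ this follows at once from monotonicity, $\phi(u)\le\int_{u}^{u+1}\phi$ for $u\le N-1$, leaving the last term $\phi(N)=\sup_{[1,N]}\phi$. When $N>e^{1/\alpha}$, set $k=\lfloor e^{1/\alpha}\rfloor$ (so $1\le k\le N-1$): the indices $u\le k-1$ are handled by the increasing comparison on $[1,k]$, the indices $u\ge k+2$ by the decreasing comparison on $[k+1,N]$, and the two leftover terms obey $\phi(k)+\phi(k+1)-\int_{k}^{k+1}\phi\le\sup\phi$ because $\int_{k}^{k+1}\phi\ge\theta\phi(k)+(1-\theta)\phi(k+1)$ with $\theta=e^{1/\alpha}-k\in[0,1]$, using monotonicity of $\phi$ on $[k,e^{1/\alpha}]$ and on $[e^{1/\alpha},k+1]$.

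Third, integration by parts ($v=\log u$, $dw=\alpha u^{-\alpha}\,du$) gives
\[
\int_{1}^{N}\phi(t)\,dt=\frac{\alpha}{1-\alpha}N^{1-\alpha}\log N-\frac{\alpha}{(1-\alpha)^{2}}\bigl(N^{1-\alpha}-1\bigr),
\]
and I would keep the negative term rather than discard it. Then I close by cases. If $N\le e^{1/\alpha}$, then $\sup_{[1,N]}\phi=\phi(N)$ and $\phi(N)-\tfrac{\alpha}{(1-\alpha)^2}(N^{1-\alpha}-1)=\alpha\,g(N)$ with $g(N):=N^{-\alpha}\log N-\tfrac{N^{1-\alpha}-1}{(1-\alpha)^2}$; since $g(1)=0$ and $g'(N)=N^{-\alpha-1}\bigl[(1-\alpha\log N)-\tfrac{N}{1-\alpha}\bigr]<0$ for $N\ge1$, we get $g(N)\le 0$. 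If $N>e^{1/\alpha}$, then $\sup_{[1,N]}\phi=e^{-1}$, while $\tfrac{\alpha}{(1-\alpha)^2}(N^{1-\alpha}-1)>h(\alpha):=\tfrac{\alpha}{(1-\alpha)^2}\bigl(e^{(1-\alpha)/\alpha}-1\bigr)$, and writing $\beta=(1-\alpha)/\alpha$ and using $e^{\beta}-1\ge\beta+\beta^{2}/2$ yields $h(\alpha)=\tfrac{(1+\beta)(e^\beta-1)}{\beta^{2}}\ge\tfrac32+\tfrac1\beta+\tfrac\beta2\ge\tfrac32+\sqrt2>e^{-1}$. In both cases $\sum_{u=1}^{N}\phi(u)\le\frac{\alpha}{1-\alpha}N^{1-\alpha}\log N$, which is stronger than the claim; the extra $\frac{\alpha}{2^\alpha}\log 6\ge 0$ is free, and for $\alpha=\tfrac12$ one has $\frac{\alpha}{1-\alpha}=1$ and $\frac{\alpha}{2^\alpha}\log 6=\frac{1}{2\sqrt2}\log 6$.

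The main obstacle is exactly the non-monotonicity: obtaining the unimodal bound with a single $\sup\phi$ correction (not $2\sup\phi$) needs the averaging estimate at the modal unit interval, and then one must observe that this residual $e^{-1}$ is always absorbed --- immediately by the stated constant when $\alpha$ is bounded away from $0$, but for small $\alpha$ only through the negative term of the integration by parts, which is what forces the split on whether $N\le e^{1/\alpha}$.
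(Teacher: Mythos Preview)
Your argument is correct and in fact cleaner than the paper's. The paper's proof peels off $\phi(1)+\phi(2)+\phi(3)$ and then compares $\sum_{u=4}^N\phi(u)$ to $\int_3^N\phi$ by claiming $\phi$ is decreasing on $[3,\infty)$; this is justified there by ``$\sup_{\alpha>1}e^{1/\alpha}=e<3$'', which is a slip: for $\alpha\in(0,1)$ the mode $e^{1/\alpha}$ exceeds $e$ and is unbounded as $\alpha\downarrow 0$, so the monotone comparison from $3$ onward is only valid for $\alpha$ close to~$1$. Your route sidesteps this entirely by proving a general unimodal sum--integral inequality with a single $\sup\phi$ correction, and then absorbing that correction into the negative term $-\frac{\alpha}{(1-\alpha)^2}(N^{1-\alpha}-1)$ coming from integration by parts. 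The case split on $N\lessgtr e^{1/\alpha}$ and the AM--GM estimate on $h(\alpha)$ are both correct, and the conclusion $\sum_{u=1}^N\phi(u)\le\frac{\alpha}{1-\alpha}N^{1-\alpha}\log N$ is strictly stronger than the stated lemma. What you gain is a proof that is actually uniform in $\alpha\in(0,1)$; what the paper's approach would buy, were its monotonicity claim valid, is a one-line argument with an explicit small constant --- but as written it only covers $\alpha\ge 1/\log 3$.
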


\begin{proof}
    Notice that $\phi$ is differentiable, with $\phi'(u)=\eta u^{-(1+\eta)}(1-\eta\log(u))$, so that it is decreasing on $(e^{1/\eta},+\infty)$. Since $\sup_{\eta>1}e^{1/\eta}=e<3$, comparison between the sum and the integral of $\phi$ yields
    \[
        \sum_{u=1}^N\phi(u) \le \phi(1)+\phi(2)+\phi(3)+\int_3^N \phi(u)\de u\,.
    \]
    The remaining integral can be computed by parts, for $(a,b)\in\Rb_+^2$, $a<b$, 
    \begin{align}
        \int_a^b\phi(u)\de u &= \frac{\eta}{1-\eta}\left({\left[u^{1-\eta}\log(u)\right]}_a^b - \int_a^b u^{-{\eta}}\de u\right)\\
        &=\frac{\eta}{1-\eta}\left({\left[u^{1-\eta}\left(\log(u) -\frac{1}{\eta-1}\right)\right]}_a^b\right)\\
        &\le \frac{\eta}{1-\eta}b^{1-\eta}\log(b)
    \end{align}
    for every $\eta\in(0,1)$. Computing yields $\phi(1)=0$, $\phi(2)=\eta 2^{-\eta}\log(2)$, and $\phi(3)= \eta 3^{-\eta}\log(3)$, so that $\phi(1)+\phi(2)+\phi(3)\le \eta 2^{-\eta}\log(6)$.  Combining the results yields the desired inequality.
\end{proof}

\section{Discussion of some open problems}\label{app: open problems}

We now provide some complements to the discussion of open problems in \cref{sec: conclusion} which were omitted from the main text for brevity. 

\subsection{Unknown marginals}\label{subsec:unknown marginals}

Two possible directions appear to resolve this issue: one at the level of bandit design, and one at the level of numerical optimal transport. The former revolves around the idea of incorporating action-set violations to regret analysis, the latter around the idea of modifying Sinkhorn's algorithm to produce valid primal iterates at each step, e.g.\ by projecting onto $\Pi(\mu,\nu)$.

The question of violating action sets has been posed before in Bandit Theory and has also arisen in practical use-cases in Reinforcement Learning, see \citep{seurin_im_2020}. It is a staple topic in the context of fairness, see e.g.\ \citep{joseph_fairness_2016} and of contextual bandits (including linear stochastic bandits) in which various other types constraint have also been considered, see e.g. \citep{liu_efficient_2024}. These types of constraints typically, in effect, disable certain arms at certain times, a generic setting which has been considered as well, e.g.\ by~\cite{kleinberg_regret_2010,abensur_productization_2019}. 

These works adopt a range of strategies to formulate the problem in a meaningful way, but their perspectives don't really fit with the real challenge we have with the OT problem. The problem isn't so much that the constraints placed on the action set are complicated: $\Pi(\mu,\nu)$ is a convex, compact set defined by linear inequalities. The problem arises entirely from the facts that $\Pi(\mu,\nu)$ is infinite-dimensional, and that it is a subspace of $\Ps(\state)$, whose geometry is far from straightforward.

A preliminary exploration of this topic would likely require a taxonomy of the different possible violations of $\Pi(\mu,\nu)$. Indeed, $\pi_t$ could violate one or both marginal constraints, or it could even fail to be a probability measure through the total mass or positivity conditions. It appears likely that these will have quite different impacts both on the problem's geometry and on practical usefulness. Thereafter, one might consider whether guaranteeing finitely many violations, as~\cite{liu_efficient_2024} do, or developing a penalised regret is more appropriate.

Alternatively, one could design an algorithm which optimises the entropic or Kantorovich problems  while staying within the constraint set $\Pi(\mu,\nu)$ (either for all time, or once it reaches a desired precision). On the one hand, there are finite-dimensional intuitions for this to work as Sinkhorn's algorithm can be viewed as a form of gradient descent \citep{leger_gradient_2021}, which could be projected onto $\Pi(\mu,\nu)$ (which is convex and compact). On the other hand, the geometry of $\Pi(\mu,\nu)$ as an infinite-dimensional probability space is likely to make rigourously doing so (and deriving convergence rates) quite arduous work. Nevertheless, this presents an inherent interest from the point of view of optimisation theory.

\subsection{Extensions to the Monge problem}\label{subsec: Monge pb}

The \emph{Monge} optimal transport problem associated to $(\mu,\nu,c)$ is
\begin{align}
    \monge(\mu,\nu,c):= \inf_{T\in\Ts} \int c(x,T(x))\de\mu(x)\,,
    \label{eq: monge def}
\end{align}
in which $\Ts$ is the set of all $\mu$-measurable maps  $T:\Mc_\mu\to\Mc_\nu$ such that $\mu(T^{-1}(\cdot))=\nu$. Chronologically, this is in fact the original formulation of the OT problem \citep{monge1781memoire}. 

The Monge problem is best approached through finite-dimensional practical applications such as \emph{matchings} of students to universities, employees to employers, etc. The requirement that the map $T$ be a function imposes an \emph{indivisibility} of the mass $T$ moves from $\mu$ to $\nu$ (i.e.\ one university per student). This makes the resolution of the problem much more difficult. For example, if $\mu$ and $\nu$ each have two atoms with weights $(1/2,1/2)$ and $(1/3,2/3)$ respectively, then $\Ts=\emptyset$, meaning $\monge(\mu,\nu,\cdot)\equiv +\infty$, and the problem is never solvable. 

If $\mu,\nu$ are non-atomic, $\monge(\mu,\nu,c)$ can be interpreted as the cheapest way (w.r.t. $c$) to transport a $\mu$-shaped pile of infinitesimally small things into a $\nu$-shaped one, but its geometry remains complicated. 
The Kantorovich relaxation drastically simplified the geometry of the problem and remains one of the most effective tools to approach the Monge problem, which is why it is accepted as the standard in modern OT theory.

Note that the relaxation from $\monge(\mu,\nu,c)$ to $\kant(\mu,\nu,c)$ is known to be exact in some cases, such as $c=\norm{\cdot-\cdot}^2/2$ with $\Mc_\mu=\Mc_\nu=\Rb^d$, $(\mu,\nu)$ having second-order moments and $\mu$ being absolutely continuous w.r.t.\ the Lebesgue measure~\cite[Thm.~5.2]{ambrosio_lectures_2021}. See also \citet[Thm.~5.30]{villani_optimal_2009} for weaker conditions. 
But it is also known (e.g.\ via the above example) that this  relaxation is not without loss.

If we want to learn a Monge problem, we must, of course, make sufficient assumptions for it to be solvable, but more importantly we must face the issue that~\eqref{eq: monge def} is now a non-linear functional and that $\Ts$ is not as docile a set as $\Pi(\mu,\nu)$. Here, the recent work in statistical optimal transport on learning Monge maps (i.e.\ the solutions to~\eqref{eq: monge def}) is highly relevant, see e.g.~\citet[Ch.~3]{chewi_statistical_2024} or the appropriate paragraph in \cref{app:biblio} below. Though once again most work focuses on the batch sampling of marginals, not on online learning. This line of work would appear to also require more general results about the learning of minima of non-linear functionals, which are not yet available in the literature. Overall, it remains unclear if the Monge problem is on a similar or different level of difficulty to the Kantorovich problem and this provides an interesting avenue to verify the necessity of linearity for low regret learning.

Beyond these statistical issues, one should also expect the problems of effective optimisation from \cref{subsec:unknown marginals} to return with a vengeance as the Monge problem is a fully non-linear problem unlike the Kantorovich problem which is an (infinite-dimensional) linear program.

\section{Bibliographical complements}\label{app:biblio}

    An excellent detailed history of the development of OT as a mathematical theory, replete with bibliographical notes, can be found in~\citet[Ch.~3]{villani_topics_2003}. Summarising this field's venerable history further would be of little value. Instead, we will expand on relevant research specifically about \textit{learning} optimal transport problems, which is generally grouped under the term ``statistical optimal transport''. We touch on key aspects of the literature below, and refer to the monograph of~\citet{chewi_statistical_2024}, for a deeper longitudinal overview.

        \paragraph{Estimation of Wasserstein distances}
            One of the most important contributions of optimal transport is a family of useful distances between probability measures: the Wasserstein metrics. The study of these distances has allowed major progress on the geometry of spaces of probability measures, and has been used in many applications. It is therefore natural that the estimation of these distances has been a major topic of interest in the learning of optimal transport. 

            The key question here is the convergence in Wasserstein distance of an empirical distribution to the true distribution. Pioneering work on this topic began in the 80s and 90s \citep[see, e.g.,][]{ajtai_optimal_1984,talagrand_transportation_1994}, with the study of \emph{Matching} (i.e.\ discrete optimal transport). Key statistical analysis of this problem includes finite sample bounds, see \citet{horowitz_mean_1994} and more recently \citet{fournier_rate_2015,weed_sharp_2019} among others, as well as distributional limits
            \citep[see][and references therein]{tameling_empirical_2019}. 

            Sadly, most work has remained limited to Wasserstein distances rather than generic cost functions, owing to a reliance on the pleasant geometric properties that they enjoy. This can be viewed as a special case of estimating the value of $\kant(\mu,\nu,c^*)$ when $(\mu,\nu)$ are unknown but $c^*$ is known and highly regular (a power of a distance).

        \paragraph{Estimation of Entropic OT}
            Motivated by the success of Entropic OT in designing numerical solution to OT problems, see \citet{cuturi_sinkhorn_2013}, work on the Entropic problem has focused on estimating $\ent(\mu,\nu,c,\ve)$ using $\ent(\hat\mu_n,\hat\nu_n,c,\ve)$, for empirical measures $(\hat\mu_n,\hat\nu_n)$. This has often gone together with estimation for the Schrödinger potentials $(\varphi,\psi)$ of~\eqref{eq: entropic dual}.

            While this is very much the same type of study as for the Kantorovich problem in Wasserstein metrics, it should be noted that the entropic problem exhibits qualitatively different behaviour. While learning the Kantorovich problem exhibits a curse of dimensionality, the entropic problem exhibits parametric-rate (dimension-free) convergence, as shown by~\cite{genevay_sample_2019,rigollet_sample_2022}. This was tempered by large dependencies in other problem quantities, which were reduced over time \citep{stromme_minimum_2024} and complemented by distributional limits, see e.g.\ \citep{gonzalez-sanz_weak_2024}.

        \paragraph{Estimation of Monge maps}
            While the estimation of Wasserstein distances is mostly motivated by statistical applications, the estimation of Monge maps is motived by effectively solving transport problems in an applied context. Here, one sees samples from two marginals $\mu$ and $\nu$, and attempts to estimate $T^*$ the minimiser of~\eqref{eq: monge def}. 
            
            There has been a significant amount of machine learning and statistics literature on this topic, following on from \citep{hutter_minimax_2021,gunsilius_convergence_2022}. Various types of estimators have been constructed, either derived from optimal transport theory \citep{hutter_minimax_2021}, or from plug-in estimates using classical machine learning methods such as $k$-NN \citep{manole_plugin_2024,deb_rates_2021}. 

        % \paragraph{Estimation of Dual potentials}
        %     (related via resolution of the Kantorovich problem, but offline and full information)
        
        \paragraph{Optimal transport applied to learning}

        While these bibliographical notes concern learning in optimal transport, let us conclude by underlining that the machine learning community has used optimal transport to impressive success in applications. One could highlight in particular Wassertein GANs \citep{arjovsky_wasserstein_2017} and subsequent works, e.g.\ by \citet{salimans_improving_2018}, as well as the field of domain adaptation \citep{courty_joint_2017,torres_survey_2021}.

        \paragraph{Online Transport and Matching} When the transport problem itself must be solved online, i.e. $\mu$ and $c^*$ are known but $\nu$ is revealed sequentially and must be allocated immediately, the problem is known as \emph{online transport} or \emph{online matching} if $(\mu,\nu)$ are atomic. Applications of this can be found, e.g.\ in \citep{perrot_mapping_2016,alon_learning_2004,johari_matching_2021,min_learn_2022,hatfield_matching_2005,glorie_allocation_2014}. This problem is different from BOT both from a decision standpoint (allocated mass cannot be reallocated in the next round), and from an information standpoint (it is a combinatorial semi-bandit, see e.g.\ \citet{jagadeesan_learning_2021,sentenac_pure_2021}).

%In online OT, also known as \emph{online matching}\footnote{Matching is a commonly used name for optimal transport between discrete marginals.}, the mass of $\mu$ is known but the mass of $\nu$ and the cost are revealed sequentially, and must be allocated immediately \citep[see, e.g., ][for some applications]{perrot_mapping_2016,alon_learning_2004,johari_matching_2021,min_learn_2022}. This problem is different from BOT both from a decision standpoint (allocated mass cannot be reallocated in the next round), and from an information standpoint (it is a combinatorial semi-bandit, see e.g.\ \citet{jagadeesan_learning_2021,sentenac_pure_2021}), though it has many applications in economics and operations research \citep[see, e.g.,][]{hatfield_matching_2005,glorie_allocation_2014}. 

%%%%%%%%%%%%%%%%%%%%%%%%%%%%%%%%%%%%%%%%%%%%%%%%%%%%%%%%%%%%%%%%%%%%%%%%%%%%%%%
%%%%%%%%%%%%%%%%%%%%%%%%%%%%%%%%%%%%%%%%%%%%%%%%%%%%%%%%%%%%%%%%%%%%%%%%%%%%%%%
\end{document}